\RequirePackage{fix-cm}
\documentclass[notitlepage]{article}

\usepackage{arxiv}
\usepackage[T1]{fontenc}
\usepackage[utf8]{inputenc}
\usepackage{microtype}
\usepackage{graphicx}
\usepackage{booktabs}
\usepackage{caption}
\usepackage{tabularx}
\usepackage{array}
\usepackage{enumitem}
\usepackage{tikz}
\usepackage{algorithm}
\usepackage{algorithmic}
\usepackage[numbers,square,sort&compress]{natbib}

\usepackage{amsmath}
\usepackage{amssymb}
\usepackage{mathtools}
\usepackage{amsthm}
\usepackage{xcolor}
\usepackage{needspace}

\AtBeginDocument{\normalsize\raggedbottom}

\theoremstyle{plain}
\newtheorem{theorem}{Theorem}[section]
\newtheorem{proposition}[theorem]{Proposition}
\newtheorem{lemma}[theorem]{Lemma}
\newtheorem{corollary}[theorem]{Corollary}
\newcommand{\restatedresultname}{}
\newtheorem*{restatedresult}{\restatedresultname}
\newenvironment{restatement}[3]{%
  \renewcommand{\restatedresultname}{#1~\ref{#2}}%
  \begin{restatedresult}[#3]%
}{\end{restatedresult}}

\DeclareMathOperator{\Var}{Var}
\newcommand{\E}{\mathbb E}
\newcommand{\Pp}{\mathbb P}
\newcommand{\BReg}{\operatorname{BReg}}
\newcommand{\cF}{\mathcal F}
\newcommand{\cC}{\mathcal C}

\usepackage{hyperref}
\usepackage{bookmark}

\hypersetup{
  colorlinks=true,
  citecolor=blue,
  linkcolor=blue,
  urlcolor=blue,
  pdftitle={Minimax-Optimality of Posterior Sampling for Reinforcement Learning},
  pdfauthor={Taewon Goo and Kihyuk Hong},
  pdfsubject={Posterior sampling for reinforcement learning},
  pdfkeywords={reinforcement learning, posterior sampling, Bayesian regret, minimax regret}
}

\let\appendixaddtocontents\addtocontents
\def\psrlcontentsfile{toc}
\renewcommand{\addtocontents}[2]{%
  \def\requestedcontentsfile{#1}%
  \ifx\requestedcontentsfile\psrlcontentsfile
  \else
    \appendixaddtocontents{#1}{#2}%
  \fi
}

\title{Minimax-Optimality of Posterior Sampling\\for Reinforcement Learning}
\author{Taewon Goo\\
  \normalfont KAIST\\
  \normalfont\texttt{tetrise9@kaist.ac.kr}
  \And
  Kihyuk Hong\\
  \normalfont KAIST\\
  \normalfont\texttt{kihyukh@kaist.ac.kr}}
\date{}

\begin{document}
\maketitle
\suppressfloats[t]

\begin{abstract}
Posterior sampling for reinforcement learning (PSRL) is one of the simplest and most effective exploration methods, but a basic question has remained open: does unmodified PSRL achieve minimax regret without structural assumptions on the prior? We answer yes. Exact vanilla PSRL is minimax optimal in leading-order Bayesian regret under arbitrary correlated priors. The difficulty is that a posterior-sampled transition model is coupled with its own continuation value. We overcome this with a common empirical transition reference that isolates the resulting value mismatch and a Bellman-based variance argument that controls it without an extra leading-order state-space factor. For finite-horizon, time-inhomogeneous tabular MDPs with unknown stochastic rewards, this yields the minimax \(\widetilde O(\sqrt{SAH^3K})\) regret rate under arbitrary joint priors over rewards and transitions. The same proof principle gives the minimax \(\widetilde O(d\sqrt{H^3K})\) rate for linear-mixture MDPs under arbitrary joint parameter priors.

\end{abstract}

\section{Introduction}
\label{sec:introduction}

\begin{table}[t]
\centering
\caption{Selected Bayesian regret guarantees for exact vanilla PSRL over $K$
episodes of horizon $H$, with $S$ states, $A$ actions, parameter dimension $d$,
and logarithmic factors suppressed. Comparisons use
the corresponding minimax rates: $\widetilde\Theta(\sqrt{SAH^2K})$ for
homogeneous tabular MDPs, $\widetilde\Theta(\sqrt{SAH^3K})$ for
time-inhomogeneous tabular MDPs, and
$\widetilde\Theta(d\sqrt{H^3K})$ for linear-mixture MDPs
\citep{azar2017,domingues2021,zhang2024,zhou2021}. For
\citep{moradipari2023}, $\lambda\le H+1$ is a prior-dependent value diameter
and $T_0$ a prior-dependent burn-in; their result also requires sufficiently
large $K$. Linear-mixture rows use the standard bounded-feature and
bounded-parameter assumptions.}
\label{tab:comparison}
\fontsize{8}{9.2}\selectfont
\setlength{\tabcolsep}{3pt}
\renewcommand{\arraystretch}{1.10}
\renewcommand{\tabularxcolumn}[1]{m{#1}}
\begin{tabularx}{\textwidth}{@{}
>{\raggedright\arraybackslash}m{1.75cm}
>{\centering\arraybackslash}m{1.30cm}
>{\raggedright\arraybackslash}m{2.70cm}
>{\raggedright\arraybackslash}m{2.40cm}
>{\centering\arraybackslash}m{3.80cm}
@{\hspace{3pt}}
>{\centering\arraybackslash}X@{}}
\toprule
Work & Time & Prior & Rewards & Bayesian regret & Comparison \\
\midrule
\multicolumn{6}{@{}l}{\emph{Tabular MDPs}}\\
\citet{osband2013}
& Homog. & Arbitrary & Unknown stochastic
& $\widetilde O(S\sqrt{AH^3K})$ & gap $\sqrt{SH}$ \\
\addlinespace[3pt]
\citet{osband2017}
& Homog. & Independent Dirichlet rows & Independent reward prior; sub-Gaussian noise
& $\widetilde O(\sqrt{SAH^3K})$ & gap $\sqrt H$ \\
\addlinespace[3pt]
\citet{moradipari2023}
& Inhomog. & Independent stages/components; consistency and occupancy assumptions
& Unknown; independent of transitions
& $\widetilde O(\lambda H\sqrt{SAK})+T_0$
& \shortstack{worst-case gap $\sqrt H$\\$T_0$ prior-dependent} \\
\addlinespace[3pt]
\textbf{This work}
& \textbf{Inhomog.} & \textbf{Arbitrary joint}
& \textbf{Unknown stochastic}
& $\boldsymbol{\widetilde O(\sqrt{SAH^3K})}$
& \textbf{minimax} \\
\midrule
\multicolumn{6}{@{}l}{\emph{Linear-mixture MDPs}}\\
\citet{li2024}
& Inhomog. & Independent stage parameters & Known deterministic
& $\widetilde O(d\sqrt{H^4K})$ & gap $\sqrt H$ \\
\addlinespace[3pt]
\textbf{This work}
& \textbf{Inhomog.} & \textbf{Arbitrary joint}
& \textbf{Known deterministic}
& $\boldsymbol{\widetilde O(d\sqrt{H^3K})}$
& \textbf{minimax} \\
\bottomrule
\end{tabularx}
\end{table}

Posterior sampling for reinforcement learning (PSRL) extends Thompson's
posterior-sampling principle \citep{thompson1933} to sequential decision
problems. Following the model-sampling approach of \citep{strens2000}, episodic
PSRL draws one MDP from the posterior at the start of each episode and follows
an optimal policy for that sample. This principle has proved versatile, with
extensions to continuous control \citep{fanming2021,flynn2026},
function approximation \citep{dann2021,li2024}, constrained reinforcement learning
\citep{agarwalcmdp2022}, Gaussian-process models \citep{bayrooti2025},
graph-structured reinforcement learning \citep{robert2025}, and preference-based
learning \citep{agnihotri2026}, among many other settings.

Despite this breadth, the theoretical understanding of vanilla PSRL remains
incomplete. Its Bayesian regret has been studied through frequentist-style
concentration arguments \citep{osband2013,osband2017} and information-theoretic
approaches \citep{lu2019,hao2022,moradipari2023}. The algorithm itself does not
require the prior to factor across states, actions, or stages, yet the sharpest
previous regret guarantees rely on such structure or additional posterior
regularity \citep{osband2017,moradipari2023,li2024}. Meanwhile, minimax-optimal
worst-case regret is already achievable by carefully designed frequentist
algorithms \citep{azar2017}, and minimax guarantees in episodic tabular MDPs
have also been established for modified posterior-based algorithms
\citep{tiapkin2022opsrl,tiapkin2022bayesucb}.

Even with minimax-optimal frequentist methods available, posterior sampling
remains an effective and actively developed approach to exploration, making the
statistical optimality of its canonical form a fundamental question. Indeed,
the minimax optimality of exact PSRL was explicitly left as a conjecture by
\citep{osband2017} and \citep{moradipari2023}, even under prior structures more
restrictive than those considered here. These developments leave a basic
question unresolved:

\Needspace{6\baselineskip}
\noindent\textbf{Open question.} \emph{Does vanilla PSRL achieve
minimax-optimal Bayesian regret under an arbitrary joint prior?}

We answer this question affirmatively, showing that the structural assumptions
used by sharper previous analyses are not intrinsic to vanilla PSRL: exact
posterior sampling attains the minimax leading regret rate even when the prior
jointly correlates all reward and transition components.

The main technical obstacle is model--value dependence: a posterior-sampled
transition model is coupled with its own continuation value, so sharp
fixed-direction concentration is not directly available, while value-uniform
control incurs an additional state-space factor.

\noindent\textbf{Technical novelty.}
The proof uses a common empirical reference: the empirical transition law
inserted between the sampled and true models. This decomposes the
transition error into a sampled-model term and a true-model term, each evaluated
in its own value direction, plus one term involving the difference of the two
value functions. A variance argument based on the Bellman recursion controls
this remaining mismatch through the same local errors already in the regret,
so the $\sqrt S$ cost of uniform control does not spread to the leading term.
The linear-mixture proof adapts the same variance idea to posterior-sampled
value functions.

Our contributions are:
\begin{itemize}[leftmargin=*]
    \item \emph{Minimax-optimal tabular PSRL.}
    Exact PSRL attains the minimax $K$-dependent regret order
    $\widetilde O(\sqrt{SAH^3K})$ for time-inhomogeneous tabular MDPs
    under arbitrary joint priors over rewards and transitions.
    \item \emph{Minimax-optimal linear-mixture PSRL.}
    We further prove the minimax $\widetilde O(d\sqrt{H^3K})$ regret rate
    for linear-mixture MDPs under arbitrary joint parameter priors.
    \item \emph{A new regret analysis.} We give a proof architecture that
    isolates the dependence between a sampled model and its value using a common
    empirical reference and a Bellman-based variance argument.
\end{itemize}

\subsection{Related Work}
\label{subsec:related-work}

\paragraph{\textbf{Vanilla PSRL analyses.}}
Earlier arbitrary-prior PSRL analysis remained polynomially suboptimal
\citep{osband2013}, while sharper results relied on independent Dirichlet or
component-wise priors and additional posterior regularity
\citep{osband2017,lu2019,moradipari2023}. In stochastic shortest-path models,
\citep{jafarnia2023} provide a close precedent for transferring fixed-model
confidence bounds to posterior samples, but their coordinate-wise analysis
retains a $\sqrt S$ minimax gap. \citep{moradipari2023} identify and correct
a proof error in the surrogate construction of \citep{hao2022}; we therefore
exclude the latter's claimed PSRL rate from Table~\ref{tab:comparison}.
Structured dependence through finite latent
mixtures has also been studied \citep{hong2022}, but does not cover an
arbitrary joint prior over the full MDP. Table~\ref{tab:comparison} summarizes
the closest established guarantees. A technical comparison of why these
existing proof routes do not close the arbitrary-prior minimax gap is given in
Section~\ref{subsec:proof-obstacles}.

\paragraph{\textbf{PSRL variants.}}
Several lines of work have modified PSRL to obtain provable regret guarantees,
using optimism \citep{agrawaljia2017,tiapkin2022opsrl}, posterior quantiles
\citep{tiapkin2022bayesucb}, sampled value predictors \citep{dann2021}, or
value-biased sampling \citep{zhang2022feelgood}. The tabular methods of
\citep{tiapkin2022opsrl,tiapkin2022bayesucb} attain minimax regret, but these
guarantees do not apply to vanilla PSRL, which uses a single unmodified posterior draw. An
analogous gap remained for PSRL in linear-mixture MDPs
\citep{osband2014,li2024}.

\section{Problem Setting and Vanilla PSRL}
\label{sec:problem-setting}

\subsection{Finite-Horizon Tabular MDPs}
\label{subsec:tabular-mdps}

Let $\mathcal S$ and $\mathcal A$ be finite sets with $|\mathcal S|=S$ and
$|\mathcal A|=A$. We consider a finite-horizon MDP
\(M=(\mathcal S,\mathcal A,H,s_1,\{J_h^M\}_{h=1}^H)\), where
$J_h^M(\cdot\mid s,a)$ is a distribution on $[0,1]\times\mathcal S$.
Conditional on $M$, each visit at stage $h$ produces a fresh joint draw
\[
(R_{k,h},S_{k,h+1})
\sim J_h^M(\cdot\mid S_{k,h},A_{k,h}).
\]
Every episode lasts for $H$ stages. The reward and transition rules may depend
on $h$, so the model is time-inhomogeneous; time-homogeneous MDPs are a special
case.

For clarity, the main text fixes a known initial state $s_1$.
Appendix~\ref{app:initial-state} shows that the same guarantees hold for any
model-independent initial-state distribution, known or unknown.

For every $(h,s,a)$, let $\nu_h^M(\cdot\mid s,a)$ and
$P_h^M(\cdot\mid s,a)$ be the reward and next-state distributions obtained
from $J_h^M$, and write
$r_h^M(s,a):=\mathbb E_M[R_h\mid S_h=s,A_h=a]$. For
$v\in\mathbb R^S$, use the shorthand
$P_h^Mv(s,a):=\sum_{s'}P_h^M(s'\mid s,a)v(s')$.
If reward and next state are conditionally independent within a visit, then
$J_h^M=\nu_h^M\otimes P_h^M$, giving the conventional tuple representation
$M=(\mathcal S,\mathcal A,H,s_1,\nu^M,P^M)$. Otherwise, the same two
marginals determine values and planning, while $J_h^M$ also describes the
joint observation used for posterior updating. Thus within-visit dependence
between reward and next state is allowed throughout.

For a deterministic stage-dependent Markov policy
$\pi=(\pi_1,\ldots,\pi_H)$, define
\nopagebreak[4]
\[
V_{h,M}^{\pi}(s):=\mathbb E_{M,\pi}\!\left[
\sum_{t=h}^H R_t\,\middle|\,S_h=s\right].
\]
Let $V_{h,M}^{\star}(s):=\max_{\pi}V_{h,M}^{\pi}(s)$, with
$V_{H+1,M}^{\pi}=V_{H+1,M}^{\star}=0$. Because rewards lie in $[0,1]$, all
value functions lie in $[0,H]$. We fix a deterministic tie-breaking rule and
write $\pi_M^\star$ for the resulting optimal policy in model $M$.

\subsection{Arbitrary Correlated Priors}
\label{subsec:arbitrary-priors}

Let $\rho$ be an arbitrary prior over the full collection
$\{J_h^M(\cdot\mid s,a)\}_{h,s,a}$, and draw $M^\star\sim\rho$. We allow
arbitrary dependence across stages, state--action rows, rewards, and
transitions, so observing one component may update beliefs about any other.
Conditional on a fixed model $M$, however, successive visits draw fresh
observations from its fixed row laws. We impose no conjugacy, factorization,
smoothness, posterior consistency, or guaranteed prior mass near the true
model; the measure-theoretic conventions are given in
Appendix~\ref{app:tabular-proof}.

\subsection{Vanilla Posterior Sampling}
\label{subsec:vanilla-psrl}

Let $\mathcal F_k$ contain the complete history before episode $k$, including
all previous observations and algorithmic randomness, and write
$\mathbb E_k[\cdot]:=\mathbb E[\cdot\mid\mathcal F_k]$.
Vanilla PSRL is summarized in Algorithm~\ref{alg:vanilla-psrl}. Each posterior
draw uses fresh algorithmic randomness, so conditional on $\mathcal F_k$,
$M_k$ is an independent draw from the posterior distribution of $M^\star$.
We study statistical regret assuming exact posterior sampling and exact
planning for the sampled finite MDP.

\begin{algorithm}[H]
\caption{PSRL}
\label{alg:vanilla-psrl}
\begin{algorithmic}[1]
\STATE \textbf{Input:} prior $\rho$
\FOR{episode $k=1,2,\ldots$}
    \STATE Sample $M_k\sim\mathbb P(M^\star\in\cdot\mid\mathcal F_k)$
    \STATE Compute an optimal policy $\pi_k=\pi_{M_k}^\star$
    \STATE Execute $\pi_k$ for one episode and update $\mathcal F_{k+1}$
\ENDFOR
\end{algorithmic}
\end{algorithm}

\Needspace{4\baselineskip}
Given a posterior draw, PSRL solves one known finite-horizon MDP per episode
\citep{osband2017}. Exact posterior inference may still be difficult for a
general prior.

\subsection{Bayesian Regret}
\label{subsec:bayesian-regret}

Define the regret in episode $k$ and the Bayesian regret over $K$ episodes by
\[
\begin{aligned}
\Delta_k&:=V_{1,M^\star}^{\star}(s_1)-V_{1,M^\star}^{\pi_k}(s_1),\\
\operatorname{BReg}_{\rho}(K)&:=\mathbb E\!\left[\sum_{k=1}^K\Delta_k\right].
\end{aligned}
\]
The expectation is over the true-model draw, posterior samples, and observed
trajectories. Conditional on $\mathcal F_k$, $M^\star$ and $M_k$ have the same
posterior distribution. Hence, for every bounded function $g$ of one model,
\[
\mathbb E_k[g(M^\star)]=\mathbb E_k[g(M_k)].
\]
This does not make the two models equal or make their components independent.
Section~\ref{subsec:posterior-cancellation} applies the identity to the optimal
value.

\section{Main Result and Proof Overview}
\label{sec:main-results}

We state the minimax-optimal tabular guarantee and then summarize why existing
PSRL analyses do not prove it and how our argument closes the gap.

\subsection{Minimax-Optimal Tabular Regret}
\label{subsec:tabular-main-result}

For the bounds below, let $L_K:=2+\log K$ and
$\beta_K:=\log(c_0S^2AHK^2)$, where $c_0$ is a sufficiently large universal
constant. Both are logarithmic in the problem parameters. Throughout the
tabular analysis, $C$ denotes a universal positive constant whose value may
change between displays.

\begin{theorem}[Minimax tabular regret under an arbitrary prior]
\label{thm:arbitrary-correlated-prior}
For every prior $\rho$ and every $K\ge 1$, exact vanilla PSRL satisfies
\[
\operatorname{BReg}_{\rho}(K)
\le
C\sqrt{SAH^3K\beta_KL_K}
+CS^2AH^2\beta_KL_K.
\]
\end{theorem}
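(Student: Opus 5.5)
We follow the standard posterior-sampling reduction and then control the model error through the empirical transition law as a common reference. Since $\pi_k$ is optimal for $M_k$, the identity $\E_k[V^\star_{1,M^\star}(s_1)]=\E_k[V^\star_{1,M_k}(s_1)]$ of Section~\ref{subsec:bayesian-regret} gives
\[
\E_k[\Delta_k]=\E_k\bigl[V^{\pi_k}_{1,M_k}(s_1)-V^{\pi_k}_{1,M^\star}(s_1)\bigr].
\]
The simulation lemma along the true trajectory then writes this gap as $\E_k\sum_h[(r_h^{M_k}-r_h^{M^\star})+(P_h^{M_k}-P_h^{M^\star})V^{k}_{h+1}](S_{k,h},A_{k,h})$, where $V^k_{h+1}:=V^{\pi_k}_{h+1,M_k}$. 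Let $n=n_{k,h}(s,a)$ be the visit count and $\hat P_h$, $\hat r_h$ the empirical estimates. On unvisited or rarely visited pairs we bound the term by $H$. Summing $H\cdot\mathbf 1\{n\le S\beta_K\}$-type contributions yields the lower-order $S^2AH^2\beta_K L_K$ term after pigeonholing.

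For the reward part, a fixed-direction Bernstein/Hoeffding bound holds for $M^\star$ at level $\sqrt{\beta_K/n}$. Because $\hat r$ is $\cF_k$-measurable and $M_k\overset{d}{=}M^\star$ given $\cF_k$, the same high-probability event, being a function of one model, transfers to $M_k$. Summing via pigeonhole gives $\sqrt{SAH^2K\beta_KL_K}$.

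For transitions we insert $\hat P$ and split into three pieces:
\[
(P^{M_k}-\hat P)V^k_{h+1}+(\hat P-P^{M^\star})V^\star_{h+1,M^\star}+(\hat P-P^{M^\star})(V^k_{h+1}-V^\star_{h+1,M^\star}).
\]
The second piece is a fixed direction for the true model, so Bernstein gives $\sqrt{\Var_{P^{M^\star}}(V^\star_{M^\star})\beta_K/n}+H\beta_K/n$. The first piece is a function of the single model $M_k$ (its own value is determined by $M_k$ and the deterministic tie-breaking rule), so it is controlled by transferring the same event from $M^\star$ to $M_k$ via the posterior identity. The third piece involves two models, so we use a value-uniform (row $L_1$ / per-coordinate Bernstein) bound: $|(\hat P-P)g|\le\sum_{s'}\sqrt{P(s')\beta_K/n}\,|g(s')|+S H\beta_K/n\le \sqrt{S\beta_K\,P g^2/n}+SH\beta_K/n$. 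Cauchy--Schwarz over the trajectory then gives $\sqrt{SAHK\beta_KL_K\cdot\E\sum_h P(V^k-V^\star)^2}$. The second-order remainders contribute $S^2AH^2\beta_KL_K$.

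\textbf{Main obstacle.} The crux is showing that $\E\sum_{k,h}P_h^{M^\star}(V^k_{h+1}-V^\star_{h+1,M^\star})^2$ is not of order $H^3K$. We aim for a bound of order $H^2\,\E\sum_k\Delta'_k$ plus lower-order terms, where $\Delta'_k$ is the value gap itself. To get this, we would write the Bellman recursion for $D_h:=V^k_h-V^{\pi_k}_{h,M^\star}$ along the trajectory and expand $D_h^2$ telescopically. This shows that $\sum_h P D_{h+1}^2$ is controlled by $H\sum_h|\text{local error}_h|$ plus a martingale term, and the local errors are exactly the ones already in the regret. The difference $V^{\pi_k}_{M^\star}-V^\star_{M^\star}$ is handled in the same way using the variance of regret, bounded by $H\Delta_k$. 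The result is a self-bounding inequality $R\le C\sqrt{SAH^3K\beta L}+C\sqrt{SAHK\beta L\cdot H^2 R}/\sqrt{\cdot}$, arranged so that the cross term reads $\sqrt{S^2AH^3\beta L\cdot R}$. Solving the quadratic then gives the stated bound. In parallel, the variance sums from the first two pieces are bounded by the law of total variance, $\E\sum_h\Var(V_{h+1})\lesssim H^2+H\cdot(\text{gap})$, applied separately to $M_k$ (transferred) and to $M^\star$. This yields $\sqrt{SAH^3K\beta L}$ for those pieces. Verifying the self-bounding step with the correct powers of $S$ is the delicate part.
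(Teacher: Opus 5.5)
Your architecture matches the paper's: the posterior identity, the simulation lemma along the true trajectory, the three-way split around $\widehat P$, coordinate-wise control only for the mismatch, and Bellman telescoping. Your leading $\sqrt{SAH^3K}$ term also survives. The gap is in the mismatch step, which does not give the stated lower-order term. You bound $|(\widehat P-P)g|$ through the \emph{uncentered} second moment $Pg^2$. For that quantity, the claim that telescoping $D_h^2$ controls $\E\sum_h PD_{h+1}^2$ by $H\,\E\sum_h|\text{local error}_h|$ is false. Take $M_k$ and $M^\star$ with identical transitions and $r^{M_k}\equiv1$, $r^{M^\star}\equiv0$. Then $D_h=H-h+1$ deterministically, so $\sum_hPD_{h+1}^2=\sum_{j<H}j^2\approx H^3/3$, while $H\sum_h|\text{local error}_h|=H^2$. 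Only the bound $H^2\,\E\sum_h|\text{local error}_h|$ holds, which is what your cross term $\sqrt{S^2AH^3\beta_KL_K\,R}$ encodes. Young's inequality then leaves an additive $S^2AH^3\beta_KL_K$, a factor $H$ above the theorem's $S^2AH^2\beta_KL_K$. This term is not absorbed by the leading term or by the cap $HK$ for intermediate $K$ (e.g.\ $K\approx S^2AH^2\beta_KL_K$). So ``solving the quadratic gives the stated bound'' is incorrect as written.

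The missing step is the paper's key one. Since $(\widehat P-P)\mathbf 1=0$, apply the coordinate bounds to $g-(Pg)\mathbf 1$, which gives $\sqrt{S\beta_K\Var_P(g)/n}$. Then apply the square telescope to the \emph{variance} of $g_h=V^\star_{h,M_k}-V^\star_{h,M^\star}$ directly. Its one-step residual under $P^{M^\star}$ is $e^r_{k,h}+e^P_{k,h}$ minus a nonnegative Bellman gap whose expected sum is $\Delta_k$. Hence $\E\sum_h\Var_{k,h}(g_{h+1})\le2H[\E\sum_h(|e^r_{k,h}|+|e^P_{k,h}|)+\Delta_k]$, with neither an extra $H$ nor an $H^2$ baseline. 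The cross term becomes $\sqrt{S^2AH^2\beta_KL_K(W_P+W_r+\BReg_\rho(K))}$; note the Cauchy--Schwarz prefactor is $\sqrt{S\beta_K\sum_{k,h}1/(n\vee1)}\le\sqrt{S^2AH\beta_KL_K}$, not $\sqrt{SAHK\beta_KL_K}$. Because this bound is in absolute errors, you must close the self-bound for $W_P$ using $\BReg_\rho(K)\le W_P+W_r+H$. You cannot close it for $R$ through a signed gap $\Delta_k'$, since cancellations among local errors can make that gap zero. Finally, the sample-aligned variance cannot be ``transferred'' from $M^\star$, because $\sum_h\Var_{P^{M_k}}(V^\star_{h+1,M_k})$ along the true trajectory depends on both models. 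Instead, convert it on $\mathcal G_k$ to $\Var_{k,h}(V^\star_{h+1,M_k})$ at cost $SH^2\beta_K/n$, e.g.\ via an empirical-Bernstein test and Lemma~\ref{lem:projection-variance-transfer} with $M=M^\star$. Then telescope with residual $r_h^{M_k}+e^P_{k,h}$ to get $2H^2+2H\,\E\sum_h|e^P_{k,h}|$.
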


The full statement, including the lower-order terms before absorption, is given
in Appendix~\ref{app:tabular-proof}.

For time-inhomogeneous tabular MDPs, the minimax lower bound is
$\Omega(\sqrt{SAH^3K})$ \citep{domingues2021}. The frequentist minimax
lower-bound construction can be transferred to Bayesian regret by viewing its
random hard instance as a prior. Hence the same rate lower-bounds worst-prior
Bayesian regret, although not every prior is hard.
Theorem~\ref{thm:arbitrary-correlated-prior} matches this benchmark in its
$K$-dependent term uniformly over priors, giving the minimax leading order up
to logarithmic factors.

\paragraph{\textbf{All-$K$ comparison.}}
Up to logarithmic factors, the sharp minimax benchmark across all sample sizes
is $\min\{HK,\sqrt{SAH^3K}\}$ \citep{domingues2021,zhang2024}. Theorem~\ref{thm:arbitrary-correlated-prior}
matches the growing $\sqrt{SAH^3K}$ branch uniformly over priors. Its additive
$CS^2AH^2\beta_KL_K$ term comes from the lower-order $1/N$ correction for a row
with $N>0$ visits (Section~\ref{sec:reference-analysis}), with the cumulative calculation in
Section~\ref{subsec:visit-count-aggregation}. Thus the only remaining gap to
the exact all-$K$ curve is in the initial small-$K$ regime, where the $HK$
cap is sharper. Closing that pre-asymptotic gap amounts to sharpening this
lower-order correction; it does not change the minimax leading rate or require
additional prior assumptions.

\subsection{Why Existing Proof Routes Do Not Close the Gap}
\label{subsec:proof-obstacles}

We now make precise the two limitations previewed in the Introduction.
Previous PSRL analyses mainly follow one of two routes. Frequentist-style
arguments replace pathwise optimism by the posterior-sampling identity
\citep{osband2013,osband2017}; information-theoretic arguments use an
information ratio, which compares squared regret with information gain, and
combine it with the complexity of a learning target
\citep{hao2022,moradipari2023}. Neither route directly gives the guarantees
above for one unmodified posterior draw under an arbitrary joint prior.

\paragraph{\textbf{Limits of frequentist-style transfer.}}
\label{subsec:frequentist-transfer-limits}
Frequentist-style PSRL analyses use the posterior-sampling identity to express
regret as a value difference between the sampled and true models. Confidence
sets constructed from past observations cover both models, allowing this
difference to be controlled through local reward and transition errors. The
main challenge is to bound the transition errors sharply enough to obtain
the minimax rate.

Within this framework, \citep{osband2013} use UCRL2-style model confidence
sets \citep{jaksch2010}. Their $\ell_1$ transition radius scales as
$\sqrt{S/N}$; multiplying it by the value range $H$ and summing over visits
gives $\widetilde O(S\sqrt{AH^3K})$. Relative to the homogeneous minimax rate,
uniform transition control introduces a $\sqrt S$ loss, while worst-case
propagation across stages introduces a $\sqrt H$ loss.
\citep{osband2017} sharpen the local control using Gaussian--Dirichlet
concentration, removing the $\sqrt S$ loss and obtaining
$\widetilde O(\sqrt{SAH^3K})$. Their analysis requires independent Dirichlet
transition rows and still handles the $H$ local uncertainties separately,
leaving the homogeneous $\sqrt H$ gap. In stochastic shortest-path models,
\citep{jafarnia2023} use Bernstein confidence sets and posterior calibration
to cover both models. Their coordinate-wise transition bounds retain a
$\sqrt S$ minimax gap, illustrating the limitation of uniform control of the
transition vector.

A sharper frequentist regret bound cannot automatically resolve these
limitations. \citep[Theorem~1]{osband2017} transfer guarantees from OFU
algorithms that optimize over confidence sets of MDPs and whose proofs provide
the corresponding confidence-set error bounds. UCBVI \citep{azar2017} instead
obtains its minimax guarantee through recursive Bellman updates with
empirical-Bernstein bonuses and clipping. Its optimistic values need not be
the optimal values of a single MDP in a confidence set, and its analysis does
not provide the model-error bounds required by that transfer theorem. Thus
UCBVI's minimax guarantee cannot be directly transferred to vanilla PSRL.

\paragraph{\textbf{Limits of information-theoretic analysis.}}
\label{subsec:information-theoretic-limits}
\citep{hao2022} claimed a posterior-sampling guarantee through a
surrogate-environment construction. \citep{moradipari2023} state that this
construction had an incorrect proof and provide a corrected version, so we do
not treat the claimed rate as an established guarantee. \citep{moradipari2023}
give the information-theoretic PSRL analysis most relevant here: they apply the
Russo--Van Roy framework \citep{russo2016,russo2018} to a surrogate learning
target and bound its information ratio and entropy separately. For a target
$Z$ and prior $\rho$, this route gives
$\operatorname{BReg}_{\rho}(K)
\le\sqrt{K\Gamma_{\rho}^{Z}H_{\rho}(Z)}$, where $\Gamma_\rho^Z$ bounds the
information ratio and $H_\rho(Z)$ is the target's entropy under $\rho$.
A prior-uniform conclusion obtained by maximizing the two factors separately
can combine the worst cases of different priors and be polynomially looser
than keeping their product coupled.
Appendix~\ref{app:framework-counterexample} gives an explicit finite-horizon
MDP and two priors for which the separate worst-case envelopes lose a
polynomial factor, even though the same-prior product remains small. This concerns separate worst-case envelopes, not the validity of
\citep{moradipari2023}. Their analysis uses stage-independent priors,
posterior consistency, and a positive-occupancy assumption to bound the
expected information ratio for sufficiently late episodes. Their regret
bound nevertheless includes an additive constant $T_0$ reflecting the onset
of this asymptotic control, which can be large depending on the prior and
problem structure \citep[Theorem~3 and Remark~5]{moradipari2023}.
Their result does not establish a finite-sample, prior-uniform minimax guarantee.

\paragraph{\textbf{Our proof route.}}
\label{subsec:our-proof-route}
\begin{enumerate}
    \item \textbf{Regret decomposition via posterior sampling.}
    The equal conditional distributions of $M^\star$ and $M_k$ reduce regret
    to the value difference of $\pi_k$ between the two models
    (Section~\ref{subsec:posterior-cancellation}).
    \item \textbf{Common empirical reference.}
    Inserting one empirical transition law decomposes each transition error
    into a sample-aligned term, a truth-aligned term, and a single value
    mismatch. This separates the directions that admit scalar control from
    the mismatch requiring uniform control
    (Section~\ref{subsec:empirical-reference}).
    \item \textbf{Local error bounds.}
    Fixed-model scalar concentration transfers to $M^\star$ and $M_k$
    through their common posterior law. Combining it with coordinate-wise
    bounds confines the extra $\sqrt S$ factor to the mismatch
    (Sections~\ref{subsec:posterior-calibration}--\ref{subsec:local-bridge}).
    \item \textbf{Variance control for the value difference.}
    Applying the Bellman square telescope directly to the sampled--true value
    difference bounds its cumulative variance by local errors and regret.
    This sharper control keeps the extra state-space factor out of the
    leading regret term
    (Section~\ref{sec:variance-control}).
\end{enumerate}

Thus the tabular novelty is to use a common empirical reference so that uniform
control is needed only for the value mismatch, and then control that mismatch
by analyzing the value difference directly, rather than bounding the two
value variances separately.
Sections~\ref{sec:reference-analysis}--
\ref{sec:variance-control} implement these steps. Section~\ref{sec:linear-mixture-extension}
adapts the same proof idea to linear-mixture MDPs: instead of inserting an
empirical transition reference, it directly controls the sampled value
directions with a variance-adaptive linear confidence bound.

\section{Tabular Analysis with an Empirical Reference}
\label{sec:reference-analysis}

We first reduce regret to local errors and introduce the empirical-reference
decomposition. We then construct the confidence sets and combine the resulting
bounds for its three terms.
Section~\ref{sec:variance-control} controls the remaining variance terms and
closes the regret bound.

\subsection{Regret Decomposition via Posterior Sampling}
\label{subsec:posterior-cancellation}
\label{subsec:simulation-identity}

As in earlier PSRL analyses \citep{osband2013,osband2017}, the equality
$M_k\mid\mathcal F_k\overset d=M^\star\mid\mathcal F_k$ implies
$\mathbb E_k[V^\star_{1,M^\star}(s_1)-V^\star_{1,M_k}(s_1)]=0$.
Thus the difference between the true and sampled optimal values cancels in
conditional expectation, for arbitrary joint priors. Adding and subtracting
$V^\star_{1,M_k}(s_1)$ in $\Delta_k$ and using $\pi_k=\pi^\star_{M_k}$ gives
\begin{equation}
\label{eq:posterior-sampling-identity}
\mathbb E_k[\Delta_k]
=
\mathbb E_k\!\left[
V^{\pi_k}_{1,M_k}(s_1)-V^{\pi_k}_{1,M^\star}(s_1)
\right].
\end{equation}

Conditional on $(\mathcal F_k,M^\star,M_k)$, consider the trajectory
generated by following $\pi_k$ in the true model $M^\star$.
Along this trajectory, define the one-step reward and transition errors by
\begin{equation}
\label{eq:local-errors}
\begin{aligned}
e^r_{k,h}
&:=r_h^{M_k}(S_{k,h},A_{k,h})
-r_h^{M^\star}(S_{k,h},A_{k,h}),\\
e^P_{k,h}
&:=(P_h^{M_k}-P_h^{M^\star})
V^\star_{h+1,M_k}(S_{k,h},A_{k,h}).
\end{aligned}
\end{equation}
Subtracting the two Bellman recursions and telescoping along the true-model
trajectory yields \citep[Section~5.1]{osband2013}
\begin{equation}
\label{eq:simulation-identity}
V^{\pi_k}_{1,M_k}(s_1)-V^{\pi_k}_{1,M^\star}(s_1)=
\mathbb E\!\left[
\sum_{h=1}^H(e^r_{k,h}+e^P_{k,h})
\,\middle|\,\mathcal F_k,M^\star,M_k
\right].
\end{equation}
The reward error $e^r_{k,h}$ is controlled by standard scalar concentration
and the posterior calibration below; see Lemma~\ref{lem:local-reference-bridge}
for the resulting bound. Its contribution to the final regret bound is
dominated by the transition-error terms. The remaining challenge is to control
the value-weighted transition error $e^P_{k,h}$, because the sampled transition
kernel and its own continuation value are statistically coupled.

\subsection{The Empirical Reference}
\label{subsec:empirical-reference}

Our key new decomposition is to insert one common empirical transition law
between the sampled and true kernels, so that value-uniform control is confined
to a single mismatch term. At the visited row, let
$N_{k,h}:=\sum_{\ell<k}
\mathbf 1\{(S_{\ell,h},A_{\ell,h})=(S_{k,h},A_{k,h})\}$
be its pre-visit count, and let $\widehat p$ be the empirical next-state law
when $N_{k,h}>0$. Evaluating all kernels at $(S_{k,h},A_{k,h})$,
\begin{equation}
\label{eq:reference-decomposition}
\begin{aligned}
e^P_{k,h}
&=
\underbrace{(P_h^{M_k}-\widehat p)V^\star_{h+1,M_k}}
_{\text{sample-aligned}}
+
\underbrace{(\widehat p-P_h^{M^\star})V^\star_{h+1,M^\star}}
_{\text{truth-aligned}}\\
&\quad+
\underbrace{(\widehat p-P_h^{M^\star})
(V^\star_{h+1,M_k}-V^\star_{h+1,M^\star})}
_{\text{value mismatch}}.
\end{aligned}
\end{equation}
The first two terms pair each model's transition kernel with its own optimal
continuation value. For a fixed model, this value vector is deterministic,
so we seek scalar concentration along that direction. The next subsection
constructs the confidence conditions that make this control available for
both the true and sampled models.

The mismatch direction, however, remains data-dependent even when $M^\star$
is fixed, so the same scalar concentration does not directly apply. We use
uniform control for this term, which incurs an additional $\sqrt S$ factor.
The decomposition confines this cost to the mismatch; Section~\ref{sec:variance-control}
then controls its cumulative variance so that the factor does not enter the
leading regret term.

\subsection{Confidence Sets and Concentration Bounds}
\label{subsec:posterior-calibration}

Earlier PSRL analyses use confidence sets to bound the value difference
between the true and sampled models \citep{osband2013,osband2017}.

To control the two aligned terms, we construct confidence sets used only in
the analysis. Fix a row $x=(h,s,a)$ and let
$(R_{x,i},Y_{x,i})$ denote the reward and next state observed on the $i$th
visit to this row. Let $N_k(x)$ count visits to $x$ before episode $k$,
and let $\widehat p_{x,n}$ and
$\widehat r_{x,n}:=n^{-1}\sum_{j=1}^nR_{x,j}$ be the empirical next-state
law and mean reward after $n$ visits. For $n\ge2$, write
\[
\widehat{\operatorname{Var}}_{x,n}(v)
:=\frac{1}{n(n-1)}\sum_{1\le i<j\le n}
\bigl(v(Y_{x,i})-v(Y_{x,j})\bigr)^2.
\]

Define $\mathcal C_k$ as the set of models $M$ satisfying the following
conditions for every $x=(h,s,a)$, $1\le n\le N_k(x)$, and
$i\in\mathcal S$, with the third condition imposed only when $n\ge2$:
\begin{align}
&|\widehat p_{x,n}(i)-P_h^M(i\mid s,a)|
\le
\sqrt{\frac{2P_h^M(i\mid s,a)\beta_K}{n}}
+\frac{2\beta_K}{3n},
\notag\\
&|\widehat r_{x,n}-r_h^M(s,a)|
\le\sqrt{\frac{\beta_K}{2n}},
\notag\\
&\left|(P_h^M-\widehat p_{x,n})V^\star_{h+1,M}\right|\le
C\sqrt{\frac{\beta_K}{n}
\widehat{\operatorname{Var}}_{x,n}(V^\star_{h+1,M})}
+\frac{CH\beta_K}{n}.
\label{eq:candidate-aligned-confidence}
\end{align}
An unobserved row imposes no condition, and $\mathcal C_k$ depends only on
$\mathcal F_k$. For each candidate $M$,
\eqref{eq:candidate-aligned-confidence} compares the average of the observed
values $V^\star_{h+1,M}(Y_{x,j})$ with the expected next-state value predicted
by that model. Under a fixed model $M$, this value vector is deterministic,
so ordinary scalar empirical-Bernstein concentration applies.

\begin{lemma}[Confidence-set coverage]
\label{lem:confidence-set-coverage}
For every fixed model $M$,
\begin{equation}
\label{eq:fixed-model-coverage}
\mathbb P_M\!\left(
M\notin\mathcal C_k\ \text{for some }k\le K
\right)
\le\frac{1}{4K}.
\end{equation}
Under any joint prior, let
$\mathcal G_k:=\{M^\star\in\mathcal C_k\}\cap\{M_k\in\mathcal C_k\}$.
Then
\begin{equation}
\label{eq:failure-budget}
\mathbb P(M_k\notin\mathcal C_k)
=
\mathbb P(M^\star\notin\mathcal C_k),
\qquad
\sum_{k=1}^K\mathbb P(\mathcal G_k^c)\le1.
\end{equation}
\end{lemma}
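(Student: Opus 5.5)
The plan is to prove the fixed-model bound \eqref{eq:fixed-model-coverage} by a union bound over rows, sample sizes, coordinates and the three conditions, and then to transfer it to the random models using the posterior-sampling identity.

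\textbf{Reduction to i.i.d.\ samples for a fixed model.} Fix a model $M$. I will use a standard ``stack of samples'' (skeleton) construction. For each row $x=(h,s,a)$, pre-draw an i.i.d.\ sequence $(R_{x,i},Y_{x,i})_{i\ge1}\sim J_h^M(\cdot\mid s,a)$. The $i$th visit to $x$ then reads the $i$th entry. Under $\mathbb P_M$ this reproduces the law of the interaction, whatever policies (including the PSRL draws, which use independent randomness) choose the visits. Hence $\widehat p_{x,n}$, $\widehat r_{x,n}$ and $\widehat{\Var}_{x,n}$ are functions of the first $n$ i.i.d.\ samples of row $x$. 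Moreover, the event $\{M\notin\mathcal C_k \text{ for some } k\le K\}$ is contained in the event that some condition fails for some $x$ and some $n\le K$, since $N_k(x)\le K-1$. This lets me bound a failure probability over deterministic $n$ rather than random counts.

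\textbf{Per-condition concentration.} For each fixed $(x,n,i)$ I will apply the following bounds, each with failure probability at most $2e^{-\beta_K}$.
\begin{itemize}
\item Bernstein's inequality to the Bernoulli variables $\mathbf 1\{Y_{x,j}=i\}$, whose variance is at most $P_h^M(i\mid s,a)$. This gives the first condition.
\item Hoeffding's inequality to the rewards in $[0,1]$. This gives the second condition.
\item The empirical Bernstein inequality of Maurer--Pontil to $V^\star_{h+1,M}(Y_{x,j})\in[0,H]$. Here it is crucial that $M$ is fixed, so $V^\star_{h+1,M}$ is a deterministic vector and the samples are i.i.d.\ bounded scalars. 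The unbiased U-statistic form of $\widehat{\Var}_{x,n}$ matches Maurer--Pontil's sample variance, which yields the third condition with universal $C$.
\end{itemize}
The union bound runs over at most $SAH$ rows, $K$ values of $n$, $S$ coordinates and three conditions. This gives total failure probability at most $6S^2AHK e^{-\beta_K}=6/(c_0 SK)\le 1/(4K)$ once $c_0\ge 24$.

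\textbf{Transfer to the random models.} Under the joint prior, condition on $M^\star=M$. The data law is then exactly $\mathbb P_M$, so integrating \eqref{eq:fixed-model-coverage} over $\rho$ gives $\mathbb P(M^\star\notin\mathcal C_k \text{ for some } k\le K)\le 1/(4K)$. In particular $\mathbb P(M^\star\notin\mathcal C_k)\le 1/(4K)$ for each $k$. Because $\mathcal C_k$ is $\mathcal F_k$-measurable and $M_k\mid\mathcal F_k\overset d=M^\star\mid\mathcal F_k$, I apply the identity $\mathbb E_k[g(M^\star)]=\mathbb E_k[g(M_k)]$ with $g=\mathbf 1\{\cdot\notin\mathcal C_k\}$ and take expectations. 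This gives the equality in \eqref{eq:failure-budget}. A union bound then yields $\mathbb P(\mathcal G_k^c)\le 2\cdot\frac1{4K}$, and summing over $k\le K$ gives $1/2\le 1$.

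\textbf{Main obstacle.} The main obstacle is the rigorous justification of the skeleton coupling and of measurability. The visit counts are random and adapted, and the prior is arbitrary, so the function $g$ in the transfer step must be jointly measurable in $(M,\mathcal F_k)$. I will handle this by working on the product space (model, stacks, algorithmic randomness) and checking that the conditions are Borel in $M$ via the measurability of $V^\star_{h+1,M}$. The concentration inequalities themselves are routine.
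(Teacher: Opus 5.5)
Your proposal is correct and follows essentially the same route as the paper: a per-row i.i.d.\ skeleton with Bernstein, Hoeffding and empirical Bernstein bounds, union-bounded over rows, coordinates and prefixes (never over models). This is followed by posterior calibration using that $\mathcal C_k$ is $\mathcal F_k$-measurable and that $M_k$ and $M^\star$ share the conditional law, giving $\sum_k\mathbb P(\mathcal G_k^c)\le 2K\cdot\frac1{4K}=\frac12$. One harmless slip: $6S^2AHKe^{-\beta_K}=6/(c_0K)$, not $6/(c_0SK)$, which still gives $1/(4K)$ for $c_0\ge24$ (a two-sided empirical Bernstein bound may also cost a constant larger than $2e^{-\beta_K}$, which is absorbed into $c_0$).
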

The fixed-model claim follows from standard concentration and a union bound
over rows, coordinates, and empirical prefixes, not over the model class.
Adaptive visitation is handled by exposing each row's observations in visit
order; Appendix~\ref{app:technical-concentration} gives the details.
The posterior claim follows because $\mathcal C_k$ is determined by
$\mathcal F_k$ and $M_k,M^\star$ have the same conditional law.
Appendix~\ref{app:concentration-calibration} gives this calculation, which
requires no posterior-concentration assumption.

\subsection{Local Reference Bound}
\label{subsec:local-bridge}

On $\mathcal G_k$, both $M_k$ and $M^\star$ satisfy the same history-dependent
confidence conditions relative to the common empirical transition law
$\widehat p$. We can therefore apply the variance-sensitive candidate-aligned
bound from Section~\ref{subsec:posterior-calibration} to each of the two aligned
terms in its own value direction. We use coordinate-wise control
for the remaining mismatch term, as outlined in
Section~\ref{subsec:empirical-reference}.

For a distribution $p$, write
$\operatorname{Var}_p(v):=\sum_i p(i)(v(i)-p^\top v)^2$.
The following elementary lemma supplies that uniform control and also converts
the empirical variance above to the true-row variance. Since the trajectory is
generated by the true model, we express all variance terms under its transition
law to apply the Bellman variance identity in Section~\ref{sec:variance-control}.
The lemma applies to every
$v\in[-H,H]^S$, covering the sampled and true continuation values and their
difference.

\begin{lemma}[Uniform error and variance bounds]
\label{lem:projection-variance-transfer}
If $M\in\mathcal C_k$, then for every row $x=(h,s,a)$, prefix
$1\le n\le N_k(x)$, and $v\in[-H,H]^S$, setting
$p=P_h^M(\cdot\mid s,a)$ and $\widehat p=\widehat p_{x,n}$ gives
\begin{equation}
\label{eq:uniform-projection}
|(\widehat p-p)^\top v|
\le
C\sqrt{\frac{S\beta_K}{n}\operatorname{Var}_p(v)}
+\frac{CSH\beta_K}{n}.
\end{equation}
For $n\ge2$, also
\begin{equation}
\label{eq:variance-transfer}
\frac{n}{n-1}\operatorname{Var}_{\widehat p}(v)
\le
C\left(\operatorname{Var}_p(v)+\frac{SH^2\beta_K}{n}\right).
\end{equation}
\end{lemma}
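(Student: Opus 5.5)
The plan is to use only the first (coordinate-wise Bernstein) condition defining $\mathcal C_k$. Throughout we write $\delta_i:=\widehat p(i)-p(i)$, so $|\delta_i|\le\sqrt{2p(i)\beta_K/n}+2\beta_K/(3n)$ for every $i$.

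\textbf{Step 1: the uniform projection bound \eqref{eq:uniform-projection}.} Since $\sum_i\delta_i=0$, we may center $v$ first. Put $\bar v:=p^\top v$ and $w:=v-\bar v\mathbf 1$; then $(\widehat p-p)^\top v=\sum_i\delta_i w(i)$, and $|w(i)|\le 2H$. Inserting the coordinate bound gives
\[
\Bigl|\sum_i\delta_iw(i)\Bigr|\le\sqrt{\frac{2\beta_K}{n}}\sum_i\sqrt{p(i)}\,|w(i)|+\frac{2\beta_K}{3n}\sum_i|w(i)|.
\]
We bound the first sum by Cauchy--Schwarz over the $S$ coordinates:
\[
\sum_i\sqrt{p(i)}\,|w(i)|\le\sqrt S\Bigl(\sum_ip(i)w(i)^2\Bigr)^{1/2}=\sqrt{S\operatorname{Var}_p(v)}.
\]
The second sum is at most $2SH$. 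Together these give \eqref{eq:uniform-projection}. The $\sqrt S$ loss is exactly the price of uniformity over directions $v$, as anticipated in the text.

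\textbf{Step 2: the variance transfer \eqref{eq:variance-transfer}.} First we note that $\operatorname{Var}_{\widehat p}(v)\le\sum_i\widehat p(i)w(i)^2$, because the variance minimizes the second moment over centering constants. Splitting $\widehat p(i)=p(i)+\delta_i$ gives
\[
\sum_i\widehat p(i)w(i)^2=\operatorname{Var}_p(v)+\sum_i\delta_iw(i)^2.
\]
We control the correction with the coordinate bound and $|w(i)|\le2H$:
\[
\sum_i|\delta_i|w(i)^2\le\sqrt{\frac{2\beta_K}{n}}\sum_i\sqrt{p(i)}\,w(i)^2+\frac{8SH^2\beta_K}{3n}.
\]
For the first term we use $w(i)^2\le 2H|w(i)|$ and Cauchy--Schwarz again, obtaining a bound $C H\sqrt{S\beta_K\operatorname{Var}_p(v)/n}$. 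AM--GM then gives
\[
2H\sqrt{\frac{S\beta_K\operatorname{Var}_p(v)}{n}}\le\operatorname{Var}_p(v)+\frac{SH^2\beta_K}{n}.
\]
Hence $\operatorname{Var}_{\widehat p}(v)\le C(\operatorname{Var}_p(v)+SH^2\beta_K/n)$. Finally, $n/(n-1)\le2$ for $n\ge2$, which yields \eqref{eq:variance-transfer}.

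\textbf{Main obstacle.} The bookkeeping above is routine; the care is in where the centering happens. In Step 1 the centering must be at $p^\top v$, not at $\widehat p^\top v$, so that the variance appearing is the true-row variance needed later for the Bellman variance identity. It is legitimate only because $\sum_i\delta_i=0$. In Step 2 the key choice is to bound the quadratic correction linearly through $|w|\le2H$ before applying Cauchy--Schwarz. This keeps the dependence at $\sqrt{\operatorname{Var}_p}$, which can then be absorbed by AM--GM, and produces only the lower-order $SH^2\beta_K/n$ term. If the correction were bounded crudely instead, the estimate would lose a multiplicative $\sqrt{S\beta_K/n}$ factor that cannot be absorbed for small $n$.
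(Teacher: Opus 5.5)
Your proposal is correct and follows essentially the same route as the paper: centering at $p^\top v$ together with the coordinate-wise Bernstein condition and Cauchy--Schwarz for the uniform bound, and, for the variance transfer, the minimal-centering inequality, the split $\widehat p=p+\delta$, and Young's inequality. The one cosmetic difference is that you apply $w(i)^2\le 2H|w(i)|$ before Cauchy--Schwarz. The paper instead applies Cauchy--Schwarz to the fourth moment and then uses $w(i)^4\le4H^2w(i)^2$; both give the same $2H\sqrt{S\operatorname{Var}_p(v)}$ bound.
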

The complete calculation is given in Appendix~\ref{app:reference-bridge}. For
brevity, write
$\operatorname{Var}_{k,h}(v):=
\operatorname{Var}_{P_h^{M^\star}(\cdot\mid S_{k,h},A_{k,h})}(v)$.

\begin{samepage}
\begin{lemma}[Local reference bound]
\label{lem:local-reference-bridge}
On $\mathcal G_k$, every realized row satisfies
\begin{align}
&|e^P_{k,h}|\le
C\sqrt{\frac{\beta_K}{N_{k,h}\vee1}
\left(\Var_{k,h}(V^\star_{h+1,M_k})
+\Var_{k,h}(V^\star_{h+1,M^\star})\right)}\notag\\
&\qquad+C\sqrt{\frac{S\beta_K}{N_{k,h}\vee1}
\Var_{k,h}
(V^\star_{h+1,M_k}-V^\star_{h+1,M^\star})}
+\frac{CSH\beta_K}{N_{k,h}\vee1},\qquad
|e^r_{k,h}|
\le
C\sqrt{\frac{\beta_K}{N_{k,h}\vee1}}.
\label{eq:local-reference-bridge}
\end{align}
\end{lemma}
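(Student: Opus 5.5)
We treat the cases $N_{k,h}=0$ and $N_{k,h}\ge1$ separately, apply the empirical-reference decomposition \eqref{eq:reference-decomposition} with $\widehat p=\widehat p_{x,n}$ and $n=N_{k,h}$, and bound its three terms using $\mathcal G_k$, i.e.\ $M^\star,M_k\in\mathcal C_k$. \emph{Trivial regimes.} If $N_{k,h}=0$, then $|e^P_{k,h}|\le H$ because both value vectors lie in $[0,H]$, so $|e^P_{k,h}|\le CSH\beta_K$ once $\beta_K\ge1$. Likewise $|e^r_{k,h}|\le1\le C\sqrt{\beta_K}$. If $n=1$, the variance-based aligned condition is not imposed, but $CSH\beta_K/n\ge H$ again covers $|e^P_{k,h}|$. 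We may therefore assume $n\ge2$.

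\emph{Reward bound.} Both models satisfy $|\widehat r_{x,n}-r_h^M|\le\sqrt{\beta_K/(2n)}$, so the triangle inequality gives $|e^r_{k,h}|\le 2\sqrt{\beta_K/(2n)}$.

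\emph{Sample-aligned term.} Since $M_k\in\mathcal C_k$, condition \eqref{eq:candidate-aligned-confidence} bounds $|(P_h^{M_k}-\widehat p)V^\star_{h+1,M_k}|$ by $C\sqrt{(\beta_K/n)\widehat{\Var}_{x,n}(V^\star_{h+1,M_k})}+CH\beta_K/n$. The U-statistic satisfies $\widehat{\Var}_{x,n}(v)=\frac{n}{n-1}\Var_{\widehat p}(v)$. We then apply the variance transfer \eqref{eq:variance-transfer} with $p$ taken as the \emph{true} row $P_h^{M^\star}$, which is legitimate because $M^\star\in\mathcal C_k$ and the lemma holds for every $v\in[-H,H]^S$. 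This gives $\widehat{\Var}\le C(\Var_{k,h}(V^\star_{h+1,M_k})+SH^2\beta_K/n)$. Taking square roots, the extra piece is $\sqrt{(\beta_K/n)SH^2\beta_K/n}=H\beta_K\sqrt S/n\le SH\beta_K/n$, so it is absorbed into the lower-order term. This choice is the key point. The sampled value is measured in the sampled model's empirical variance, but we transfer it to the true row's variance by using only the empirical law $\widehat p$ as a pivot. No comparison between $P^{M_k}$ and $P^{M^\star}$ variances is needed.

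\emph{Truth-aligned term.} The same argument applied to $M^\star$ with $v=V^\star_{h+1,M^\star}$ yields the $\Var_{k,h}(V^\star_{h+1,M^\star})$ contribution.

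\emph{Mismatch term.} Set $v=V^\star_{h+1,M_k}-V^\star_{h+1,M^\star}\in[-H,H]^S$. Since $M^\star\in\mathcal C_k$, the uniform bound \eqref{eq:uniform-projection} with $p=P_h^{M^\star}$ gives exactly $C\sqrt{(S\beta_K/n)\Var_{k,h}(v)}+CSH\beta_K/n$.

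\emph{Assembly and expected obstacle.} Summing the three pieces and using $\sqrt a+\sqrt b\le\sqrt{2(a+b)}$ to merge the two aligned variances gives \eqref{eq:local-reference-bridge}. The main point to check is that the data-dependence of $V^\star_{h+1,M_k}$ causes no trouble. It does not: condition \eqref{eq:candidate-aligned-confidence} is a deterministic property of every model in $\mathcal C_k$, and Lemma~\ref{lem:projection-variance-transfer} holds uniformly in $v$. Hence no new concentration argument is required, only care that every variance is expressed under the true row.
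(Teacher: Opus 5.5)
Your proposal is correct and follows the paper's proof essentially step for step. The paper also absorbs the cases $N_{k,h}\le1$ via $|e^P_{k,h}|\le H\le CSH\beta_K/(N_{k,h}\vee1)$ and $|e^r_{k,h}|\le1$. For $N_{k,h}\ge2$ it likewise uses $M_k\in\mathcal C_k$ for the sample-aligned condition and $M^\star\in\mathcal C_k$ in Lemma~\ref{lem:projection-variance-transfer} to convert both empirical variances to the true row, then \eqref{eq:uniform-projection} for the mismatch and the triangle inequality on the reward condition.
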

\end{samepage}
For $N_{k,h}\ge2$, apply the candidate-aligned bound and
Lemma~\ref{lem:projection-variance-transfer} to the first two terms of
\eqref{eq:reference-decomposition}, combine their square roots using
$\sqrt a+\sqrt b\le\sqrt{2(a+b)}$, and apply
\eqref{eq:uniform-projection} to the mismatch; the small-count cases are
absorbed by the lower-order terms. Appendix~\ref{app:reference-bridge} gives
the complete proof. In \eqref{eq:local-reference-bridge}, the aligned-value term combines
the sample-aligned and truth-aligned contributions, while the
value-mismatch term is the only term carrying the extra $\sqrt S$ factor. It remains to show that its cumulative mismatch variance
has no $KH^2$ baseline, meaning that small local errors and regret lead to a
small variance bound, rather than paying an $H^2$ cost in every episode
regardless of these errors.

\section{Variance Control and Main Proof}
\label{sec:variance-control}

The local reference bound leaves three conditional variance sums. A single
Bellman identity controls all three, and the difference direction is special:
its bound contains no standalone $H^2$ term.

\subsection{A Bellman Variance Identity}
\label{subsec:bellman-square-telescope}

Throughout this and the next subsection, expectations are conditional on fixed
$(\mathcal F_k,M^\star,M_k)$ and are taken over the trajectory generated by
$\pi_k$ in $M^\star$.

\begin{lemma}[Bellman square telescope]
\label{lem:bellman-square-telescope}
Let $v_h:\mathcal S\to\mathbb R$ be fixed under this conditioning, with
$v_{H+1}=0$. Then
\begin{equation}
\mathbb E\sum_{h=1}^H\operatorname{Var}_{k,h}(v_{h+1})
=-v_1(s_1)^2
+\mathbb E\sum_{h=1}^H\Bigl[v_h(S_{k,h})^2-
\bigl(P_h^{M^\star}v_{h+1}(S_{k,h},A_{k,h})\bigr)^2
\Bigr].
\label{eq:bellman-square-telescope}
\end{equation}
Consequently, if $|v_h|\le H$, then
\begin{equation}
\label{eq:bellman-square-consequence}
\mathbb E\sum_{h=1}^H\Var_{k,h}(v_{h+1})
\le 2H\,\mathbb E\sum_{h=1}^H
\Bigl|v_h(S_{k,h})-P_h^{M^\star}v_{h+1}(S_{k,h},A_{k,h})\Bigr|.
\end{equation}
\end{lemma}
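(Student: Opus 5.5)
The identity follows from the law of total variance along the true-model trajectory, written as a telescoping sum; the inequality then follows from factoring a difference of squares.

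\textbf{Step 1 (one-step identity).} Fix $h$ and condition on $(S_{k,h},A_{k,h})$. Since the trajectory is generated by $\pi_k$ in $M^\star$, we have $S_{k,h+1}\sim P_h^{M^\star}(\cdot\mid S_{k,h},A_{k,h})$. For the fixed function $v_{h+1}$ this gives
\[
\Var_{k,h}(v_{h+1})=\E\bigl[v_{h+1}(S_{k,h+1})^2\mid S_{k,h},A_{k,h}\bigr]-\bigl(P_h^{M^\star}v_{h+1}(S_{k,h},A_{k,h})\bigr)^2 .
\]
Here we use that $v_{h+1}$ is fixed under the conditioning on $(\mathcal F_k,M^\star,M_k)$, and that the Markov property makes the next-state law depend only on the current row. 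Taking expectations and summing over $h$ yields
\[
\E\sum_{h=1}^H\Var_{k,h}(v_{h+1})=\E\sum_{h=1}^H v_{h+1}(S_{k,h+1})^2-\E\sum_{h=1}^H\bigl(P_h^{M^\star}v_{h+1}\bigr)^2 .
\]

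\textbf{Step 2 (telescoping).} Reindex the first sum as $\sum_{h=2}^{H+1}v_h(S_{k,h})^2$. Because $v_{H+1}=0$, this equals $\sum_{h=1}^H v_h(S_{k,h})^2-v_1(s_1)^2$, using that $S_{k,1}=s_1$ is deterministic. Substituting gives \eqref{eq:bellman-square-telescope}.

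\textbf{Step 3 (consequence).} Drop the term $-v_1(s_1)^2\le0$. Then factor each summand:
\[
v_h^2-(Pv_{h+1})^2=(v_h-Pv_{h+1})(v_h+Pv_{h+1}).
\]
The hypothesis $|v_h|\le H$ for all $h$, including $v_{h+1}$, gives $|P_h^{M^\star}v_{h+1}|\le H$, so $|v_h+Pv_{h+1}|\le 2H$. Bounding each summand by its absolute value then yields \eqref{eq:bellman-square-consequence}.

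\textbf{Main obstacle.} The argument is essentially routine. The only point needing care is measurability: the $v_h$ may depend on $M_k$ and $M^\star$ (for example $V^\star_{h+1,M_k}-V^\star_{h+1,M^\star}$), but they are deterministic given the conditioning. So the tower property over the trajectory applies without any independence issues, and no bound on the $v_h$ is needed for the identity itself, only for the consequence.
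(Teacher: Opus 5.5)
Your proposal is correct and follows the paper's proof step for step. Both expand each conditional variance using the true transition law, shift the index of the squared-value sum using $S_{k,1}=s_1$ and $v_{H+1}=0$, and then drop $-v_1(s_1)^2\le 0$ and bound each difference of squares by $2H$ times the absolute residual.
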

The identity follows by expanding each conditional variance and shifting the
index of the squared values: this removes $v_1(s_1)^2$ and adds the zero
terminal term $v_{H+1}(S_{k,H+1})^2$. The consequence uses
$x^2-y^2\le2H|x-y|$ for $|x|,|y|\le H$. This is the residual form of the
Bellman-variance mechanism used in Bernstein-style finite-horizon analyses
\citep{azar2017,zhou2021}.

\subsection{Bounding the Three Variance Terms}
\label{subsec:three-variance-terms}

Applying Lemma~\ref{lem:bellman-square-telescope} in the three value directions
from Section~\ref{subsec:local-bridge} gives the following bounds. The first
two are direct residual applications; the novel step in our analysis is the
third, which controls the sampled--true value difference without a standalone
$H^2$ variance baseline.

\begin{samepage}
\begin{proposition}[Three variance bounds]
\label{prop:three-variance-bounds}
For every episode $k$, with all sums below taken over $h=1,\ldots,H$,
\begin{equation}
\label{eq:three-variance-bounds}
\begin{aligned}
&\mathbb E\sum_h\operatorname{Var}_{k,h}(V^\star_{h+1,M_k})
\le2H^2+2H\,\mathbb E\sum_h|e^P_{k,h}|,\\
&\mathbb E\sum_h\operatorname{Var}_{k,h}(V^\star_{h+1,M^\star})
\le2H^2+2H\Delta_k,\\
&\mathbb E\sum_h\operatorname{Var}_{k,h}
(V^\star_{h+1,M_k}-V^\star_{h+1,M^\star})\le2H\left[
\mathbb E\sum_h(|e^P_{k,h}|+|e^r_{k,h}|)+\Delta_k
\right].
\end{aligned}
\end{equation}
\end{proposition}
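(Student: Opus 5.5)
The plan is to apply Lemma~\ref{lem:bellman-square-telescope}, specifically its consequence \eqref{eq:bellman-square-consequence}, three times. The three choices of $v_h$ are $V^\star_{h,M_k}$, $V^\star_{h,M^\star}$ and $V^\star_{h,M_k}-V^\star_{h,M^\star}$. Each is fixed under the conditioning on $(\mathcal F_k,M^\star,M_k)$, vanishes at $h=H+1$, and takes values in $[-H,H]$. So the only work is to identify, along the true-model trajectory with $A_{k,h}=\pi_{k,h}(S_{k,h})$, the one-step residual
\[
\delta_h:=v_h(S_{k,h})-P_h^{M^\star}v_{h+1}(S_{k,h},A_{k,h}),
\]
and to bound $\mathbb E\sum_h|\delta_h|$. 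The key auxiliary quantity is the true-model suboptimality gap
\[
g_{k,h}:=V^\star_{h,M^\star}(S_{k,h})-r_h^{M^\star}(S_{k,h},A_{k,h})-P_h^{M^\star}V^\star_{h+1,M^\star}(S_{k,h},A_{k,h})\ge0.
\]
By the performance-difference lemma, telescoping $V^\star_{h,M^\star}-V^{\pi_k}_{h,M^\star}$ along the $\pi_k$-trajectory in $M^\star$ gives $\mathbb E\sum_h g_{k,h}=\Delta_k$. Here $\Delta_k$ is deterministic given the conditioning, which is why it appears outside the expectation.

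\textbf{First bound (sampled value).} Since $\pi_k$ is optimal in $M_k$, the Bellman equation at the chosen action gives $V^\star_{h,M_k}(s)=r_h^{M_k}(s,a)+P_h^{M_k}V^\star_{h+1,M_k}(s,a)$. Hence the residual is $\delta_h=r_h^{M_k}(S_{k,h},A_{k,h})+e^P_{k,h}$, with $e^P_{k,h}$ as in \eqref{eq:local-errors}. Because rewards lie in $[0,1]$, we get $\sum_h|\delta_h|\le H+\sum_h|e^P_{k,h}|$. Then \eqref{eq:bellman-square-consequence} yields $2H^2+2H\,\mathbb E\sum_h|e^P_{k,h}|$.

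\textbf{Second bound (true value).} Here $\delta_h=r_h^{M^\star}(S_{k,h},A_{k,h})+g_{k,h}$, which is nonnegative. So $\mathbb E\sum_h|\delta_h|\le H+\Delta_k$, giving $2H^2+2H\Delta_k$.

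\textbf{Third bound (value difference).} Subtract the two expressions above, both written with the same $P_h^{M^\star}$ applied to the respective next-step values. The $P_h^{M^\star}$-parts combine exactly into $P_h^{M^\star}(V^\star_{h+1,M_k}-V^\star_{h+1,M^\star})$, leaving
\[
\delta_h=e^r_{k,h}+e^P_{k,h}-g_{k,h}.
\]
The crucial feature is that the two reward terms $r^{M_k}$ and $r^{M^\star}$ no longer appear separately; only their difference $e^r_{k,h}$ survives. This is what removes the standalone $H^2$ baseline. We then bound $|\delta_h|\le|e^r_{k,h}|+|e^P_{k,h}|+g_{k,h}$ and sum the gaps to $\Delta_k$, which gives the third line of \eqref{eq:three-variance-bounds}.

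The main point requiring care is this third case. One must verify that the residual of the difference direction is exactly the difference of the two single-model residuals. This relies on using the same true kernel $P_h^{M^\star}$ and the same realized action $A_{k,h}=\pi_k(S_{k,h})$ in both. One must also check that $|v_h|\le H$ still holds for the difference, so that $x^2-y^2\le2H|x-y|$ applies. The remaining steps are routine bookkeeping of the performance-difference identity.
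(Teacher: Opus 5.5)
Your proposal is correct and follows essentially the same route as the paper's proof. The paper likewise applies the consequence of the Bellman square telescope three times, with $v_h$ equal to $V^\star_{h,M_k}$, $V^\star_{h,M^\star}$, and their difference, identifies the residuals as $r_h^{M_k}+e^P_{k,h}$, $r_h^{M^\star}$ plus the nonnegative Bellman gap, and $e^r_{k,h}+e^P_{k,h}$ minus that gap, and uses the fact that the gaps sum in expectation to $\Delta_k$.
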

\end{samepage}
Each bound follows from \eqref{eq:bellman-square-consequence}, with $v_h$
equal to the sampled value, the true value, or their difference;
Appendix~\ref{app:reference-bridge} expands all three applications.
For the difference direction, its one-step residual under the true
transition is
\[
\begin{aligned}
&\bigl(V^\star_{h,M_k}-V^\star_{h,M^\star}\bigr)(S_{k,h})
-P_h^{M^\star}
\bigl(V^\star_{h+1,M_k}-V^\star_{h+1,M^\star}\bigr)
(S_{k,h},A_{k,h})\\
&\quad=e^r_{k,h}+e^P_{k,h}
-\Bigl[V^\star_{h,M^\star}(S_{k,h})
-r_h^{M^\star}(S_{k,h},A_{k,h})
-P_h^{M^\star}V^\star_{h+1,M^\star}(S_{k,h},A_{k,h})\Bigr].
\end{aligned}
\]
The bracketed term is nonnegative, and its expected sum over the episode is
exactly $\Delta_k$. Applying \eqref{eq:bellman-square-consequence} therefore
gives the third bound. Crucially, no standalone $H^2$ term remains.
Bounding the difference variance by twice the sum of the two individual
variances and using the first two inequalities would retain an $O(H^2)$ term
per episode. After summing over episodes, this $KH^2$ baseline would combine
with the $\sqrt S$ factor in the local mismatch bound from
Section~\ref{subsec:local-bridge}, leaving a $\sqrt S$ gap in the leading
regret term. Applying the telescope directly to the value difference avoids
this baseline and lets the mismatch enter only through the regret self-bound.

\subsection{Closing the Regret Bound}
\label{subsec:visit-count-aggregation}
\label{subsec:proof-main-theorem}

Let $W_P:=\sum_{k,h}\mathbb E[\mathbf1_{\mathcal G_k}|e^P_{k,h}|]$
denote the cumulative transition error and
$W_r:=\sum_{k,h}\mathbb E[\mathbf1_{\mathcal G_k}|e^r_{k,h}|]$
the cumulative reward error.
Posterior sampling and the simulation identity, together with the failure
budget and the visit-count bounds in Appendix~\ref{app:visit-count-sums}, give
\begin{equation}
\label{eq:regret-and-reward}
\operatorname{BReg}_{\rho}(K)\le W_P+W_r+H,
\qquad
W_r\le CH\sqrt{SAK\beta_K}.
\end{equation}
Combining the local reference bound, the visit-count bounds, and
Proposition~\ref{prop:three-variance-bounds} yields
\begin{align}
W_P\le{}&
C\sqrt{SAH^3K\beta_KL_K}
+CS^2AH^2\beta_KL_K\notag\\
&+C\sqrt{S^2AH^2\beta_KL_K
\bigl(W_P+W_r+\operatorname{BReg}_{\rho}(K)\bigr)}.
\label{eq:transition-self-bound}
\end{align}
The key is that Proposition~\ref{prop:three-variance-bounds} controls the
mismatch variance through local errors and regret, without an additional
$KH^2$ term. Substituting \eqref{eq:regret-and-reward} into
\eqref{eq:transition-self-bound} and applying Young's inequality absorbs
the resulting $W_P$ term into the left-hand side. Using the reward bound
then gives
\begin{equation}
\operatorname{BReg}_{\rho}(K)
\le
C\sqrt{SAH^3K\beta_KL_K}
+CS^2AH^2\beta_KL_K.
\label{eq:main-regret-bound}
\end{equation}
Appendix~\ref{app:regret-closure} gives the aggregation and absorption steps.
Thus the extra $\sqrt S$ cost of the value-mismatch direction survives only
in the lower-order term, not in the leading regret rate.

\section{Linear-Mixture Extension}
\label{sec:linear-mixture-extension}

We analyze posterior sampling in time-inhomogeneous linear-mixture MDPs
with known deterministic rewards and an arbitrary joint prior. At stage $h$,
all transition rows share one unknown parameter $\theta_h\in\mathbb R^d$:
\[
\begin{aligned}
P_h^\theta(s'\mid s,a)
&=\langle\phi_h(s,a,s'),\theta_h\rangle,\\
\phi_{h,V}(s,a)
&=\sum_{s'}\phi_h(s,a,s')V(s').
\end{aligned}
\]
The feature maps $\phi_h$ are known. A tuple
$\theta=(\theta_1,\ldots,\theta_H)$ is admissible if it induces valid transition
kernels and satisfies $\|\theta_h\|_2\le B$ and
$\|\phi_{h,V}(s,a)\|_2\le\|V\|_\infty$ for all $h,s,a$ and
$V\in[0,H]^{\mathcal S}$. Let $\rho$ be any joint prior on these tuples,
$\theta^\star\sim\rho$ the true tuple, and
$\theta_k=(\theta_{k,1},\ldots,\theta_{k,H})$ the posterior draw in episode $k$.
Write $V^\star_{h,\theta}$ for the optimal value in model $\theta$.
Appendix~\ref{app:linear-mixture} gives the full construction and proof.

\begin{theorem}[Linear-mixture MDPs]
\label{thm:linear-mixture}
In time-inhomogeneous linear-mixture MDPs under the assumptions above, PSRL
satisfies
\[
\operatorname{BReg}_{\rho}(K)
=\widetilde O\!\left(d\sqrt{H^3K}\right).
\]
\end{theorem}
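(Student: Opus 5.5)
The argument follows the tabular route of Sections~\ref{sec:reference-analysis}--\ref{sec:variance-control}, with the empirical transition reference replaced by a ridge estimator and a variance-weighted confidence bound. First, the posterior-sampling identity \eqref{eq:posterior-sampling-identity} and the simulation identity \eqref{eq:simulation-identity} hold verbatim with $\theta_k,\theta^\star$ in place of $M_k,M^\star$. Rewards are known and deterministic, so $e^r_{k,h}=0$. The per-step transition error is
\[
e^P_{k,h}=\langle\phi_{h,V^\star_{h+1,\theta_k}}(S_{k,h},A_{k,h}),\theta_{k,h}-\theta^\star_h\rangle .
\]

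Next, for each stage $h$ I form a weighted ridge estimator $\widehat\theta_{k,h}$. It regresses observed next-state values onto the features $\phi_{h,V}$. The weights are variance-adaptive, $\bar\sigma^2_{k,h}=\max\{H^2/d,\ \text{estimated variance}\}$, in the style of \citep{zhou2021}, giving a Gram matrix $\Sigma_{k,h}$. The value directions must be model-dependent, so the analysis-only confidence set $\mathcal C_k$ contains the tuples $\theta$ for which
\[
\|\widehat\theta_{k,h}(V^\star_{\cdot,\theta})-\theta_h\|_{\Sigma_{k,h}}\le\widetilde O(\sqrt d)
\]
holds for every $h$, where the regression targets are $\theta$'s own optimal values. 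For a fixed $\theta$ these targets are deterministic, so a Bernstein-type self-normalized martingale bound gives coverage with no covering argument over value classes. The calibration argument of Lemma~\ref{lem:confidence-set-coverage} then transfers coverage to both $\theta^\star$ and $\theta_k$, with failure budget $\sum_k\mathbb P(\mathcal G_k^c)\le1$.

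The main obstacle is that $\theta_k$ sits in a confidence ellipsoid built from its own value direction, while $\theta^\star$ sits in one built from $V^\star_{\theta^\star}$. This is the linear analogue of the mismatch. A single shared estimator does not help with $e^P_{k,h}$, because the regression target depends on the value being regressed. The way around this is that in a linear mixture the parameter $\theta_h$ does not depend on $V$: the model-dependence enters only through the targets $V(Y)$. I would use $\theta_k$'s ellipsoid to control $\|\theta_{k,h}-\widehat\theta^{(k)}\|$ and $\theta^\star$'s ellipsoid, with targets $V^\star_{\theta^\star}$, to control $\|\widehat\theta^{(\star)}-\theta^\star_h\|$. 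The remaining difference $\widehat\theta^{(k)}-\widehat\theta^{(\star)}$ is linear in the target difference $V^\star_{\theta_k}-V^\star_{\theta^\star}$. It is a weighted regression of zero-mean noise in a data-dependent direction, so a uniform $d$-dimensional bound costs an extra $\sqrt d$, the analogue of the $\sqrt S$ factor. The conditional variance of that noise is $\Var_{k,h}(V^\star_{\theta_k}-V^\star_{\theta^\star})$. By Cauchy--Schwarz, $e^P_{k,h}$ is then bounded by
\[
\widetilde O\bigl(\sqrt d\,\bar\sigma_{k,h}+d\cdot\text{mismatch}\bigr)\,\|\phi/\bar\sigma_{k,h}\|_{\Sigma^{-1}_{k,h}} .
\]

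Finally I sum over $k,h$. The elliptical potential lemma gives $\sum\min\{1,\|\phi/\bar\sigma\|^2_{\Sigma^{-1}}\}=\widetilde O(dH)$. The aligned variances sum to $\widetilde O(H^2K+H\,W_P+H\,\mathrm{BReg})$ by Proposition~\ref{prop:three-variance-bounds}, whose Bellman telescope is model-agnostic. The mismatch variance sum is bounded by $2H(W_P+\mathrm{BReg})$ with no $KH^2$ baseline, again by the difference-direction bound. Together these give
\[
W_P\le \widetilde O\bigl(d\sqrt{H^3K}\bigr)+\widetilde O\Bigl(\sqrt{d^3H^2\,(W_P+\mathrm{BReg})}\Bigr)+\text{lower order}.
\]
Young's inequality absorbs $W_P$, leaving $\mathrm{BReg}_\rho(K)=\widetilde O(d\sqrt{H^3K})$ plus a $K$-independent $\mathrm{poly}(d,H)$ term. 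The delicate point is handling the weights $\bar\sigma$ jointly for both value directions. I would first try taking $\bar\sigma^2_{k,h}$ as the maximum of the two estimated variances plus the floor $H^2/d$, so that one Gram matrix serves both ellipsoids.
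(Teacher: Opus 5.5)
Your reduction to $W_P$, the calibration step, and the Bellman telescope for the sampled direction are fine. The plan breaks at the geometry of the regression.

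\textbf{Where it fails.} Because $\phi_{h,V}$ depends on $V$, a ridge fit of the targets $V^\star_{h+1,\theta}(S_{i,h+1})$ is well specified only with the features $\phi_{h,V^\star_{h+1,\theta}}(S_{i,h},A_{i,h})$. So the Gram matrix depends on the candidate, $\Sigma_{k,h}(\theta)$. Choosing the weights $\bar\sigma$ jointly cannot make one matrix serve both ellipsoids, since the features themselves differ. Three steps then fail.
\begin{enumerate}
\item The elliptical potential lemma needs $\Sigma_{k+1}=\Sigma_k+u_ku_k^\top$, with $u_k=\phi_{h,V^\star_{h+1,\theta_k}}(S_{k,h},A_{k,h})/\bar\sigma_{k,h}$ the direction actually evaluated. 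Neither available matrix has this form. $\Sigma_{k,h}(\theta_k)$ is rebuilt on all past rows using the fresh draw's value function, and $\Sigma_{k,h}(\theta^\star)$ accumulates $V^\star_{\theta^\star}$-features. So $\sum\min\{1,\|\phi/\bar\sigma\|^2_{\Sigma^{-1}}\}=\widetilde O(dH)$ is unsupported. Making it uniform over the redrawn candidate would need a covering of the value class, which your candidate-aligned tests were meant to avoid.
\item For the same reason, nothing controls $\|\phi_{h,V^\star_{h+1,\theta_k}}\|_{\Sigma_{k,h}(\theta^\star)^{-1}}$ when you apply the truth's ellipsoid along the sampled direction.
\item The mismatch step fails as well. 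The estimator $\widehat\theta_{k,h}(V)=\Sigma_{k,h}(V)^{-1}\sum_i\phi_{h,V}(S_{i,h},A_{i,h})\,V(S_{i,h+1})/\bar\sigma_i^2$ is not linear in $V$. Hence $\widehat\theta^{(k)}-\widehat\theta^{(\star)}$ is not a regression of the target difference. Uniform control over the data-dependent vector $V^\star_{h+1,\theta_k}-V^\star_{h+1,\theta^\star}\in[-H,H]^S$ would cost a factor governed by $S$, or by a covering of the optimal-value class, not $\sqrt d$.
\end{enumerate}

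\textbf{The missing idea.} A linear mixture needs no empirical reference, and no mismatch term arises. You observed that every value direction is governed by the same $\theta_h$. Use that to regress along PSRL's own past sampled directions:
\[
x_{i,h}=\frac{\phi_{h,V^\star_{h+1,\theta_i}}(S_{i,h},A_{i,h})}{H},\qquad
y_{i,h}=\frac{V^\star_{h+1,\theta_i}(S_{i,h+1})}{H},\qquad i<k.
\]
\begin{itemize}
\item Each $\theta_i$ is fixed before $S_{i,h+1}$ is drawn. Under every fixed environment $\theta$, the centered responses are therefore martingale differences with conditional variance $\Var_{P_h^\theta}(V^\star_{h+1,\theta_i})/H^2$.
\item The resulting set $\cC_{k,h}\subseteq\mathbb R^d$ depends only on $\cF_k$. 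It covers the truth without any covering argument and transfers to $\theta_k$ by calibration.
\item On that event, $|e^P_{k,h}|\le H\bigl(1\wedge\sup_{\vartheta,\vartheta'\in\cC_{k,h}}|\langle\vartheta-\vartheta',x_{k,h}\rangle|\bigr)$. Here $x_{k,h}$ is exactly the next data point, so the potential (scale-count) argument applies.
\item The only cross-model quantity left is $\Var_{P_h^{\theta^\star}}(V^\star_{h+1,\theta_k})$. Proposition~\ref{prop:linear-cross-model-variance} bounds its sum by $2H^2K+2HW_P$, so neither the difference-direction bound nor the $\sqrt d$ mismatch factor is needed.
\end{itemize}

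\textbf{A remaining issue with the floor.} Even after this fix, the variance floor $\bar\sigma^2\ge H^2/d$ from \citep{zhou2021} adds $\sum_{k,h}H^2/d=H^3K/d$ to $\sum\bar\sigma^2$. This produces an extra $\widetilde O(H^2\sqrt{dK})$ term, which exceeds $d\sqrt{H^3K}$ when $H>d$. The paper avoids any floor by using the scale-layered squared-error tests adapted from \citep{zhao2023} in Lemma~\ref{lem:linear-sampled-width}.
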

The bound matches the minimax rate of \citep{zhou2021} and improves the
$\widetilde O(d\sqrt{H^4K})$ guarantee of \citep{li2024}, which assumes
independent stage parameters.

\paragraph{Proof correspondence with the tabular case.}
As in the tabular analysis, posterior sampling reduces regret to local
model errors. In a linear-mixture MDP, the transition error is already a
directional parameter difference,
\[
\left\langle \theta_{k,h}-\theta_h^\star,
\phi_{h,V^\star_{h+1,\theta_k}}(S_{k,h},A_{k,h})\right\rangle.
\]
We control this error by bounding how much candidate parameters disagree
in their predictions along the sampled value direction. A variance-adaptive
confidence width supplies the local control, after which posterior
calibration and Bellman variance closure follow the tabular proof.

\paragraph{Sampled-value direction.}
Before $S_{k,h+1}$ is observed, both $(S_{k,h},A_{k,h})$ and
$V^\star_{h+1,\theta_k}$ are already fixed. For an admissible $\theta$,
define the normalized feature and noise variance
\[
\begin{aligned}
x_{k,h}
&:=\frac{\phi_{h,V^\star_{h+1,\theta_k}}(S_{k,h},A_{k,h})}{H},\\
\sigma_{k,h}^2(\theta)
&:=\frac{\operatorname{Var}_{P_h^\theta}
(V^\star_{h+1,\theta_k})}{H^2},
\end{aligned}
\]
where the variance is evaluated at $(S_{k,h},A_{k,h})$. Then
\[
\mathbb E\!\left[
\frac{V^\star_{h+1,\theta_k}(S_{k,h+1})}{H}
\,\middle|\,\mathcal F_k,\theta^\star,\theta_k,S_{k,1:h},A_{k,1:h}\right]=
\langle\theta_h^\star,x_{k,h}\rangle .
\]
A candidate $\vartheta\in\mathbb R^d$ for the stage parameter $\theta_h$
therefore predicts the normalized next-state value by
$\langle\vartheta,x_{k,h}\rangle$. Two candidates
disagree by $|\langle\vartheta-\vartheta',x_{k,h}\rangle|$; the confidence
width is the largest such disagreement among candidates in the set.

\paragraph{Variance-adaptive confidence width.}
We seek to bound the cumulative disagreement using the actual noise
variances $\sigma_{k,h}^2(\theta)$. This retains the information needed for
Bellman variance closure, which would be lost by replacing every variance
with its worst-case bound of one. We adapt the concentration and weighting
tools of \citep{zhao2023} to obtain the following result. Write
$\Lambda_K=c[1+\log(c d H K(1+B))]$ for a sufficiently large universal $c$.

\begin{lemma}[Sampled-value confidence width]
\label{lem:linear-sampled-width}
There exist confidence sets $\mathcal C_{k,h}$ based only on the history before
 episode $k$, without using $\theta_k$, such that for every fixed admissible
$\theta$, with
probability at least $1-(4HK)^{-2}$,
$\theta_h\in\mathcal C_{k,h}$ for all $k,h$ and, simultaneously for all $h$,
\begin{equation}
\sum_{k=1}^K
\Bigl[1\wedge\sup_{\vartheta,\vartheta'\in\mathcal C_{k,h}}
|\langle\vartheta-\vartheta',x_{k,h}\rangle|\Bigr]
\le Cd\Lambda_K^4
\biggl(\sqrt{\sum_{k=1}^K\sigma_{k,h}^2(\theta)}+1\biggr).
\label{lin:eq:main-width}
\end{equation}
\end{lemma}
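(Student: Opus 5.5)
We construct $\mathcal C_{k,h}$ by variance-weighted ridge regression in the style of \citep{zhao2023}, using only pre-episode-$k$ data and no information about $\theta_k$. For each past episode $\ell<k$ and stage $h$, the regression pair is $(x_{\ell,h}, y_{\ell,h})$ with $y_{\ell,h}:=V^\star_{h+1,\theta_\ell}(S_{\ell,h+1})/H$. Conditionally on the history through $(S_{\ell,h},A_{\ell,h})$ and on $\theta_\ell$, the noise $y_{\ell,h}-\langle\theta_h,x_{\ell,h}\rangle$ is a martingale difference bounded by one. Its conditional variance is $\sigma_{\ell,h}^2(\theta)$, because the direction $V^\star_{h+1,\theta_\ell}$ is fixed before the transition. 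Since the true variance is unknown, we estimate it from a second regression on the squared values, $\phi_{h,(V^\star_{h+1,\theta_\ell})^2}/H^2$, and set the weight $\bar\sigma_{\ell,h}^2$ to the maximum of this estimate plus its confidence bonus, the floor $1/\sqrt d$ (or a similar floor), and an uncertainty-based term $\|x_{\ell,h}\|_{\Sigma^{-1}}$, as in the weighted OFUL+ scheme. Then $\Sigma_{k,h}=\lambda I+\sum_{\ell<k}\bar\sigma_{\ell,h}^{-2}x_{\ell,h}x_{\ell,h}^\top$, $\widehat\theta_{k,h}$ is the weighted ridge estimate, and $\mathcal C_{k,h}=\{\vartheta:\|\vartheta-\widehat\theta_{k,h}\|_{\Sigma_{k,h}}\le\gamma_K\}$ with $\gamma_K$ polylogarithmic (times $\sqrt d$, or $O(1)$ after the Bernstein self-normalized bound, with $\lambda$ chosen relative to $B$).

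Coverage is a fixed-$\theta$ statement. Under $\mathbb P_\theta$, we apply the Bernstein-type self-normalized martingale inequality of \citep{zhao2023}, which needs only bounded weighted noise with weighted conditional variance at most one. This inequality is valid when the weights are predictable, and here they are, since they depend on past data and the current feature. Together with a union bound over $h$ and a peeling over $k$, this gives $\theta_h\in\mathcal C_{k,h}$ for all $k,h$ with probability $1-(4HK)^{-2}$ after adjusting $\Lambda_K$. A nested event controls the variance regression so that $\bar\sigma_{\ell,h}^2\ge\sigma_{\ell,h}^2(\theta)$ up to constants, and also that $\bar\sigma_{\ell,h}^2$ exceeds $\sigma^2_{\ell,h}(\theta)$ by at most a bonus whose sum is controlled.

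For the width, any two members of the ellipsoid satisfy $\sup|\langle\vartheta-\vartheta',x_{k,h}\rangle|\le2\gamma_K\|x_{k,h}\|_{\Sigma_{k,h}^{-1}}$. We write $\|x\|_{\Sigma^{-1}}=\bar\sigma_{k,h}\|x/\bar\sigma_{k,h}\|_{\Sigma^{-1}}$ and split episodes into two cases. In the first, the uncertainty term dominates the weight, so $\|x/\bar\sigma\|_{\Sigma^{-1}}$ is large. The elliptical potential lemma limits the number of such episodes to $\widetilde O(d)$, and each contributes at most $1$ because of the truncation $1\wedge$. In the second, Cauchy--Schwarz gives $\sum_k\bar\sigma_{k,h}\min(1,\|x/\bar\sigma\|_{\Sigma^{-1}})\le\sqrt{\sum_k\bar\sigma_{k,h}^2}\sqrt{2d\log(\cdot)}$. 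We then bound $\sum_k\bar\sigma_{k,h}^2\le\sum_k\sigma_{k,h}^2(\theta)+K/\sqrt d$ plus the variance-estimation bonuses. This is the main obstacle: a floor term such as $K/\sqrt d$ would destroy the $\sqrt{\sum\sigma^2}+1$ form. We will therefore remove it by choosing a floor that yields an additive $d$-polylog term instead, as in \citep{zhao2023}, where the floor is chosen so that its contribution is $\widetilde O(d)$. The estimation bonuses are handled self-referentially: they are themselves widths of the squared-value regression and satisfy an analogous bound in $\sqrt{\sum\bar\sigma^2}$. Solving the resulting quadratic inequality $X\le a\sqrt X+b$ gives $\sqrt{\sum\bar\sigma^2}\lesssim\sqrt{\sum\sigma^2}+d\,\mathrm{polylog}$. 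Multiplying by $\gamma_K\sqrt d\,\Lambda_K$ and collecting the logarithmic factors into $\Lambda_K^4$ yields \eqref{lin:eq:main-width}, simultaneously for all $h$ on the same event.
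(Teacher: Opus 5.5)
Your construction cannot deliver the stated bound. The failure is in the additive term, and it comes from the route itself rather than from loose bookkeeping. For an ellipsoid around a weighted ridge estimate whose normalized conditional variances may be as large as one, the radius must be $\gamma_K\asymp\sqrt d$ up to logarithms, because the squared self-normalized noise then has mean of order $d$. The Bernstein bounds of \citep{zhou2021,zhougu2022} carry exactly this $\sqrt{d\log}$ factor, so the $O(1)$ radius you allow for is not available. Your own intermediate bound is $\sqrt{\sum_k\bar\sigma_{k,h}^2}\lesssim\sqrt{\sum_k\sigma_{k,h}^2(\theta)}+d\,\mathrm{polylog}$. After multiplying by $\gamma_K\sqrt d\,\Lambda_K$ it becomes $d\,\mathrm{polylog}\,(\sqrt{\sum_k\sigma_{k,h}^2}+d)$. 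That is an additive $d^2$, where the lemma requires $Cd\Lambda_K^4$ (for instance when every $\sigma_{k,h}^2(\theta)$ vanishes).

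The $d^2$ is caused by explicit variance estimation. To keep $\bar\sigma_k^2\ge\sigma_k^2$, the bonus must contain the mean width $\gamma_K\|x_k\|_{\Sigma_k^{-1}}=\gamma_K\bar\sigma_ku_k$, with $u_k:=\|x_k/\bar\sigma_k\|_{\Sigma_k^{-1}}$. When this term sets the weight, $\bar\sigma_k\approx\gamma_Ku_k$ and the width is of order $\gamma_K^2u_k^2$. The elliptical potential bounds the sum of these only by $\gamma_K^2d\asymp d^2$. This is attained: along one repeated direction with zero noise, the bonus-driven weights give $\Sigma_{n+1}=\Sigma_n+\Sigma_n^{1/2}/\gamma_K$. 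The widths $\gamma_K\Sigma_n^{-1/2}$ then decay only like $2\gamma_K^2/n$ and sum to order $\gamma_K^2\log K$ per direction.

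Two smaller points also need repair. First, if the uncertainty term sets the weight, then $\|x/\bar\sigma\|_{\Sigma^{-1}}^2=\|x\|_{\Sigma^{-1}}$, which need not be large, so your first case is misargued. Its correct contribution is of order $\gamma_Kd$, which is again $d^{3/2}$ unless that term is rescaled as in \citep{zhougu2022}. Second, the self-referential treatment of the squared-value bonuses requires weighting that regression with the same $\bar\sigma$. This is justified by $\operatorname{Var}(V^2)\le4H^2\operatorname{Var}(V)$ for $V\in[0,H]$; an unweighted set there contributes $\widetilde O(d\sqrt K)$, which is incompatible with the $\sqrt{\sum\sigma^2}+1$ form.

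The missing idea is what the paper takes from \citep{zhao2023}: use no floor and no variance estimate at all.
\begin{itemize}
\item \textbf{Layering.} Each observation goes to the first geometric scale $\ell$ with $\|x_k\|_{\Sigma_{k,\ell}^{-1}}>2^{-\ell}$, with weight $w_k=2^{-\ell}/\|x_k\|_{\Sigma_{k,\ell}^{-1}}$, so that $\|w_kx_k\|_{\Sigma_{k,\ell}^{-1}}=2^{-\ell}$ exactly.
\item \textbf{Noise bound.} This uniform bound on normalized feature uncertainty is what Theorem~2.1 of \citep{zhao2023} requires, and your weights do not provide it. With it, the theorem gives the noise bound $2^{-\ell}(\sqrt{\Lambda_K\sum_iw_i^2\sigma_i^2}+\Lambda_K)$, with no $\sqrt d$.
\item \textbf{Unknown variances.} Each candidate $\vartheta$ is tested against its own weighted squared residuals $Q_{k,\ell}(\vartheta)$. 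Coverage of the truth uses the scalar supermartingale comparison $\sum_iw_i^2\sigma_i^2\le2Q_{k,\ell}(\theta_h)+2\Lambda_K$. A deterministic candidate-distance bound then closes at scales with $2^\ell\ge128\sqrt{\Lambda_K}$.
\item \textbf{Where $d$ enters.} The dimension appears only once, through $|I_\ell|2^{-2\ell}\le Cd\Lambda_K$. This cancels the $2^{-2(\ell-1)}$ width inherited from the preceding scale.
\end{itemize}

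Your route, executed carefully, gives at best $\widetilde O(d\sqrt{\sum_k\sigma_{k,h}^2}+d^2)$. That weaker form would still suffice for Theorem~\ref{thm:linear-mixture}, whose proof already absorbs a $d^2H^2$ term, but it does not prove the lemma as stated.
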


Appendix~\ref{lin:subsec:width-construction} gives the confidence-set
construction and proof.

\paragraph{Applying the width bound to PSRL.}
The sets use only the history before the fresh posterior draw. Conditional
on this history, the true and sampled parameters have the same law, so
posterior calibration transfers coverage to the sample without any
independence assumption on the prior. When both parameters lie in the set,
the transition Bellman error
\[
e^P_{k,h}:=H\langle\theta_{k,h}-\theta_h^\star,x_{k,h}\rangle
\]
is bounded in absolute value by $H$ times the clipped width in
\eqref{lin:eq:main-width}.

\paragraph{Cross-model variance closure.}
For $\theta=\theta^\star$, the normalized variance
$\sigma_{k,h}^2(\theta^\star)$ in \eqref{lin:eq:main-width} corresponds to
$\operatorname{Var}_{P_h^{\theta^\star}}(V^\star_{h+1,\theta_k})$: the
transition comes from the true model while the value comes from the sampled
model. UCRL-VTR$^+$ is a frequentist OFU algorithm whose variance analysis
relies on optimistic value estimates \citep[Lemma~21]{zhou2021}.
Posterior-sampled values need not be optimistic, so we instead control this
cross-model variance directly through the Bellman square telescope.

\begin{proposition}[Cross-model variance closure]
\label{prop:linear-cross-model-variance}
Let
$W_P:=\sum_{k,h}\mathbb E|e^P_{k,h}|$. Then
\[
\sum_{k,h}\mathbb E\,
\operatorname{Var}_{P_h^{\theta^\star}}
(V^\star_{h+1,\theta_k})
\le 2H^2K+2H W_P.
\]
\end{proposition}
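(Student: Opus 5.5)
The plan is to apply the Bellman square telescope (Lemma~\ref{lem:bellman-square-telescope}) episode by episode, in the linear-mixture setting, with $v_h:=V^\star_{h,\theta_k}$. This mirrors the first inequality of Proposition~\ref{prop:three-variance-bounds}. Fix $(\mathcal F_k,\theta^\star,\theta_k)$ and take expectations over the trajectory generated by $\pi_k$ in the true model $\theta^\star$. The identity \eqref{eq:bellman-square-telescope} uses only the Markov structure of the true transitions and the fact that $v_h$ is fixed under the conditioning. It therefore carries over verbatim, with $P_h^{M^\star}$ replaced by $P_h^{\theta^\star}$ and $\operatorname{Var}_{k,h}$ by $\operatorname{Var}_{P_h^{\theta^\star}}$ at $(S_{k,h},A_{k,h})$.

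Because $V^\star_{h,\theta_k}\in[0,H]$, the consequence \eqref{eq:bellman-square-consequence} gives
\[
\mathbb E\sum_h\operatorname{Var}_{P_h^{\theta^\star}}(V^\star_{h+1,\theta_k})
\le 2H\,\mathbb E\sum_h\bigl|V^\star_{h,\theta_k}(S_{k,h})-P_h^{\theta^\star}V^\star_{h+1,\theta_k}(S_{k,h},A_{k,h})\bigr|.
\]
Next, decompose the residual. Since $A_{k,h}=\pi_k(S_{k,h})$ is greedy in $\theta_k$ and rewards are known and deterministic, the Bellman equation in $\theta_k$ gives
\[
V^\star_{h,\theta_k}(S_{k,h})=r_h(S_{k,h},A_{k,h})+P_h^{\theta_k}V^\star_{h+1,\theta_k}(S_{k,h},A_{k,h}).
\]
Hence the residual equals
\[
r_h(S_{k,h},A_{k,h})+(P_h^{\theta_k}-P_h^{\theta^\star})V^\star_{h+1,\theta_k}(S_{k,h},A_{k,h})=r_h+e^P_{k,h},
\]
because $(P_h^{\theta_k}-P_h^{\theta^\star})V=\langle\theta_{k,h}-\theta^\star_h,\phi_{h,V}\rangle=H\langle\theta_{k,h}-\theta^\star_h,x_{k,h}\rangle$. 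Using $0\le r_h\le1$, the residual's absolute value is at most $1+|e^P_{k,h}|$. This gives a per-episode bound of $2H\cdot H+2H\,\mathbb E\sum_h|e^P_{k,h}|$.

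Finally, take total expectation and sum over $k=1,\dots,K$ to obtain $2H^2K+2HW_P$. There is no real obstacle. The only points to check are that the telescope applies even though $v$ comes from the sampled model, which holds because $\theta_k$ is fixed under the conditioning, and that the tie-breaking optimal policy makes the Bellman equation hold with equality at the played action. Unlike the tabular difference direction, we keep the crude $r_h\le1$ baseline here, since the stated bound allows the $H^2K$ term.
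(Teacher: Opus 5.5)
Your proof is correct and follows the paper's route. You condition on $(\mathcal F_k,\theta^\star,\theta_k)$, apply the Bellman square telescope with $v_h=V^\star_{h,\theta_k}$, and use the sampled-model Bellman equation at the played action to identify the one-step residual as $r_h+e^P_{k,h}$; bounding with $a^2-b^2\le 2H|a-b|$ and $0\le r_h\le 1$ and then summing over episodes is exactly the paper's computation, mirroring the sampled-value bound in Proposition~\ref{prop:three-variance-bounds}.
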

Thus the cross-model variance is controlled by the same transition errors
that appear in the regret, giving the linear-mixture counterpart of the
variance closure in Section~\ref{sec:variance-control}.

\paragraph{Closing the self-bound.}
Combining Lemma~\ref{lem:linear-sampled-width} with
Proposition~\ref{prop:linear-cross-model-variance} and summing over stages gives
\[
W_P
\le
C d\sqrt{H^3K}\,\Lambda_K^4
+C dH \Lambda_K^4\sqrt{W_P}
+C dH^2\Lambda_K^4.
\]
Young's inequality absorbs the middle term. For large enough $K$ this yields
the leading $\widetilde O(d\sqrt{H^3K})$ rate directly; for smaller $K$, the
trivial bound $\operatorname{BReg}_\rho(K)\le HK$ is already no larger than
the same order. This proves Theorem~\ref{thm:linear-mixture}. Direct candidate
tests provide the local width; posterior calibration and Bellman-square
control of the cross-model variance complete the PSRL argument.

\section{Discussion and Limitations}
\label{sec:discussion}

We showed that exact vanilla PSRL matches the minimax leading Bayesian-regret
rate under arbitrary joint priors in time-inhomogeneous tabular MDPs and
attains the minimax rate in linear-mixture MDPs. In the tabular case, a common
empirical transition reference confines value-uniform control to one
sampled--true value mismatch, which is then closed by a Bellman variance
argument. In linear-mixture MDPs, variance-adaptive control in the sampled
value directions and the same Bellman variance closure play the corresponding
roles. Together, these arguments resolve the dependence between a
posterior-sampled model and its own value function without requiring prior
factorization or posterior regularity.

The tabular bound retains a lower-order finite-sample correction and therefore
does not match the exact minimax curve uniformly over all $K$, while the
linear-mixture result assumes known deterministic rewards and standard
boundedness conditions. These restrictions do not affect the minimax leading
rates, but they clarify the scope of the current guarantees. The appendices
provide homogeneous specializations and model-independent initial-state
extensions; natural further directions include removing the tabular
correction and extending the analysis to approximate inference, approximate
planning, and richer structured model classes.

\bibliographystyle{plainnat}
\bibliography{references}

@article{jaksch2010,
  title   = {Near-Optimal Regret Bounds for Reinforcement Learning},
  author  = {Jaksch, Thomas and Ortner, Ronald and Auer, Peter},
  journal = {Journal of Machine Learning Research},
  volume  = {11},
  pages   = {1563--1600},
  year    = {2010}
}

@inproceedings{agrawaljia2017,
  title     = {Optimistic Posterior Sampling for Reinforcement Learning: Worst-Case Regret Bounds},
  author    = {Agrawal, Shipra and Jia, Randy},
  booktitle = {Advances in Neural Information Processing Systems},
  volume    = {30},
  pages     = {1184--1194},
  year      = {2017}
}

@inproceedings{azar2017,
  title     = {Minimax Regret Bounds for Reinforcement Learning},
  author    = {Azar, Mohammad Gheshlaghi and Osband, Ian and Munos, R{\'e}mi},
  booktitle = {Proceedings of the 34th International Conference on Machine Learning},
  series    = {Proceedings of Machine Learning Research},
  volume    = {70},
  pages     = {263--272},
  year      = {2017}
}

@inproceedings{ayoub2020,
  title     = {Model-Based Reinforcement Learning with Value-Targeted Regression},
  author    = {Ayoub, Alex and Jia, Zeyu and Szepesv{\'a}ri, Csaba and Wang, Mengdi and Yang, Lin},
  booktitle = {Proceedings of the 37th International Conference on Machine Learning},
  series    = {Proceedings of Machine Learning Research},
  volume    = {119},
  pages     = {463--474},
  year      = {2020}
}

@inproceedings{fanming2021,
  title     = {Model-Based Reinforcement Learning for Continuous Control with Posterior Sampling},
  author    = {Fan, Ying and Ming, Yifei},
  booktitle = {Proceedings of the 38th International Conference on Machine Learning},
  series    = {Proceedings of Machine Learning Research},
  volume    = {139},
  pages     = {3078--3087},
  year      = {2021}
}

@book{boucheron2013,
  title     = {Concentration Inequalities: A Nonasymptotic Theory of Independence},
  author    = {Boucheron, St{\'e}phane and Lugosi, G{\'a}bor and Massart, Pascal},
  publisher = {Oxford University Press},
  year      = {2013}
}

@inproceedings{dann2021,
  title     = {A Provably Efficient Model-Free Posterior Sampling Method for Episodic Reinforcement Learning},
  author    = {Dann, Christoph and Mohri, Mehryar and Zhang, Tong and Zimmert, Julian},
  booktitle = {Advances in Neural Information Processing Systems},
  volume    = {34},
  pages     = {12040--12051},
  year      = {2021}
}

@inproceedings{domingues2021,
  title     = {Episodic Reinforcement Learning in Finite {MDPs}: Minimax Lower Bounds Revisited},
  author    = {Domingues, Omar Darwiche and M{\'e}nard, Pierre and Kaufmann, Emilie and Valko, Michal},
  booktitle = {Proceedings of the 32nd International Conference on Algorithmic Learning Theory},
  series    = {Proceedings of Machine Learning Research},
  volume    = {132},
  pages     = {578--598},
  year      = {2021}
}

@inproceedings{hao2022,
  title     = {Regret Bounds for Information-Directed Reinforcement Learning},
  author    = {Hao, Botao and Lattimore, Tor},
  booktitle = {Advances in Neural Information Processing Systems},
  volume    = {35},
  pages     = {28575--28587},
  year      = {2022}
}

@inproceedings{hong2022,
  title     = {{Thompson} Sampling with a Mixture Prior},
  author    = {Hong, Joey and Kveton, Branislav and Zaheer, Manzil and Ghavamzadeh, Mohammad and Boutilier, Craig},
  booktitle = {Proceedings of the 25th International Conference on Artificial Intelligence and Statistics},
  series    = {Proceedings of Machine Learning Research},
  volume    = {151},
  pages     = {7565--7586},
  year      = {2022}
}

@inproceedings{jafarnia2023,
  title     = {Posterior Sampling-Based Online Learning for the Stochastic Shortest Path Model},
  author    = {Jafarnia-Jahromi, Mehdi and Chen, Liyu and Jain, Rahul and Luo, Haipeng},
  booktitle = {Proceedings of the Thirty-Ninth Conference on Uncertainty in Artificial Intelligence},
  series    = {Proceedings of Machine Learning Research},
  volume    = {216},
  pages     = {922--931},
  year      = {2023}
}

@inproceedings{li2024,
  title     = {Prior-Dependent Analysis of Posterior Sampling Reinforcement Learning with Function Approximation},
  author    = {Li, Yingru and Luo, Zhiquan},
  booktitle = {Proceedings of the 27th International Conference on Artificial Intelligence and Statistics},
  series    = {Proceedings of Machine Learning Research},
  volume    = {238},
  pages     = {559--567},
  year      = {2024}
}

@inproceedings{lu2019,
  title     = {Information-Theoretic Confidence Bounds for Reinforcement Learning},
  author    = {Lu, Xiuyuan and Van Roy, Benjamin},
  booktitle = {Advances in Neural Information Processing Systems},
  volume    = {32},
  pages     = {2458--2466},
  year      = {2019}
}

@inproceedings{maurer2009,
  title     = {Empirical {Bernstein} Bounds and Sample-Variance Penalization},
  author    = {Maurer, Andreas and Pontil, Massimiliano},
  booktitle = {Proceedings of the 22nd Annual Conference on Learning Theory},
  year      = {2009}
}

@inproceedings{moradipari2023,
  title     = {Improved {Bayesian} Regret Bounds for {Thompson} Sampling in Reinforcement Learning},
  author    = {Moradipari, Ahmadreza and Pedramfar, Mohammad and Shokrian Zini, Modjtaba and Aggarwal, Vaneet},
  booktitle = {Advances in Neural Information Processing Systems},
  volume    = {36},
  pages     = {23557--23569},
  year      = {2023}
}

@inproceedings{osband2013,
  title     = {{(More)} Efficient Reinforcement Learning via Posterior Sampling},
  author    = {Osband, Ian and Russo, Daniel and Van Roy, Benjamin},
  booktitle = {Advances in Neural Information Processing Systems},
  volume    = {26},
  pages     = {3003--3011},
  year      = {2013}
}

@inproceedings{osband2014,
  title     = {Model-Based Reinforcement Learning and the Eluder Dimension},
  author    = {Osband, Ian and Van Roy, Benjamin},
  booktitle = {Advances in Neural Information Processing Systems},
  volume    = {27},
  pages     = {1466--1474},
  year      = {2014}
}

@inproceedings{osband2017,
  title     = {Why Is Posterior Sampling Better than Optimism for Reinforcement Learning?},
  author    = {Osband, Ian and Van Roy, Benjamin},
  booktitle = {Proceedings of the 34th International Conference on Machine Learning},
  series    = {Proceedings of Machine Learning Research},
  volume    = {70},
  pages     = {2701--2710},
  year      = {2017}
}

@article{russo2018,
  title   = {Learning to Optimize via Information-Directed Sampling},
  author  = {Russo, Daniel and Van Roy, Benjamin},
  journal = {Operations Research},
  volume  = {66},
  number  = {1},
  pages   = {230--252},
  year    = {2018}
}

@article{russo2016,
  title   = {An Information-Theoretic Analysis of {Thompson} Sampling},
  author  = {Russo, Daniel and Van Roy, Benjamin},
  journal = {Journal of Machine Learning Research},
  volume  = {17},
  number  = {68},
  pages   = {1--30},
  year    = {2016}
}

@inproceedings{strens2000,
  title     = {A {Bayesian} Framework for Reinforcement Learning},
  author    = {Strens, Malcolm J. A.},
  booktitle = {Proceedings of the 17th International Conference on Machine Learning},
  pages     = {943--950},
  year      = {2000}
}

@inproceedings{tiapkin2022bayesucb,
  title     = {From {Dirichlet} to {Rubin}: Optimistic Exploration in {RL} without Bonuses},
  author    = {Tiapkin, Daniil and Belomestny, Denis and Moulines, {\'E}ric and Naumov, Alexey and Samsonov, Sergey and Tang, Yunhao and Valko, Michal and M{\'e}nard, Pierre},
  booktitle = {Proceedings of the 39th International Conference on Machine Learning},
  series    = {Proceedings of Machine Learning Research},
  volume    = {162},
  pages     = {21380--21431},
  year      = {2022}
}

@inproceedings{tiapkin2022opsrl,
  title     = {Optimistic Posterior Sampling for Reinforcement Learning with Few Samples and Tight Guarantees},
  author    = {Tiapkin, Daniil and Belomestny, Denis and Calandriello, Daniele and Moulines, {\'E}ric and Munos, R{\'e}mi and Naumov, Alexey and Rowland, Mark and Valko, Michal and M{\'e}nard, Pierre},
  booktitle = {Advances in Neural Information Processing Systems},
  volume    = {35},
  pages     = {10737--10751},
  year      = {2022}
}

@article{thompson1933,
  title   = {On the Likelihood that One Unknown Probability Exceeds Another in View of the Evidence of Two Samples},
  author  = {Thompson, William R.},
  journal = {Biometrika},
  volume  = {25},
  number  = {3/4},
  pages   = {285--294},
  year    = {1933}
}

@article{zhang2022feelgood,
  title   = {Feel-Good {Thompson} Sampling for Contextual Bandits and Reinforcement Learning},
  author  = {Zhang, Tong},
  journal = {SIAM Journal on Mathematics of Data Science},
  volume  = {4},
  number  = {2},
  pages   = {834--857},
  year    = {2022}
}

@inproceedings{zhang2024,
  title     = {Settling the Sample Complexity of Online Reinforcement Learning},
  author    = {Zhang, Zihan and Chen, Yuxin and Lee, Jason D. and Du, Simon S.},
  booktitle = {Proceedings of the 37th Conference on Learning Theory},
  series    = {Proceedings of Machine Learning Research},
  volume    = {247},
  pages     = {5213--5219},
  year      = {2024}
}

@inproceedings{zhou2021,
  title     = {Nearly Minimax Optimal Reinforcement Learning for Linear Mixture {Markov} Decision Processes},
  author    = {Zhou, Dongruo and Gu, Quanquan and Szepesv{\'a}ri, Csaba},
  booktitle = {Proceedings of the 34th Conference on Learning Theory},
  series    = {Proceedings of Machine Learning Research},
  volume    = {134},
  pages     = {4532--4576},
  year      = {2021}
}

@inproceedings{zhao2023,
  title     = {Variance-Dependent Regret Bounds for Linear Bandits and Reinforcement Learning: Adaptivity and Computational Efficiency},
  author    = {Zhao, Heyang and He, Jiafan and Zhou, Dongruo and Zhang, Tong and Gu, Quanquan},
  booktitle = {Proceedings of the 36th Conference on Learning Theory},
  series    = {Proceedings of Machine Learning Research},
  volume    = {195},
  pages     = {4977--5020},
  year      = {2023}
}

@inproceedings{zhougu2022,
  title     = {Computationally Efficient Horizon-Free Reinforcement Learning for Linear Mixture {MDP}s},
  author    = {Zhou, Dongruo and Gu, Quanquan},
  booktitle = {Advances in Neural Information Processing Systems},
  volume    = {35},
  pages     = {36337--36349},
  year      = {2022}
}

@inproceedings{agarwalcmdp2022,
  title     = {Regret Guarantees for Model-Based Reinforcement Learning with Long-Term Average Constraints},
  author    = {Agarwal, Mridul and Bai, Qinbo and Aggarwal, Vaneet},
  booktitle = {Proceedings of the Thirty-Eighth Conference on Uncertainty in Artificial Intelligence},
  series    = {Proceedings of Machine Learning Research},
  volume    = {180},
  pages     = {22--31},
  year      = {2022}
}

@inproceedings{bayrooti2025,
  title     = {No-Regret {Thompson} Sampling for Finite-Horizon {Markov} Decision Processes with {Gaussian} Processes},
  author    = {Bayrooti, Jasmine and Vakili, Sattar and Prorok, Amanda and Ek, Carl Henrik},
  booktitle = {Advances in Neural Information Processing Systems},
  volume    = {38},
  pages     = {81043--81071},
  doi       = {10.52202/085713-2443},
  year      = {2025}
}

@article{robert2025,
  title   = {Posterior Sampling for Reinforcement Learning on Graphs},
  author  = {Robert, Arnaud and Faisal, Aldo A. and Pike-Burke, Ciara},
  journal = {Transactions on Machine Learning Research},
  url     = {https://openreview.net/forum?id=kd6CfmdPfX},
  year    = {2025}
}

@inproceedings{agnihotri2026,
  title     = {Best Policy Learning from Trajectory Preference Feedback},
  author    = {Agnihotri, Akhil and Jain, Rahul and Ramachandran, Deepak and Wen, Zheng},
  booktitle = {Proceedings of the 29th International Conference on Artificial Intelligence and Statistics},
  series    = {Proceedings of Machine Learning Research},
  volume    = {300},
  pages     = {2116--2124},
  year      = {2026}
}

@inproceedings{flynn2026,
  title     = {Posterior Sampling Reinforcement Learning with {Gaussian} Processes for Continuous Control: Sublinear Regret Bounds for Unbounded State Spaces},
  author    = {Flynn, Hamish and Watson, Joe and Posner, Ingmar and Peters, Jan},
  booktitle = {Proceedings of the 43rd International Conference on Machine Learning},
  note      = {Accepted; preprint arXiv:2603.08287},
  url       = {https://arxiv.org/abs/2603.08287},
  year      = {2026}
}

\clearpage
\appendix

\let\addtocontents\appendixaddtocontents
\setcounter{tocdepth}{2}
\begingroup
\renewcommand{\contentsname}{Contents of the Appendix}
\hypersetup{linktoc=all}
\pdfbookmark[1]{Contents of the Appendix}{appendix-contents}
\tableofcontents
\endgroup
\clearpage

\section{Proof of the Tabular Result}
\label{app:tabular-proof}

We give a complete proof of Theorem~\ref{thm:arbitrary-correlated-prior},
including the model and algorithm conventions, regret identities, confidence-set
construction, local error and variance bounds, and visit-count aggregation.
Each result is stated before its proof; results from the main text retain
their original numbers.

\paragraph{Setting and notation.}
Let $|\mathcal S|=S$, $|\mathcal A|=A$, and let each episode have $H$
stages and initial state $s_1$. A model $M$ specifies a joint law
$J_h^M(\cdot\mid s,a)$ on $[0,1]\times\mathcal S$ for every $(h,s,a)$.
Conditional on $M$, each visit produces a fresh reward--next-state draw from
this law. Write $\nu_h^M$ and $P_h^M$ for its reward and transition marginals,
$r_h^M(s,a):=\mathbb E_M[R_h\mid s,a]$, and
$P_h^Mv(s,a):=\sum_{s'}P_h^M(s'\mid s,a)v(s')$.
For a stage-dependent Markov policy $\pi$, put
\[
V^\pi_{h,M}(s):=\mathbb E_{M,\pi}\!\left[\sum_{t=h}^H R_t\,\middle|\,S_h=s\right],
\qquad V^\star_{h,M}(s):=\max_\pi V^\pi_{h,M}(s),
\qquad V^\pi_{H+1,M}=V^\star_{H+1,M}=0.
\]
Thus all value functions lie in $[0,H]$.

Draw $M^\star\sim\rho$ from an arbitrary joint prior on these row laws.
The history $\mathcal F_k$ contains all observations and algorithmic
randomness before episode $k$. With
$\rho_k:=\mathbb P(M^\star\in\cdot\mid\mathcal F_k)$,
PSRL draws $M_k\sim\rho_k$ using fresh randomness and follows an optimal
policy $\pi_k=\pi^\star_{M_k}$ throughout the episode. Let
$(S_{k,h},A_{k,h})$ denote this trajectory in $M^\star$, and define
\[
\mathbb E_k[\cdot]:=\mathbb E[\cdot\mid\mathcal F_k],\qquad
\Delta_k:=V^\star_{1,M^\star}(s_1)-V^{\pi_k}_{1,M^\star}(s_1),\qquad
\operatorname{BReg}_\rho(K):=\mathbb E\sum_{k=1}^K\Delta_k.
\]
The expectation in Bayesian regret includes the prior draw, posterior draws,
and trajectories. Set
\[
L_K:=2+\log K,\qquad \beta_K:=\log(c_0S^2AHK^2),
\]
where $c_0$ is sufficiently large. The symbol $C$ denotes a universal positive
constant that may change between displays.

\paragraph{Prior and conditional-law conventions.}
Equip the space of probability measures on $[0,1]\times\mathcal S$ with the
Borel $\sigma$-field generated by weak convergence. Each row law
$J_h^M(\cdot\mid s,a)$ is an element of this space, and the finite product
over $(h,s,a)$ is standard Borel. An arbitrary prior is any Borel probability
measure $\rho$ on this space; no density, domination, factorization, or
posterior-consistency condition is assumed. Each finite history, including
previous posterior draws and algorithmic randomness, takes values in a
standard Borel space, and its law is a measurable kernel of $M$. Hence a
regular conditional posterior
$\rho_k=\mathbb P(M^\star\in\cdot\mid\mathcal F_k)$ exists. We fix one version
and interpret all conditional identities almost surely. On prior-predictive
null histories, this version and the sampled policy may be chosen arbitrarily;
the fixed-model concentration below holds for any such adaptive choices, and
Bayesian regret is unchanged.

Fresh algorithmic randomness draws $M_k$ from $\rho_k$ independently of
$M^\star$ given $\mathcal F_k$. Fixing an order on the finite action set and
using it to break every Bellman maximization makes
$M\mapsto V_{h,M}^\star$ and $M\mapsto\pi_M^\star$ measurable by backward
induction. All values lie in $[0,H]$, so the conditional expectations below
are finite. The same convention applies to the linear-mixture parameter class
satisfying the validity and boundedness conditions stated in
Appendix~\ref{app:linear-mixture}, viewed as a Borel subset of a
finite-dimensional Euclidean space.

\subsection{Regret Decomposition}
\label{app:technical-identities}

At each visited row, define
\[
\begin{aligned}
e^r_{k,h}&:=r_h^{M_k}(S_{k,h},A_{k,h})-r_h^{M^\star}(S_{k,h},A_{k,h}),\\
e^P_{k,h}&:=(P_h^{M_k}-P_h^{M^\star})V^\star_{h+1,M_k}(S_{k,h},A_{k,h}).
\end{aligned}
\]

\begin{lemma}[Posterior-sampling and simulation identities]
\label{lem:posterior-simulation-identities}
For every episode $k$,
\[
\mathbb E_k[\Delta_k]
=\mathbb E_k\!\left[V^{\pi_k}_{1,M_k}(s_1)-V^{\pi_k}_{1,M^\star}(s_1)\right],
\]
and, conditional on $(\mathcal F_k,M^\star,M_k)$,
\[
V^{\pi_k}_{1,M_k}(s_1)-V^{\pi_k}_{1,M^\star}(s_1)
=\mathbb E\!\left[\sum_{h=1}^H(e^r_{k,h}+e^P_{k,h})
\,\middle|\,\mathcal F_k,M^\star,M_k\right].
\]
\end{lemma}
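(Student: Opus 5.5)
The proof has two parts, mirroring the two displays.

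\emph{Posterior-sampling identity.} We first write
\[
\Delta_k=\bigl[V^\star_{1,M^\star}(s_1)-V^\star_{1,M_k}(s_1)\bigr]+\bigl[V^\star_{1,M_k}(s_1)-V^{\pi_k}_{1,M^\star}(s_1)\bigr].
\]
Take $g(M):=V^\star_{1,M}(s_1)$. This is a bounded function with values in $[0,H]$, and it is measurable by the tie-breaking convention. Because $M_k$ and $M^\star$ have the same regular conditional law $\rho_k$ given $\mathcal F_k$, we get $\mathbb E_k[g(M^\star)]=\mathbb E_k[g(M_k)]$, so the first bracket vanishes in conditional expectation. In the second bracket, $\pi_k=\pi^\star_{M_k}$ gives $V^\star_{1,M_k}=V^{\pi_k}_{1,M_k}$. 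This yields the first identity. The only subtlety is that the identity must be applied to a function of a single model, not to the pair. That is fine here because $g$ depends on one model only.

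\emph{Simulation identity.} Fix $(\mathcal F_k,M^\star,M_k)$, so $\pi_k$ is a fixed deterministic Markov policy. Write $a=\pi_{k,h}(s)$. The Bellman equations for the same policy in the two models are
\[
V^{\pi_k}_{h,M}(s)=r^M_h(s,a)+P^M_hV^{\pi_k}_{h+1,M}(s,a),\qquad M\in\{M_k,M^\star\}.
\]
Let $D_h:=V^{\pi_k}_{h,M_k}-V^{\pi_k}_{h,M^\star}$. Adding and subtracting $P^{M^\star}_hV^{\pi_k}_{h+1,M_k}$ gives
\[
D_h(s)=e^r_h(s,a)+(P^{M_k}_h-P^{M^\star}_h)V^{\pi_k}_{h+1,M_k}(s,a)+P^{M^\star}_hD_{h+1}(s,a).
\]
Since $\pi_k$ is optimal in $M_k$, $V^{\pi_k}_{h+1,M_k}=V^\star_{h+1,M_k}$, so the middle term is exactly $e^P_{k,h}$ evaluated at $(s,a)$.

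We now evaluate along the true-model trajectory. Conditional on $(\mathcal F_k,M^\star,M_k,S_{k,1:h},A_{k,1:h})$, the next state $S_{k,h+1}$ has law $P^{M^\star}_h(\cdot\mid S_{k,h},A_{k,h})$. This holds because the episode-$k$ observations are fresh draws from $J^{M^\star}$ given the model, regardless of the joint prior and of $M_k$, which is drawn with independent algorithmic randomness. Hence
\[
P^{M^\star}_hD_{h+1}(S_{k,h},A_{k,h})=\mathbb E\bigl[D_{h+1}(S_{k,h+1})\mid\cdot\bigr].
\]
Iterating by the tower property from $h=1$ to $H$ and using $D_{H+1}=0$ gives
\[
D_1(s_1)=\mathbb E\Bigl[\sum_h(e^r_{k,h}+e^P_{k,h})\,\Big|\,\mathcal F_k,M^\star,M_k\Bigr].
\]

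The main point to verify carefully is this conditional Markov property under the joint conditioning on the sampled model. Everything else is finite, bounded algebra, with all terms in $[-H,H]$, so the expectations exist.
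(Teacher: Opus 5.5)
Your proposal is correct and follows essentially the same route as the paper. The first identity applies the posterior-sampling identity to the single-model functional $V^\star_{1,M}(s_1)$ and then uses $V^\star_{1,M_k}=V^{\pi_k}_{1,M_k}$. The second subtracts the two policy Bellman equations, replaces $V^{\pi_k}_{h+1,M_k}$ by $V^\star_{h+1,M_k}$, and telescopes along the true-model trajectory using the conditional Markov property given $(\mathcal F_k,M^\star,M_k)$, which the paper uses only implicitly.
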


\begin{proof}
\emph{Posterior sampling.}
Conditional on $\mathcal F_k$,
\[
M^\star\sim\rho_k,
\qquad
M_k\sim\rho_k,
\qquad
\pi_k=\pi_{M_k}^\star.
\]
Therefore
\[
\begin{aligned}
\mathbb E_k[V^\star_{1,M^\star}(s_1)]
&=\mathbb E_k[V^\star_{1,M_k}(s_1)]
=\mathbb E_k[V^{\pi_k}_{1,M_k}(s_1)],\\
\mathbb E_k[\Delta_k]
&=\mathbb E_k[V^\star_{1,M^\star}(s_1)]
-\mathbb E_k[V^{\pi_k}_{1,M^\star}(s_1)]
=\mathbb E_k[V^{\pi_k}_{1,M_k}(s_1)
-V^{\pi_k}_{1,M^\star}(s_1)].
\end{aligned}
\]
No componentwise equality of $M^\star$ and $M_k$ is used.

\emph{Simulation.}
Following \citep[Section~5.1]{osband2013}, fix
$(\mathcal F_k,M^\star,M_k)$. Since $\pi_k$ is optimal in $M_k$, for
$a=\pi_{k,h}(s)$,
\[
\begin{aligned}
V^{\pi_k}_{h,M_k}(s)-V^{\pi_k}_{h,M^\star}(s)
&={}r_h^{M_k}(s,a)-r_h^{M^\star}(s,a)
+(P_h^{M_k}-P_h^{M^\star})V^\star_{h+1,M_k}(s,a)
\\&\quad+P_h^{M^\star}
(V^{\pi_k}_{h+1,M_k}-V^{\pi_k}_{h+1,M^\star})(s,a).
\end{aligned}
\]
At $(S_{k,h},A_{k,h})$,
\[
\begin{aligned}
&\mathbb E\!\left[
V^{\pi_k}_{h,M_k}(S_{k,h})-V^{\pi_k}_{h,M^\star}(S_{k,h})
\mid\mathcal F_k,M^\star,M_k\right]\\
&\quad=\mathbb E\!\left[e^r_{k,h}+e^P_{k,h}
\mid\mathcal F_k,M^\star,M_k\right]
+\mathbb E\!\left[
V^{\pi_k}_{h+1,M_k}(S_{k,h+1})
-V^{\pi_k}_{h+1,M^\star}(S_{k,h+1})
\mid\mathcal F_k,M^\star,M_k\right].
\end{aligned}
\]
Summing over $h$, using $S_{k,1}=s_1$ and the zero terminal values,
gives the second identity.
\end{proof}

\subsection{Confidence-Set Construction and Coverage}
\label{app:concentration-calibration}
\label{app:technical-concentration}

Write $\mathbb P_M$ for the law of the observations and algorithmic
randomness when the environment is fixed to $M$.
For a row $x=(h,s,a)$, expose its observations in visit order as
$(R_{x,j},Y_{x,j})_{j\ge1}$ and define
\[
N_k(x):=\sum_{\ell<k}
\mathbf1\{(S_{\ell,h},A_{\ell,h})=(s,a)\},
\quad
\widehat r_{x,n}:=\frac1n\sum_{j=1}^nR_{x,j},
\quad
\widehat p_{x,n}(i):=\frac1n\sum_{j=1}^n\mathbf1\{Y_{x,j}=i\}.
\]
For $n\ge2$, define the empirical variance by
\[
\widehat{\operatorname{Var}}_{x,n}(v)
:=\frac{1}{n(n-1)}\sum_{1\le i<j\le n}
\bigl(v(Y_{x,i})-v(Y_{x,j})\bigr)^2.
\]
Define $\mathcal C_k$ as the set of candidate models $M$ satisfying, for every row
$x=(h,s,a)$, every prefix $1\le n\le N_k(x)$, and every $i\in\mathcal S$,
\begin{align}
|\widehat p_{x,n}(i)-P_h^M(i\mid s,a)|
&\le
\sqrt{\frac{2P_h^M(i\mid s,a)\beta_K}{n}}
+\frac{2\beta_K}{3n},
\label{eq:coordinate-confidence}\\
|\widehat r_{x,n}-r_h^M(s,a)|
&\le \sqrt{\frac{\beta_K}{2n}},
\label{eq:reward-confidence}
\end{align}
and, whenever $n\ge2$,
\begin{equation}
\left|(P_h^M-\widehat p_{x,n})V^\star_{h+1,M}\right|
\le
C\sqrt{\frac{\beta_K}{n}
\widehat{\operatorname{Var}}_{x,n}(V^\star_{h+1,M})}
+\frac{CH\beta_K}{n}.
\label{eq:complete-candidate-aligned-confidence}
\end{equation}
An unobserved row imposes no condition, and for $n=1$ only
\eqref{eq:coordinate-confidence}--\eqref{eq:reward-confidence} are imposed;
the candidate-aligned empirical-variance test starts at $n=2$. Thus
$\mathcal C_k$ is determined entirely by $\mathcal F_k$. The omitted
candidate-aligned test at counts zero and one is harmless because those visits
are handled directly by the small-count part of
Lemma~\ref{lem:local-reference-bridge}.

The constant $C$ in
\eqref{eq:complete-candidate-aligned-confidence} is fixed once and for all to a valid
universal constant in the empirical Bernstein inequality.

\begin{restatement}{Lemma}{lem:confidence-set-coverage}{Confidence-set coverage}
For every fixed model $M$,
\[
\mathbb P_M\!\left(
M\notin\mathcal C_k\ \text{for some }k\le K
\right)
\le\frac{1}{4K}.
\]
Under any joint prior, let
$\mathcal G_k:=\{M^\star\in\mathcal C_k\}\cap\{M_k\in\mathcal C_k\}$.
Then
\[
\mathbb P(M_k\notin\mathcal C_k)
=
\mathbb P(M^\star\notin\mathcal C_k),
\qquad
\sum_{k=1}^K\mathbb P(\mathcal G_k^c)\le1.
\]
\end{restatement}

\begin{proof}
\emph{Fixed-model coverage.}
Fix $M$. For each row $x=(h,s,a)$, expose an infinite sequence
$(R_{x,j},Y_{x,j})_{j\ge1}$ of independent draws from
$J_h^M(\cdot\mid s,a)$. Adaptive visitation changes only which row reveals its
next unused pair. Thus, for every fixed $x$, $i\in\mathcal S$, and $n\le K$,
\[
\begin{gathered}
\mathbf1\{Y_{x,1}=i\},\ldots,\mathbf1\{Y_{x,n}=i\}
\stackrel{\mathrm{iid}}{\sim}
\mathrm{Bernoulli}(P_h^M(i\mid s,a)),\\
R_{x,1},\ldots,R_{x,n}
\stackrel{\mathrm{iid}}{\sim}\nu_h^M(\cdot\mid s,a),\\
V^\star_{h+1,M}(Y_{x,1}),\ldots,V^\star_{h+1,M}(Y_{x,n})
\text{ are i.i.d. in }[0,H],\\
\mathbb E_M[V^\star_{h+1,M}(Y_{x,j})]
=P_h^M V^\star_{h+1,M}(s,a).
\end{gathered}
\]
Since $M$ is fixed, $V^\star_{h+1,M}$ is deterministic in these
concentration inequalities. If
$v(Y_{x,1}),\ldots,v(Y_{x,n})$ are i.i.d. in $[0,H]$ under a fixed model and
$n\ge2$, empirical Bernstein gives \citep{maurer2009,boucheron2013}, with
probability at least $1-Ce^{-\beta_K}$,
\[
\begin{aligned}
\left|P_h^M v(s,a)-\frac1n\sum_{j=1}^n v(Y_{x,j})\right|
&\le C\sqrt{\frac{\beta_K}{n^2(n-1)}
\sum_{i<j}(v(Y_{x,i})-v(Y_{x,j}))^2}
+\frac{CH\beta_K}{n}\\
&=C\sqrt{\frac{\beta_K}{n}
\widehat{\operatorname{Var}}_{x,n}(v)}
+\frac{CH\beta_K}{n}.
\end{aligned}
\]
Here $\beta_K=\log(c_0S^2AHK^2)$ is chosen so that a union bound over all
rows, coordinates, and prefixes has failure probability at most $1/(4K)$
for sufficiently large $c_0$.
Setting $v=V^\star_{h+1,M}$ gives
\eqref{eq:complete-candidate-aligned-confidence}.
Applying Bernstein's inequality to the next-state indicators
(whose variance is at most $P_h^M(i\mid s,a)$), Hoeffding's inequality to the
$[0,1]$-valued rewards, and the empirical Bernstein bound above to
$V^\star_{h+1,M}(Y_{x,j})$ gives
\[
\begin{aligned}
\mathbb P_M(\text{coordinate condition fails at }x,i,n)
&\le2e^{-\beta_K},\\
\mathbb P_M(\text{reward condition fails at }x,n)
&\le2e^{-\beta_K},\\
\mathbb P_M(\text{candidate-aligned condition fails at }x,n)
&\le Ce^{-\beta_K}.
\end{aligned}
\]
A finite union bound therefore gives
\[
\mathbb P_M\!\left(M\notin\mathcal C_k
\text{ for some }k\le K\right)
\le C(SAH)(S+2)K e^{-\beta_K}
\le CS^2AHK e^{-\beta_K}
\le\frac1{4K},
\]
after increasing $c_0$. No union bound over the model space is used.

Within a fixed row,
\[
(R_{x,j},Y_{x,j})_{j\ge1}\stackrel{\mathrm{iid}}{\sim}
J_h^M(\cdot\mid s,a),
\qquad
R_{x,j}\sim\nu_h^M(\cdot\mid s,a),
\qquad
Y_{x,j}\sim P_h^M(\cdot\mid s,a).
\]
Thus the reward and next-state concentration arguments require no independence
between $R_{x,j}$ and $Y_{x,j}$ within a visit.

\emph{Posterior calibration.}
Since $\mathcal C_k$ is determined by $\mathcal F_k$,
\[
\mathbb P(M_k\notin\mathcal C_k\mid\mathcal F_k)
=\int\mathbf1\{M\notin\mathcal C_k\}\,\rho_k(dM)
=\mathbb P(M^\star\notin\mathcal C_k\mid\mathcal F_k)
\qquad\text{a.s.}
\]
Moreover,
\[
\mathbb P(M^\star\notin\mathcal C_k
\text{ for some }k\le K)
=\int
\mathbb P_M(M\notin\mathcal C_k
\text{ for some }k\le K)\,\rho(dM)
\le\frac1{4K},
\]
so
\[
\begin{aligned}
\sum_{k=1}^K\mathbb P(\mathcal G_k^c)
&\le\sum_{k=1}^K\{
\mathbb P(M^\star\notin\mathcal C_k)
+\mathbb P(M_k\notin\mathcal C_k)\}
=2\sum_{k=1}^K\mathbb P(M^\star\notin\mathcal C_k)\\
&\le2K\,\mathbb P(M^\star\notin\mathcal C_k
\text{ for some }k\le K)\le\frac12\le1.
\end{aligned}
\]
This proves the posterior claim without posterior contraction or factorization.
\end{proof}

\subsection{Empirical reference and variance closure}
\label{app:reference-bridge}

For a distribution $p$ on $\mathcal S$, write
$\operatorname{Var}_p(v):=\sum_i p(i)(v(i)-p^\top v)^2$.
At a visited row, let
\[
N_{k,h}:=N_k\bigl((h,S_{k,h},A_{k,h})\bigr),\qquad
\operatorname{Var}_{k,h}(v):=
\operatorname{Var}_{P_h^{M^\star}(\cdot\mid S_{k,h},A_{k,h})}(v).
\]
For $N_{k,h}>0$, take the common empirical reference
$\widehat p=\widehat p_{(h,S_{k,h},A_{k,h}),N_{k,h}}$.
Evaluating every kernel at $(S_{k,h},A_{k,h})$ gives
\[
e^P_{k,h}
=(P_h^{M_k}-\widehat p)V^\star_{h+1,M_k}
+(\widehat p-P_h^{M^\star})V^\star_{h+1,M^\star}
+(\widehat p-P_h^{M^\star})
(V^\star_{h+1,M_k}-V^\star_{h+1,M^\star}).
\]
The first two terms have candidate-aligned directions. The last term requires
the following uniform bound.

\begin{restatement}{Lemma}{lem:projection-variance-transfer}{Uniform error and variance bounds}
If $M\in\mathcal C_k$, then for every row $x=(h,s,a)$, prefix
$1\le n\le N_k(x)$, and $v\in[-H,H]^S$, setting
$p=P_h^M(\cdot\mid s,a)$ and $\widehat p=\widehat p_{x,n}$ gives
\[
|(\widehat p-p)^\top v|
\le
C\sqrt{\frac{S\beta_K}{n}\operatorname{Var}_p(v)}
+\frac{CSH\beta_K}{n}.
\]
For $n\ge2$, also
\[
\frac{n}{n-1}\operatorname{Var}_{\widehat p}(v)
\le
C\left(\operatorname{Var}_p(v)+\frac{SH^2\beta_K}{n}\right).
\]
\end{restatement}

\begin{proof}
Since $(\widehat p-p)^\top\mathbf1=0$,
\[
| (\widehat p-p)^\top v |
\le\sum_i|\widehat p(i)-p(i)|\,|v(i)-p^\top v|.
\]
Since $M\in\mathcal C_k$, the coordinate condition in its definition gives
$|\widehat p(i)-p(i)|\le\sqrt{2p(i)\beta_K/n}+2\beta_K/(3n)$ for every $i$.
Hence
\[
| (\widehat p-p)^\top v |
\le\sqrt{\frac{2\beta_K}{n}}
\sum_i\sqrt{p(i)}\,|v(i)-p^\top v|
+\frac{2\beta_K}{3n}\sum_i|v(i)-p^\top v|.
\]
By Cauchy--Schwarz for the first sum and
$|v(i)-p^\top v|\le2H$ for each term of the second sum, the right-hand side
is at most
\[
\sqrt{\frac{2\beta_K}{n}}
\sqrt{S\sum_i p(i)(v(i)-p^\top v)^2}
+\frac{4SH\beta_K}{3n}
\le C\sqrt{\frac{S\beta_K\operatorname{Var}_p(v)}{n}}
+\frac{CSH\beta_K}{n}.
\]
For the fixed row, write $Y_i=Y_{x,i}$. For $n\ge2$,
\[
\frac1{n(n-1)}\sum_{i<j}(v(Y_i)-v(Y_j))^2
=\frac{n}{n-1}\operatorname{Var}_{\widehat p}(v)
\le2\operatorname{Var}_{\widehat p}(v)
\le2\widehat p^\top(v-(p^\top v)\mathbf1)^2.
\]
The last inequality holds because the empirical mean $\widehat p^\top v$
minimizes the empirical squared deviation over all constant centers.
Adding and subtracting the true expectation of the same centered square gives
\[
\widehat p^\top\bigl(v-(p^\top v)\mathbf1\bigr)^2
=\operatorname{Var}_p(v)
+(\widehat p-p)^\top\bigl(v-(p^\top v)\mathbf1\bigr)^2.
\]
Thus the remaining term is the difference between the empirical and true
expectations of this centered square. By the coordinate confidence condition,
\[
\left|(\widehat p-p)^\top
\bigl(v-(p^\top v)\mathbf1\bigr)^2\right|
\le
\sqrt{\frac{2\beta_K}{n}}
\sum_i\sqrt{p(i)}\bigl(v(i)-p^\top v\bigr)^2
+\frac{2\beta_K}{3n}
\sum_i\bigl(v(i)-p^\top v\bigr)^2.
\]
By Cauchy--Schwarz, followed by
$(v(i)-p^\top v)^4\le4H^2(v(i)-p^\top v)^2$,
\[
\sum_i\sqrt{p(i)}(v(i)-p^\top v)^2
\le\sqrt{S\sum_i p(i)(v(i)-p^\top v)^4}
\le2H\sqrt{S\operatorname{Var}_p(v)}.
\]
Using this bound and $\sum_i(v(i)-p^\top v)^2\le4SH^2$, then applying
Young's inequality, yields
\[
\left|(\widehat p-p)^\top\bigl(v-(p^\top v)\mathbf1\bigr)^2\right|
\le2H\sqrt{\frac{2S\beta_K\operatorname{Var}_p(v)}{n}}
+\frac{8H^2S\beta_K}{3n}
\le C\operatorname{Var}_p(v)+\frac{CH^2S\beta_K}{n}.
\]
Therefore
\[
\frac{n}{n-1}\operatorname{Var}_{\widehat p}(v)
\le C\operatorname{Var}_p(v)+\frac{CH^2S\beta_K}{n}.
\]
\end{proof}

\begin{restatement}{Lemma}{lem:local-reference-bridge}{Local reference bound}
On $\mathcal G_k$, every realized row satisfies
\begin{align*}
|e^P_{k,h}|&\le
C\sqrt{\frac{\beta_K}{N_{k,h}\vee1}
\left(\Var_{k,h}(V^\star_{h+1,M_k})
+\Var_{k,h}(V^\star_{h+1,M^\star})\right)}\\
&\quad+C\sqrt{\frac{S\beta_K}{N_{k,h}\vee1}
\Var_{k,h}
(V^\star_{h+1,M_k}-V^\star_{h+1,M^\star})}
+\frac{CSH\beta_K}{N_{k,h}\vee1},\\
|e^r_{k,h}|
&\le
C\sqrt{\frac{\beta_K}{N_{k,h}\vee1}}.
\end{align*}
\end{restatement}

\begin{proof}
First consider $N_{k,h}\ge2$ on $\mathcal G_k$. At
$x=(h,S_{k,h},A_{k,h})$, let $\widehat p=\widehat p_{x,N_{k,h}}$.
Using $M_k\in\mathcal C_k$ for
the sample-aligned condition and $M=M^\star\in\mathcal C_k$ in
Lemma~\ref{lem:projection-variance-transfer} gives
\[
\begin{aligned}
\left|(P_h^{M_k}-\widehat p)V^\star_{h+1,M_k}\right|
&\le C\sqrt{\frac{\beta_K}{N_{k,h}}
\widehat{\operatorname{Var}}_{x,N_{k,h}}(V^\star_{h+1,M_k})}
+\frac{CH\beta_K}{N_{k,h}},\\
\widehat{\operatorname{Var}}_{x,N_{k,h}}(V^\star_{h+1,M_k})
&\le C\operatorname{Var}_{k,h}(V^\star_{h+1,M_k})
+\frac{CH^2S\beta_K}{N_{k,h}},\\
\left|(P_h^{M_k}-\widehat p)V^\star_{h+1,M_k}\right|
&\le C\sqrt{\frac{\beta_K}{N_{k,h}}}
\sqrt{\operatorname{Var}_{k,h}(V^\star_{h+1,M_k})}
+\frac{CSH\beta_K}{N_{k,h}}.
\end{aligned}
\]
The same calculation with $M^\star$ gives
\[
\begin{aligned}
\left|(\widehat p-P_h^{M^\star})V^\star_{h+1,M^\star}\right|
&\le C\sqrt{\frac{\beta_K}{N_{k,h}}}
\sqrt{\operatorname{Var}_{k,h}(V^\star_{h+1,M^\star})}
+\frac{CSH\beta_K}{N_{k,h}}.
\end{aligned}
\]
For the mismatch direction,
\[
\left|(\widehat p-P_h^{M^\star})
(V^\star_{h+1,M_k}-V^\star_{h+1,M^\star})\right|
\le C\sqrt{\frac{S\beta_K}{N_{k,h}}}
\sqrt{\operatorname{Var}_{k,h}
(V^\star_{h+1,M_k}-V^\star_{h+1,M^\star})}
+\frac{CSH\beta_K}{N_{k,h}}.
\]
Adding the three terms in the empirical-reference decomposition proves the
transition bound in Lemma~\ref{lem:local-reference-bridge}. For the reward error,
\[
|e^r_{k,h}|
\le\left|r_h^{M_k}-\frac1{N_{k,h}}
\sum_{j=1}^{N_{k,h}}R_{x,j}\right|
+\left|r_h^{M^\star}-\frac1{N_{k,h}}
\sum_{j=1}^{N_{k,h}}R_{x,j}\right|
\le C\sqrt{\frac{\beta_K}{N_{k,h}}}.
\]
If $N_{k,h}\le1$,
\[
|e^P_{k,h}|\le H
\le\frac{CSH\beta_K}{N_{k,h}\vee1},
\qquad
|e^r_{k,h}|\le1
\le C\sqrt{\frac{\beta_K}{N_{k,h}\vee1}}.
\]
Thus both bounds hold for every visit.
\end{proof}

For the next two results, fix $(\mathcal F_k,M^\star,M_k)$ and take
expectations only over the trajectory generated by $\pi_k$ in $M^\star$.

\begin{restatement}{Lemma}{lem:bellman-square-telescope}{Bellman square telescope}
Let $v_h:\mathcal S\to\mathbb R$ be fixed under this conditioning, with
$v_{H+1}=0$. Then
\[
\mathbb E\sum_{h=1}^H\operatorname{Var}_{k,h}(v_{h+1})
=-v_1(s_1)^2
+\mathbb E\sum_{h=1}^H\Bigl[v_h(S_{k,h})^2-
\bigl(P_h^{M^\star}v_{h+1}(S_{k,h},A_{k,h})\bigr)^2
\Bigr].
\]
Consequently, if $|v_h|\le H$, then
\[
\mathbb E\sum_{h=1}^H\Var_{k,h}(v_{h+1})
\le 2H\,\mathbb E\sum_{h=1}^H
\Bigl|v_h(S_{k,h})-P_h^{M^\star}v_{h+1}(S_{k,h},A_{k,h})\Bigr|.
\]
\end{restatement}

\begin{proof}
Expanding each conditional variance and using the transition law gives
\[
\begin{aligned}
\mathbb E\sum_{h=1}^H\operatorname{Var}_{k,h}(v_{h+1})
&=\mathbb E\sum_{h=1}^H
\left[v_{h+1}(S_{k,h+1})^2-
\bigl(P_h^{M^\star}v_{h+1}(S_{k,h},A_{k,h})\bigr)^2\right]\\
&=-v_1(s_1)^2+\mathbb E\sum_{h=1}^H
\left[v_h(S_{k,h})^2-
\bigl(P_h^{M^\star}v_{h+1}(S_{k,h},A_{k,h})\bigr)^2\right].
\end{aligned}
\]
For the second equality, shifting the index in the first squared term gives
\[
\sum_{h=1}^H v_{h+1}(S_{k,h+1})^2
=\sum_{h=1}^H v_h(S_{k,h})^2-v_1(s_1)^2
+\underbrace{v_{H+1}(S_{k,H+1})^2}_{=0}.
\]
The terms at stages $2,\ldots,H$ occur in both sums; the initial term is
removed and the terminal term vanishes because $S_{k,1}=s_1$ and $v_{H+1}=0$.
If $|v_h|\le H$, then
$|P_h^{M^\star}v_{h+1}|\le H$, so each squared difference is at most
$2H|v_h-P_h^{M^\star}v_{h+1}|$. Dropping $-v_1(s_1)^2\le0$ gives the
claimed inequality.
\end{proof}

\begin{restatement}{Proposition}{prop:three-variance-bounds}{Three variance bounds}
For every episode $k$, with all sums below taken over $h=1,\ldots,H$,
\[
\begin{aligned}
\mathbb E\sum_h\operatorname{Var}_{k,h}(V^\star_{h+1,M_k})
&\le2H^2+2H\,\mathbb E\sum_h|e^P_{k,h}|,\\
\mathbb E\sum_h\operatorname{Var}_{k,h}(V^\star_{h+1,M^\star})
&\le2H^2+2H\Delta_k,\\
\mathbb E\sum_h\operatorname{Var}_{k,h}
(V^\star_{h+1,M_k}-V^\star_{h+1,M^\star})
&\le2H\left[
\mathbb E\sum_h(|e^P_{k,h}|+|e^r_{k,h}|)+\Delta_k
\right].
\end{aligned}
\]
\end{restatement}

\begin{proof}
All three variances are taken under the true transition $P_h^{M^\star}$,
including those involving the sampled value. Below, reward and transition
expressions are evaluated at $(S_{k,h},A_{k,h})$, and current-stage values
at $S_{k,h}$. Lemma~\ref{lem:bellman-square-telescope} bounds each cumulative
variance by $2H$ times the expected sum of the corresponding absolute
one-step residuals, as we now spell out.

\emph{Sampled value.} Expanding the variance gives
\[
\operatorname{Var}_{k,h}(V^\star_{h+1,M_k})
=P_h^{M^\star}\bigl[(V^\star_{h+1,M_k})^2\bigr]
-\bigl(P_h^{M^\star}V^\star_{h+1,M_k}\bigr)^2.
\]
Because $\pi_k$ is optimal in $M_k$, its Bellman equation and the definition
of $e^P_{k,h}$ yield
\[
V^\star_{h,M_k}-P_h^{M^\star}V^\star_{h+1,M_k}
=r_h^{M_k}+e^P_{k,h}.
\]
Apply Lemma~\ref{lem:bellman-square-telescope} with $v_h=V^\star_{h,M_k}$,
which lies in $[0,H]$ and vanishes at $H+1$:
\[
\begin{aligned}
\mathbb E\sum_h\operatorname{Var}_{k,h}(V^\star_{h+1,M_k})
&\le2H\,\mathbb E\sum_h
\bigl|V^\star_{h,M_k}-P_h^{M^\star}V^\star_{h+1,M_k}\bigr|\\
&=2H\,\mathbb E\sum_h|r_h^{M_k}+e^P_{k,h}|
\le2H^2+2H\,\mathbb E\sum_h|e^P_{k,h}|,
\end{aligned}
\]
where the last inequality uses $0\le r_h^{M_k}\le1$.

\emph{True optimal value.} Here
\[
\operatorname{Var}_{k,h}(V^\star_{h+1,M^\star})
=P_h^{M^\star}\bigl[(V^\star_{h+1,M^\star})^2\bigr]
-\bigl(P_h^{M^\star}V^\star_{h+1,M^\star}\bigr)^2.
\]
The action $A_{k,h}$ need not be optimal in $M^\star$. Bellman optimality
and telescoping in expectation give
\[
\begin{aligned}
&V^\star_{h,M^\star}(S_{k,h})-r_h^{M^\star}(S_{k,h},A_{k,h})
-P_h^{M^\star}V^\star_{h+1,M^\star}(S_{k,h},A_{k,h})\ge0,\\
&\mathbb E\sum_{h=1}^H\left[
V^\star_{h,M^\star}(S_{k,h})-r_h^{M^\star}(S_{k,h},A_{k,h})
-P_h^{M^\star}V^\star_{h+1,M^\star}(S_{k,h},A_{k,h})
\right]
=V^\star_{1,M^\star}(s_1)
-V^{\pi_k}_{1,M^\star}(s_1)
=\Delta_k.
\end{aligned}
\]
In particular, the residual $V^\star_{h,M^\star}-P_h^{M^\star}V^\star_{h+1,M^\star}$
is the nonnegative reward plus the nonnegative Bellman gap above. Applying
Lemma~\ref{lem:bellman-square-telescope} with $v_h=V^\star_{h,M^\star}$ gives
\[
\begin{aligned}
\mathbb E\sum_h\operatorname{Var}_{k,h}(V^\star_{h+1,M^\star})
&\le2H\,\mathbb E\sum_h
\bigl(V^\star_{h,M^\star}-P_h^{M^\star}V^\star_{h+1,M^\star}\bigr)\\
&=2H\,\mathbb E\sum_h r_h^{M^\star}+2H\Delta_k
\le2H^2+2H\Delta_k.
\end{aligned}
\]

\emph{Value difference.} Expanding this variance as a single squared
difference gives
\[
\begin{aligned}
\operatorname{Var}_{k,h}(V^\star_{h+1,M_k}-V^\star_{h+1,M^\star})
&=P_h^{M^\star}\bigl[(V^\star_{h+1,M_k}-V^\star_{h+1,M^\star})^2\bigr]\\
&\quad-\bigl[P_h^{M^\star}(V^\star_{h+1,M_k}-V^\star_{h+1,M^\star})\bigr]^2.
\end{aligned}
\]
Subtracting the two value residuals above yields
\[
\begin{aligned}
&\bigl(V^\star_{h,M_k}-V^\star_{h,M^\star}\bigr)(S_{k,h})
-P_h^{M^\star}\bigl(V^\star_{h+1,M_k}-V^\star_{h+1,M^\star}\bigr)
(S_{k,h},A_{k,h})\\
&\quad=e^r_{k,h}+e^P_{k,h}
-\left[V^\star_{h,M^\star}-r_h^{M^\star}
-P_h^{M^\star}V^\star_{h+1,M^\star}\right].
\end{aligned}
\]
Apply Lemma~\ref{lem:bellman-square-telescope} with
$v_h=V^\star_{h,M_k}-V^\star_{h,M^\star}$, which lies in $[-H,H]$ and
vanishes at $H+1$. The triangle inequality and the expected Bellman-gap
sum $\Delta_k$ then give
\[
\begin{aligned}
\mathbb E\sum_h\operatorname{Var}_{k,h}
(V^\star_{h+1,M_k}-V^\star_{h+1,M^\star})
&\le2H\,\mathbb E\sum_h
\Bigl|\bigl(V^\star_{h,M_k}-V^\star_{h,M^\star}\bigr)\\
&\hspace{4.5em}-P_h^{M^\star}
\bigl(V^\star_{h+1,M_k}-V^\star_{h+1,M^\star}\bigr)\Bigr|\\
&\le2H\left[
\mathbb E\sum_h(|e^r_{k,h}|+|e^P_{k,h}|)+\Delta_k\right].
\end{aligned}
\]
This proves all three inequalities in
Proposition~\ref{prop:three-variance-bounds}.
\end{proof}

\subsection{Visit-Count Sums}
\label{app:visit-count-sums}

\begin{lemma}[Visit-count sums]
\label{lem:visit-count-sums}
The visit counts satisfy, pathwise,
\begin{equation}
\label{eq:visit-count-bounds}
\sum_{k,h}\tfrac{1}{N_{k,h}\vee1}\le SAHL_K,\quad
\sum_{k,h}\tfrac{1}{\sqrt{N_{k,h}\vee1}}
\le2H\sqrt{SAK}.
\end{equation}
\end{lemma}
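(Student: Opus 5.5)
The plan is to prove both bounds pathwise by regrouping the double sum over $(k,h)$ according to the visited row $x=(h,s,a)$, and then using the fact that the pre-visit counts of a fixed row run through $0,1,2,\ldots$ in order. Fix a stage $h$ and a pair $(s,a)$. Let $n_x:=N_{K+1}(x)$ be the total number of visits to $x=(h,s,a)$ within the first $K$ episodes. The episodes $k$ with $(S_{k,h},A_{k,h})=(s,a)$ are visited in increasing order. On the $j$th such visit, $j=1,\ldots,n_x$, the pre-visit count is $N_{k,h}=j-1$, because each episode contributes at most one visit to row $x$ (the stage index is part of the row). Therefore
\[
\sum_{k,h}\frac{1}{N_{k,h}\vee1}=\sum_{x}\sum_{j=0}^{n_x-1}\frac{1}{j\vee1},
\qquad
\sum_{k,h}\frac{1}{\sqrt{N_{k,h}\vee1}}=\sum_{x}\sum_{j=0}^{n_x-1}\frac{1}{\sqrt{j\vee1}}.
\]
We also have $n_x\le K$ and $\sum_{s,a}n_{(h,s,a)}=K$ for each $h$, since each episode visits exactly one row per stage.

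For the first bound, the inner sum is $1+\sum_{j=1}^{n_x-1}1/j\le 2+\log K\le L_K$ whenever $n_x\ge1$, and it is zero otherwise. Summing over the $SAH$ rows gives $SAHL_K$.

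For the second bound, use $\sum_{j=0}^{n-1}(j\vee1)^{-1/2}=1+\sum_{j=1}^{n-1}j^{-1/2}\le 1+2\sqrt{n-1}\le 2\sqrt n$ for $n\ge1$. The last inequality, $1+2\sqrt{n-1}\le2\sqrt n$, is checked by squaring: it becomes $1+4\sqrt{n-1}+4(n-1)\le 4n$, i.e.\ $4\sqrt{n-1}\le 3$. This fails for $n\ge2$, so instead I will use the cruder but sufficient bound
\[
1+2\sqrt{n-1}\le 1+2\sqrt n\le 3\sqrt n .
\]
Then Cauchy--Schwarz over $(s,a)$ at fixed $h$ gives $\sum_{s,a}\sqrt{n_{(h,s,a)}}\le\sqrt{SA\cdot K}$. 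Summing over $h$ yields $3H\sqrt{SAK}$.

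To reach the stated constant $2$, I will sharpen the per-row inequality. For $j\ge1$, $j^{-1/2}\le 2(\sqrt j-\sqrt{j-1})$, so $\sum_{j=1}^{n-1}j^{-1/2}\le 2\sqrt{n-1}$, and the $j=0$ term costs $1$. So $2\sqrt n$ requires $1\le 2(\sqrt n-\sqrt{n-1})$, which holds only for $n\le 1.5625$. For larger $n$ I will instead bound the $j=0$ and $j=1$ terms together by $2$ and use $\sum_{j=2}^{n-1}j^{-1/2}\le 2(\sqrt{n-1}-1)$, giving a total of $2\sqrt{n-1}\le2\sqrt n$. The case $n=1$ gives $1\le2$, and $n=2$ gives $2\le 2\sqrt2$. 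This recovers $\sum_{k,h}(N_{k,h}\vee1)^{-1/2}\le 2H\sqrt{SAK}$ exactly.

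The only delicate point is this constant bookkeeping at small counts, where the $j=0$ term is clipped to $1$. Everything else is deterministic and holds for every realized trajectory, independent of the prior.
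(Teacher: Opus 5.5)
Your proof is correct and follows the paper's argument: you regroup by row so that the pre-visit counts run through $0,1,\ldots,n_x-1$, bound each row sum by $2+\log K$, and bound it by $2\sqrt{n_x}$ by handling the two clipped terms $j=0,1$ together and comparing the remainder with $\int_1^{n-1}t^{-1/2}\,dt$. You then apply Cauchy--Schwarz stage by stage using $\sum_{s,a}n_{(h,s,a)}=K$, which gives the same $2H\sqrt{SAK}$ as the paper's single application over all $SAH$ rows. In a final write-up you can drop the exploratory detours, namely the failed $1+2\sqrt{n-1}\le2\sqrt n$, the $3\sqrt n$ fallback, and the telescoping variant.
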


\begin{proof}
Each row $x=(h,s,a)$ can be visited at most once per episode because its
stage $h$ is fixed. Thus, if a row is visited $m$ times, then $m\le K$
and the pre-episode count on its $j$th visit is exactly $j-1$.
Unvisited rows contribute zero; for $m=1$, both row sums equal one.
For $m\ge2$, the first two denominators equal one. Since $t^{-1}$ and
$t^{-1/2}$ decrease, each later summand is at most the integral over the
preceding unit interval. Hence
\[
\begin{aligned}
\sum_{j=1}^{m}\frac1{(j-1)\vee1}
&=1+\sum_{j=1}^{m-1}\frac1j
\le2+\int_1^{m-1}\frac{dt}{t}
=2+\log(m-1)\le2+\log m,\\
\sum_{j=1}^{m}\frac1{\sqrt{(j-1)\vee1}}
&=1+\sum_{j=1}^{m-1}\frac1{\sqrt j}
\le2+\int_1^{m-1}\frac{dt}{\sqrt t}
=2\sqrt{m-1}\le2\sqrt m.
\end{aligned}
\]
Write $N_{K+1}(x)$ for the total visits to row $x$ in the $K$ episodes.
There are $SAH$ rows, each has at most $K$ visits, and exactly $KH$ visits
occur in total. Grouping the first sum by row therefore gives
\[
\sum_{k,h}\frac1{N_{k,h}\vee1}
\le\sum_{x:N_{K+1}(x)>0}\bigl(2+\log N_{K+1}(x)\bigr)
\le SAH(2+\log K)=SAHL_K.
\]
For the second sum, apply the row bound and then Cauchy--Schwarz to the
$SAH$ numbers $\sqrt{N_{K+1}(x)}$:
\[
\sum_{k,h}\frac1{\sqrt{N_{k,h}\vee1}}
\le2\sum_x\sqrt{N_{K+1}(x)}
\le2\sqrt{\Bigl(\sum_x1\Bigr)\Bigl(\sum_xN_{K+1}(x)\Bigr)}
=2\sqrt{SAH\cdot KH}=2H\sqrt{SAK}.
\]
Both inequalities hold pathwise for arbitrary adaptive policies.
\end{proof}

\subsection{Completing the Regret Bound}
\label{app:regret-closure}

\begin{restatement}{Theorem}{thm:arbitrary-correlated-prior}{Minimax tabular regret under an arbitrary prior}
For every prior $\rho$ and every $K\ge 1$, exact vanilla PSRL satisfies
\[
\operatorname{BReg}_{\rho}(K)
\le
C\sqrt{SAH^3K\beta_KL_K}
+CS^2AH^2\beta_KL_K.
\]
\end{restatement}

\begin{proof}
Define
\[
W_P:=\sum_{k,h}\mathbb E[\mathbf1_{\mathcal G_k}|e^P_{k,h}|],
\qquad
W_r:=\sum_{k,h}\mathbb E[\mathbf1_{\mathcal G_k}|e^r_{k,h}|].
\]
The expectations here include the prior, posterior draws, and trajectories.
The identities in Lemma~\ref{lem:posterior-simulation-identities} give
\[
\begin{aligned}
\operatorname{BReg}_{\rho}(K)
&=\sum_{k=1}^K\mathbb E\!\left[
V^{\pi_k}_{1,M_k}(s_1)-V^{\pi_k}_{1,M^\star}(s_1)\right]\\
&=\sum_{k=1}^K\mathbb E\!\left[
\mathbf1_{\mathcal G_k}
\bigl(V^{\pi_k}_{1,M_k}(s_1)-V^{\pi_k}_{1,M^\star}(s_1)\bigr)\right]
+\sum_{k=1}^K\mathbb E\!\left[
\mathbf1_{\mathcal G_k^c}
\bigl(V^{\pi_k}_{1,M_k}(s_1)-V^{\pi_k}_{1,M^\star}(s_1)\bigr)\right].
\end{aligned}
\]
Because $\mathcal G_k$ is fixed before the episode trajectory,
\[
\begin{aligned}
&\mathbb E\!\left[\mathbf1_{\mathcal G_k}
\bigl(V^{\pi_k}_{1,M_k}(s_1)-V^{\pi_k}_{1,M^\star}(s_1)\bigr)\right]
\\&\quad=\mathbb E\!\left[\mathbf1_{\mathcal G_k}
\mathbb E\!\left[\sum_{h=1}^H(e^P_{k,h}+e^r_{k,h})
\,\middle|\,\mathcal F_k,M^\star,M_k\right]\right]
=\mathbb E\!\left[\mathbf1_{\mathcal G_k}
\sum_{h=1}^H(e^P_{k,h}+e^r_{k,h})\right].
\end{aligned}
\]
Thus
\[
\operatorname{BReg}_{\rho}(K)
\le W_P+W_r+H\sum_{k=1}^K\mathbb P(\mathcal G_k^c)
\le W_P+W_r+H,
\]
and
\[
W_r\le C\sqrt{\beta_K}\sum_{k,h}
\mathbb E\!\left[\frac1{\sqrt{N_{k,h}\vee1}}\right]
\le CH\sqrt{SAK\beta_K}.
\]
For the sample-aligned term,
\[
\sum_{k,h}\mathbb E\!\left[
\mathbf1_{\mathcal G_k}
\sqrt{\frac{\beta_K}{N_{k,h}\vee1}}
\sqrt{\operatorname{Var}_{k,h}(V^\star_{h+1,M_k})}
\right]
\le\sqrt{SAH\beta_KL_K}
\sqrt{\sum_{k,h}\mathbb E\!\left[
\mathbf1_{\mathcal G_k}
\operatorname{Var}_{k,h}(V^\star_{h+1,M_k})\right]}.
\]
For the truth-aligned term,
\[
\sum_{k,h}\mathbb E\!\left[
\mathbf1_{\mathcal G_k}
\sqrt{\frac{\beta_K}{N_{k,h}\vee1}}
\sqrt{\operatorname{Var}_{k,h}(V^\star_{h+1,M^\star})}\right]
\le\sqrt{SAH\beta_KL_K}
\sqrt{\sum_{k,h}\mathbb E\!\left[
\mathbf1_{\mathcal G_k}
\operatorname{Var}_{k,h}(V^\star_{h+1,M^\star})\right]}.
\]
Hence the two aligned contributions are at most
\[
C\sqrt{SAH\beta_KL_K}\,\Bigg\{
\sum_{k,h}\mathbb E\!\left[
\mathbf1_{\mathcal G_k}\operatorname{Var}_{k,h}(V^\star_{h+1,M_k})\right]
+\sum_{k,h}\mathbb E\!\left[
\mathbf1_{\mathcal G_k}\operatorname{Var}_{k,h}(V^\star_{h+1,M^\star})\right]
\Bigg\}^{1/2}.
\]
The mismatch contribution is at most
\[
\begin{aligned}
&C\sqrt{S^2AH\beta_KL_K}
\Bigg\{\sum_{k,h}\mathbb E\!\left[
\mathbf1_{\mathcal G_k}\operatorname{Var}_{k,h}\!\left(
V^\star_{h+1,M_k}-V^\star_{h+1,M^\star}\right)
\right]\Bigg\}^{1/2},
\end{aligned}
\]
and
\[
CSH\beta_K\sum_{k,h}\frac1{N_{k,h}\vee1}
\le CS^2AH^2\beta_KL_K.
\]
By Proposition~\ref{prop:three-variance-bounds},
\[
\begin{aligned}
&\sum_{k,h}\mathbb E\!\left[\mathbf1_{\mathcal G_k}\{
\operatorname{Var}_{k,h}(V^\star_{h+1,M_k})
+\operatorname{Var}_{k,h}(V^\star_{h+1,M^\star})\}\right]
\le4H^2K+2H\{W_P+\operatorname{BReg}_{\rho}(K)\},\\
&\sum_{k,h}\mathbb E\!\left[
\mathbf1_{\mathcal G_k}\operatorname{Var}_{k,h}
(V^\star_{h+1,M_k}-V^\star_{h+1,M^\star})\right]
\le2H\{W_P+W_r+\operatorname{BReg}_{\rho}(K)\}.
\end{aligned}
\]
Therefore
\[
W_P
\le C\sqrt{SAH^3K\beta_KL_K}+CS^2AH^2\beta_KL_K
+C\sqrt{S^2AH^2\beta_KL_K
\{W_P+W_r+\operatorname{BReg}_{\rho}(K)\}}.
\]
Using the regret bound $\operatorname{BReg}_\rho(K)\le W_P+W_r+H$ above and Young's inequality,
\[
\begin{aligned}
&C\sqrt{S^2AH^2\beta_KL_K
\{W_P+W_r+\operatorname{BReg}_{\rho}(K)\}}
\le C\sqrt{S^2AH^2\beta_KL_K(2W_P+2W_r+H)}\\
&\quad\le\frac12W_P+CW_r+CS^2AH^2\beta_KL_K+CH.
\end{aligned}
\]
Hence
\[
\begin{aligned}
W_P
&\le C\sqrt{SAH^3K\beta_KL_K}+CW_r
+CS^2AH^2\beta_KL_K+CH,\\
\operatorname{BReg}_{\rho}(K)
&\le C\sqrt{SAH^3K\beta_KL_K}+CS^2AH^2\beta_KL_K
+CH\sqrt{SAK\beta_K}+CH.
\end{aligned}
\]
Since $S,A,H,K\ge1$ and $\beta_K,L_K\ge1$, the last two terms are
absorbed by the leading term after increasing $C$. This gives the stated
bound, whose right-hand side may also be capped by $HK$.
In particular, if $K\ge CS^3AH\beta_KL_K$,
\[
S^2AH^2\beta_KL_K
\le C\sqrt{SAH^3K\beta_KL_K},
\qquad
H\sqrt{SAK\beta_K}+H
\le C\sqrt{SAH^3K\beta_KL_K},
\]
so the leading term suffices in this regime.
\end{proof}

\subsection{Fixed versus Uniform Control}

\label{app:fixed-uniform-projection}

\begin{proposition}[Fixed versus uniform control]
\label{prop:fixed-versus-uniform}
Let $p$ be uniform on $\mathcal S$, with $S\ge2$, and let $\widehat p$ be the
empirical distribution of $n\ge1$ independent draws from $p$. For every fixed
$v\in[0,H]^S$,
\[
\mathbb E|(\widehat p-p)^\top v|
\le\sqrt{\frac{\operatorname{Var}_p(v)}{n}}
\le\frac{H}{2\sqrt n}.
\]
In contrast,
\[
\mathbb E\!\left[\sup_{v\in[0,H]^S}
|(\widehat p-p)^\top v|\right]
=\frac H2\mathbb E\|\widehat p-p\|_1
=\Theta\!\left(H\min\left\{1,\sqrt{\frac Sn}\right\}\right).
\]
\end{proposition}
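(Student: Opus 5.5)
For the fixed-direction bound I will use Jensen: $\mathbb E|(\widehat p-p)^\top v|\le\sqrt{\mathbb E[((\widehat p-p)^\top v)^2]}$. Write $\widehat p^\top v=n^{-1}\sum_{j=1}^n v(Y_j)$ with $Y_j$ i.i.d.\ uniform. Since $v$ is fixed, this is an average of i.i.d.\ variables with mean $p^\top v$, so the second moment equals $\operatorname{Var}_p(v)/n$. For the second inequality, a $[0,H]$-valued random variable has variance at most $H^2/4$ (Popoviciu), so $\sqrt{\operatorname{Var}_p(v)/n}\le H/(2\sqrt n)$.

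For the uniform statement I will first prove the deterministic identity. Let $d=\widehat p-p$, so $\sum_i d(i)=0$ and hence $\sum_{d(i)>0}d(i)=\sum_{d(i)<0}|d(i)|=\tfrac12\|d\|_1$. A linear functional on the box $[0,H]^S$ attains its maximum at a vertex. The maximum of $d^\top v$ takes $v=H\mathbf 1\{d>0\}$ and equals $\tfrac H2\|d\|_1$; the minimum is $-\tfrac H2\|d\|_1$ by symmetry. So $\sup_v|d^\top v|=\tfrac H2\|d\|_1$ pathwise, and it remains to show $\mathbb E\|\widehat p-p\|_1=\Theta(\min\{1,\sqrt{S/n}\})$.

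\emph{Upper bound.} We always have $\|d\|_1\le2$. Cauchy--Schwarz and Jensen give $\mathbb E\|d\|_1\le\sum_i\sqrt{\operatorname{Var}(\widehat p(i))}\le S\sqrt{(1/S)/n}=\sqrt{S/n}$.

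\emph{Lower bound.} This is the main obstacle, because it needs anticoncentration of each coordinate $n\widehat p(i)\sim\mathrm{Bin}(n,1/S)$. I will split into two regimes.

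If $n\ge S$, each coordinate has standard deviation $\sigma=\sqrt{n(1/S)(1-1/S)}\ge\sqrt{1/2}$ (using $S\ge2$). I will use the binomial mean absolute deviation bound $\mathbb E|\mathrm{Bin}-\text{mean}|\ge c\sigma$ for $\sigma$ bounded below. This can be obtained either from the Paley--Zygmund type inequality $\mathbb E|X|\ge(\mathbb E X^2)^{3/2}/(\mathbb E X^4)^{1/2}$, with the fourth central moment of a binomial bounded by $\sigma^2+3\sigma^4$, or from de Moivre's closed form. Summing over the $S$ coordinates gives $\mathbb E\|d\|_1\ge S\cdot c\sigma/n\ge c'\sqrt{S/n}$.

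If $n<S$, at most $n$ states are observed, so the unobserved states contribute at least $(S-n)/S$ to $\|d\|_1$. When $n\le S/2$ this is at least $1/2$. For $S/2<n<S$ the variance bound still gives $\sigma\ge c$, so the moment argument from the first regime applies and yields $\Theta(1)$. Combining the two regimes gives $\Theta(\min\{1,\sqrt{S/n}\})$, with care needed only to keep the constants uniform across $S\ge2$.
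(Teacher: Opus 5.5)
Your proof is correct and follows essentially the same route as the paper: Jensen for the fixed direction, the pathwise identity $\sup_v|(\widehat p-p)^\top v|=\frac H2\|\widehat p-p\|_1$ from $(\widehat p-p)^\top\mathbf 1=0$, a Cauchy--Schwarz/Jensen upper bound, and a lower bound split into $n<S$ (unobserved states) and $n\ge S$ (the second/fourth-moment Paley--Zygmund bound for binomial coordinates). The only minor difference is in the case $n<S$. The paper handles all of it at once via the expected unobserved mass $(1-1/S)^n\ge 1/4$, whereas you use a pathwise count for $n\le S/2$ and reuse the moment argument for $S/2<n<S$; both are valid.
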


\begin{proof}
\[
\mathbb E|(\widehat p-p)^\top v|
\le\sqrt{\mathbb E[((\widehat p-p)^\top v)^2]}
=\sqrt{\frac{\operatorname{Var}_p(v)}{n}}
\le\frac{H}{2\sqrt n}.
\]
Since $(\widehat p-p)^\top\mathbf1=0$,
\[
\sup_{v\in[0,H]^S}|(\widehat p-p)^\top v|
=\frac H2\|\widehat p-p\|_1.
\]
The upper bound follows from
\[
\mathbb E\|\widehat p-p\|_1
\le\min\left\{2,\sqrt{S\mathbb E\|\widehat p-p\|_2^2}\right\}
\le\min\left\{2,\sqrt{\frac Sn}\right\}.
\]
If $n<S$,
\[
\mathbb E\|\widehat p-p\|_1
\ge\sum_{i=1}^S\frac1S\,
\mathbb P\!\left(\widehat p(i)=0\right)
=\left(1-\frac1S\right)^n
\ge\frac14.
\]
If $n\ge S$, the binomial second and fourth moments give
\[
\mathbb E\left(\widehat p(i)-\frac1S\right)^2
=\frac{(1/S)(1-1/S)}{n},
\qquad
\mathbb E\left(\widehat p(i)-\frac1S\right)^4
\le\frac{C}{n^2S^2},
\]
so Paley--Zygmund yields
\[
\mathbb E\left|\widehat p(i)-\frac1S\right|
\ge\frac1{C\sqrt{nS}},
\qquad
\mathbb E\|\widehat p-p\|_1
\ge\frac1C\sqrt{\frac Sn}.
\]
Combining the two regimes proves the claim.
\end{proof}

\section{Linear-Mixture MDP Extension}
\label{app:linear-mixture}

This section gives the complete linear-mixture argument. After the setting
and proof roadmap, we state the confidence-width guarantee needed for the
analysis and use it to prove the regret bound. The confidence-set
construction and the proof of that guarantee are given last.

\subsection{Linear-Mixture Setting}

Consider a finite-horizon, time-inhomogeneous MDP with finite state and action
sets, a fixed initial state $s_1$, and known deterministic rewards
$r_h(s,a)\in[0,1]$. At stage $h$, the transition kernel has the linear-mixture
form \citep{ayoub2020,zhou2021}
\[
P_h^\theta(s'\mid s,a)=\langle\phi_h(s,a,s'),\theta_h\rangle,
\qquad
\phi_{h,V}(s,a):=\sum_{s'}\phi_h(s,a,s')V(s').
\]
The feature maps are known. The unknown parameter tuple
$\theta=(\theta_1,\ldots,\theta_H)$ induces valid transition kernels and
satisfies, for every $h,s,a$ and $V\in[0,H]^{\mathcal S}$,
\[
\|\theta_h\|_2\le B,
\qquad
\|\phi_{h,V}(s,a)\|_2\le\|V\|_\infty.
\]
Let $\rho$ be any joint prior on this parameter class and draw
$\theta^\star\sim\rho$. No independence across stages or coordinates is
assumed. Write $V^\pi_{h,\theta}$ and $V^\star_{h,\theta}$ for the value of
policy $\pi$ and the optimal value in model $\theta$, with terminal value zero.

Before episode $k$, let $\cF_k$ contain the preceding observations and
posterior draws, and let $\rho_k$ be the posterior given $\cF_k$. Vanilla PSRL
draws a fresh $\theta_k\sim\rho_k$, computes an optimal policy
$\pi_k=\pi^\star_{\theta_k}$, and executes it for the entire episode.
Sampling and planning are exact; fix a measurable rule for breaking ties.
Thus, conditionally on $\cF_k$, both $\theta^\star$ and $\theta_k$ have law
$\rho_k$, and the fresh draw uses independent algorithmic randomness.
Let $(S_{k,h},A_{k,h})$ denote the resulting trajectory. The Bayesian regret is
\[
\BReg_\rho(K):=
\E\sum_{k=1}^K
\left[V^\star_{1,\theta^\star}(s_1)-V^{\pi_k}_{1,\theta^\star}(s_1)\right].
\]
For a sufficiently large universal constant $c$, define
\[
\Lambda_K:=c\left[1+\log\!\bigl(c d H K(1+B)\bigr)\right].
\]

We prove the following explicit form of Theorem~\ref{thm:linear-mixture}:
for every joint prior $\rho$ and every $K\ge1$,
\[
\BReg_\rho(K)\le C d\sqrt{H^3K}\,\Lambda_K^4,
\]
where $C$ is universal.

Algorithm~\ref{alg:linear-mixture-psrl} specializes
Algorithm~\ref{alg:vanilla-psrl} to linear-mixture MDPs. As in
\citep[Algorithm~2]{li2024}, we sample the model parameters, here allowing
an arbitrary joint prior.

\begin{algorithm}[H]
\caption{PSRL in linear-mixture MDPs}
\label{alg:linear-mixture-psrl}
\begin{algorithmic}[1]
\STATE \textbf{Input:} joint prior $\rho$
\FOR{episode $k=1,\ldots,K$}
    \STATE Sample $\theta_k\sim\mathbb P(\theta^\star\in\cdot\mid\mathcal F_k)$
    \STATE Compute an optimal policy $\pi_k=\pi_{\theta_k}^\star$
    \STATE Execute $\pi_k$ for one episode and update $\mathcal F_{k+1}$
\ENDFOR
\end{algorithmic}
\end{algorithm}

\Needspace{12\baselineskip}
\paragraph{Proof roadmap.}
The main difficulty is to control the transition error evaluated at the
sampled model's own value function. We first state the confidence-width
guarantee needed for this task, use it to prove the regret bound, and then
give its construction and proof.
\begin{enumerate}
\item \textbf{Express next-state values as linear observations.}
At each stage, evaluate the observed next state using the current sampled
continuation value. This value is fixed before the transition, so the
conditional mean of the observation is linear in the true parameter and
its noise variance is the variance of that sampled value under the true
transition law.

\item \textbf{State the required confidence-width guarantee.}
Section~\ref{lin:subsec:observations} states that one construction of
pre-episode confidence sets covers the true parameter and bounds cumulative
prediction disagreement by the accumulated noise variance. The sets use
no information from the fresh posterior draw.

\item \textbf{Convert the width guarantee into regret.}
Section~\ref{lin:subsec:regret-proof} applies posterior calibration to cover
the sampled parameter as well. The confidence width then bounds the
transition errors. A Bellman-square argument controls the remaining
cross-model variance by $2H^2K+2H W_P$, where
$W_P=\sum_{k,h}\mathbb E|e^P_{k,h}|$. Solving the resulting inequality for
$W_P$ gives the regret bound.

\item \textbf{Construct the sets and prove the width guarantee.}
Section~\ref{lin:subsec:width-construction} supplies the deferred argument.
It groups observations by uncertainty, tests candidates using their squared
prediction errors, and establishes coverage, candidate-distance bounds,
and cumulative width in that order.
\end{enumerate}
All confidence sets, scales, and weights are used only in the analysis;
vanilla PSRL continues to sample one model and follow its optimal policy
in each episode.

\Needspace{12\baselineskip}
\subsection{A Confidence-Width Guarantee}
\label{lin:subsec:observations}

We first state the sole confidence-set result used in the regret proof.
Fix an admissible environment $\theta$; the posterior draw $\theta_k$ still
varies between episodes. At stage $h$, evaluate the observed next state
using the sampled model's continuation value and define
\begin{equation}
\begin{aligned}
x_{k,h}
&:=\frac{\phi_{h,V^\star_{h+1,\theta_k}}(S_{k,h},A_{k,h})}{H},
&
y_{k,h}
&:=\frac{V^\star_{h+1,\theta_k}(S_{k,h+1})}{H},\\
\sigma_{k,h}^2(\theta)
&:=\frac{\Var_{P_h^\theta(\cdot\mid S_{k,h},A_{k,h})}
(V^\star_{h+1,\theta_k})}{H^2}.
\end{aligned}
\label{lin:eq:sampled-value-data}
\end{equation}
The feature $x_{k,h}$ is known before the next state arrives, and $y_{k,h}$
is the value of that next state, rather than the reward or a realized
return. A candidate parameter $\vartheta$ predicts $y_{k,h}$ by
$\langle\vartheta,x_{k,h}\rangle$. The noise variance
$\sigma_{k,h}^2(\theta)$ is used only in the analysis and need not be known
to the learner.

Before the next state is observed, the available information is
$\mathcal H_{k,h}:=\sigma(\cF_k,\theta_k,S_{k,1:h},A_{k,1:h})$.
Under the fixed environment $\theta$,
\begin{equation}
\begin{aligned}
\|x_{k,h}\|_2&\le1,\qquad 0\le y_{k,h}\le1,\\
\E[y_{k,h}\mid\mathcal H_{k,h}]&=\langle\theta_h,x_{k,h}\rangle,
&
\E[(y_{k,h}-\langle\theta_h,x_{k,h}\rangle)^2\mid\mathcal H_{k,h}]
&=\sigma_{k,h}^2(\theta).
\end{aligned}
\label{lin:eq:explicit-filtration}
\end{equation}
Although the sampled value changes between episodes, it is fixed before
each next-state observation. Thus the centered observation has conditional
mean zero and absolute value at most one.

\begin{restatement}{Lemma}{lem:linear-sampled-width}{Sampled-value confidence width}
There is a single construction of closed confidence sets
$\mathcal C_{k,h}\subseteq\{\vartheta:\|\vartheta\|_2\le B\}$ based only on
the history before episode $k$, without using $\theta_k$, such that for
every fixed admissible $\theta$, with probability at least $1-(4HK)^{-2}$,
$\theta_h\in\mathcal C_{k,h}$ for all $k,h$ and, simultaneously for all $h$,
\begin{equation}
\sum_{k=1}^K
\Bigl[1\wedge\sup_{\vartheta,\vartheta'\in\mathcal C_{k,h}}
|\langle\vartheta-\vartheta',x_{k,h}\rangle|\Bigr]
\le Cd\Lambda_K^4
\biggl(\sqrt{\sum_{k=1}^K\sigma_{k,h}^2(\theta)}+1\biggr).
\label{lin:eq:confidence-width}
\end{equation}
\end{restatement}

The supremum is the largest disagreement between two candidates'
predictions for the current observation. Once both the true and sampled
parameters lie in the set, it bounds their normalized value-weighted
transition error. Clipping at one suffices because this error between
valid models is at most one. If a set is empty, define its width to be zero.
The sets may use earlier posterior draws in
$\cF_k$, but the same construction must work for every fixed environment;
it cannot depend on the unknown $\theta$.
Section~\ref{lin:subsec:width-construction} gives the construction and proof.
The next section uses only this guarantee.

\subsection{Variance Closure and Regret Bound}
\label{lin:subsec:regret-proof}

Define the value-weighted transition error and its cumulative magnitude by
\[
e^P_{k,h}:=\left\langle\theta_{k,h}-\theta_h^\star,
\phi_{h,V^\star_{h+1,\theta_k}}(S_{k,h},A_{k,h})\right\rangle,
\qquad W_P:=\sum_{k,h}\mathbb E|e^P_{k,h}|.
\]
All transition means and variances below are evaluated at
$(S_{k,h},A_{k,h})$.

\paragraph{Regret reduction and posterior calibration.}
Since rewards are known, only the transition error contributes.
Conditional equality of the posterior
laws replaces the true optimal value by the sampled optimal value in
expectation. Bellman telescoping along the executed policy then gives
\begin{equation}
\BReg_\rho(K)
=\sum_{k=1}^K\E\left[
V^\star_{1,\theta_k}(s_1)-V^{\pi_k}_{1,\theta^\star}(s_1)\right]
=\sum_{k,h}\E e^P_{k,h}\le W_P.
\label{lin:eq:regret-by-error}
\end{equation}
Use the confidence sets supplied by Lemma~\ref{lem:linear-sampled-width}.
For the complete-model confidence set, intersect
$\prod_{h=1}^H\cC_{k,h}$ with the set of parameter tuples that induce valid
transition kernels. This product combines the stage-wise tests; it imposes
no factorization on the prior. Since $\theta_k\mid\cF_k$ and
$\theta^\star\mid\cF_k$ have the same posterior,
\[
\Pp\!\left(\theta_k\notin\prod_h\cC_{k,h}\,\middle|\,\cF_k\right)
=\rho_k\!\left(\left(\prod_h\cC_{k,h}\right)^c\right)
=\Pp\!\left(\theta^\star\notin\prod_h\cC_{k,h}\,\middle|\,\cF_k\right),
\]
so
\begin{equation}
\sum_{k=1}^K
\Pp\!\left(\theta_k\notin\prod_{h=1}^H\cC_{k,h}\right)
=
\sum_{k=1}^K
\Pp\!\left(\theta^\star\notin\prod_{h=1}^H\cC_{k,h}\right)
\le K(4HK)^{-2}.
\label{lin:eq:posterior-calibration}
\end{equation}
Averaging the fixed-model statement over the prior gives
\[
\Pp\!\left(
\theta_h^\star\in\cC_{k,h}
\text{ and \eqref{lin:eq:confidence-width} holds for all }k,h
\right)
\ge1-(4HK)^{-2}.
\]
On the truth's simultaneous event, for each episode in which
$\theta_k\in\prod_h\cC_{k,h}$,
\begin{align}
\frac{|e^P_{k,h}|}{H}
&\le
1\wedge
\sup_{\vartheta,\vartheta'\in\cC_{k,h}}
\left|\left\langle\vartheta-\vartheta',
\frac{\phi_{h,V_{h+1,\theta_k}^{\star}}(S_{k,h},A_{k,h})}{H}
\right\rangle\right|,
\label{lin:eq:error-by-width}
\end{align}
where $|e^P_{k,h}|\le H$ because both parameters induce transition kernels and
$0\le V_{h+1,\theta_k}^{\star}\le H$. Hence
\[
\sum_{k,h}|e^P_{k,h}|
\le
C d\Lambda_K^4
\sum_{h=1}^H
\sqrt{\sum_{k=1}^K
\Var_{P_h^{\theta^\star}}
(V_{h+1,\theta_k}^{\star})}
+C dH^2\Lambda_K^4
+H^2\sum_{k=1}^K
\mathbf1\!\left\{\theta_k\notin\prod_{h=1}^H\cC_{k,h}\right\}.
\]
On failure of the truth's simultaneous event,
$\sum_{k,h}|e^P_{k,h}|\le H^2K$. Therefore
\begin{align}
W_P
&\le
C d\Lambda_K^4
\E\sum_{h=1}^H
\sqrt{\sum_{k=1}^K
\Var_{P_h^{\theta^\star}}(V_{h+1,\theta_k}^{\star})}
+C dH^2\Lambda_K^4
+2H^2K(4HK)^{-2}\notag\\
&\le
C d\Lambda_K^4
\left[H\sum_{k,h}\E
\Var_{P_h^{\theta^\star}}(V_{h+1,\theta_k}^{\star})\right]^{1/2}
+C dH^2\Lambda_K^4+1.
\label{lin:eq:cumulative-error}
\end{align}
\paragraph{Bellman variance closure.}
It remains to control the variance in \eqref{lin:eq:cumulative-error}.
The following bound uses only the sampled-model Bellman equation and the
square telescope.

\begin{restatement}{Proposition}{prop:linear-cross-model-variance}{Cross-model variance closure}
Let
$W_P:=\sum_{k,h}\mathbb E|e^P_{k,h}|$. Then
\[
\sum_{k,h}\mathbb E\,
\operatorname{Var}_{P_h^{\theta^\star}}
(V^\star_{h+1,\theta_k})
\le 2H^2K+2H W_P.
\]
\end{restatement}

\begin{proof}
The sampled-model Bellman equation gives
\[
P_h^{\theta^\star}V_{h+1,\theta_k}^{\star}
=V_{h,\theta_k}^{\star}-r_h-e^P_{k,h},
\]
where the stage-$h$ values and reward are evaluated at the current state
and action. Expanding each conditional variance and telescoping the squared
values gives
\begin{equation}
\begin{aligned}
\sum_{k,h}\E\Var_{P_h^{\theta^\star}}
(V^\star_{h+1,\theta_k})
&=-\sum_k\E\left[V^\star_{1,\theta_k}(s_1)^2\right]
+\sum_{k,h}\E\left[
V^\star_{h,\theta_k}(S_{k,h})^2
-\left(P_h^{\theta^\star}V^\star_{h+1,\theta_k}\right)^2\right]\\
&\le2H\sum_{k,h}\E|r_h+e^P_{k,h}|
\le2H^2K+2H W_P.
\end{aligned}
\label{lin:eq:variance-closure}
\end{equation}
To obtain the identity, condition on $(\cF_k,\theta^\star,\theta_k)$, expand
each conditional variance, and telescope using $V^\star_{H+1,\theta_k}=0$.
The first inequality uses $a^2-b^2\le2H|a-b|$ for $a,b\in[0,H]$.
\end{proof}

\begin{restatement}{Theorem}{thm:linear-mixture}{Linear-mixture MDPs}
For every joint prior on the admissible parameter tuples and every
$K\ge1$, exact vanilla PSRL satisfies the explicit bound
\[
\operatorname{BReg}_{\rho}(K)
\le C d\sqrt{H^3K}\,\Lambda_K^4
=\widetilde O\!\left(d\sqrt{H^3K}\right).
\]
\end{restatement}

\begin{proof}
By Proposition~\ref{prop:linear-cross-model-variance},
$\sum_{k,h}\E\Var_{P_h^{\theta^\star}}(V^\star_{h+1,\theta_k})\le2H^2K+2HW_P$.
Substituting into \eqref{lin:eq:cumulative-error},
\[
\begin{aligned}
W_P
&\le C d\sqrt{H^3K}\,\Lambda_K^4
+C dH \Lambda_K^4
\sqrt{W_P}
+C dH^2\Lambda_K^4+1,\\
C dH \Lambda_K^4
\sqrt{W_P}
&\le\frac12W_P
+C d^2H^2\Lambda_K^8.
\end{aligned}
\]
Thus
\[
W_P
\le C d\sqrt{H^3K}\,\Lambda_K^4
+C d^2H^2\Lambda_K^8.
\]
If $K\ge d^2H\Lambda_K^8$, the second term is absorbed by the
first. If $K<d^2H\Lambda_K^8$,
\[
\BReg_\rho(K)
\le HK
=H\sqrt K\sqrt K
\le d\sqrt{H^3K}\,\Lambda_K^4.
\]
Therefore, for every $K\ge1$,
\[
\BReg_\rho(K)
\le C d\sqrt{H^3K}\,\Lambda_K^4
=\widetilde O\!\left(d\sqrt{H^3K}\right).
\]
\end{proof}

\subsection{Construction and Proof of the Confidence-Width Guarantee}
\label{lin:subsec:width-construction}

We now prove Lemma~\ref{lem:linear-sampled-width}, which was the only
confidence-set input to Section~\ref{lin:subsec:regret-proof}.

\paragraph{Construction sketch.}
Fix a stage $h$. Before episode $k$, construct the set in five steps:
\begin{enumerate}[leftmargin=*,itemsep=2pt,topsep=3pt]
\item \textbf{Use past prediction data.}
Use $(x_{i,h},y_{i,h})$ from \eqref{lin:eq:sampled-value-data} for $i<k$:
the feature uses the earlier sampled value, and the response evaluates that
same value at the observed next state.
\item \textbf{Group observations by uncertainty.}
Assign observations to geometric scales $\ell$ according to feature uncertainty,
with weights $w_i$ and regularized information matrices $\Sigma_{k,\ell}$.
This lets us control prediction errors separately at each uncertainty level.
\item \textbf{Define a loss and its gradient.}
For a candidate stage parameter $\vartheta\in\mathbb R^d$, define the weighted
quadratic prediction loss $Q_{k,\ell}(\vartheta)$ and the gradient
$g_{k,\ell}(\vartheta)$ of the corresponding regularized loss.
Here $\vartheta$ is a variable in the set definition, not an additional sample.
\item \textbf{Define the confidence set.}
Collect all $\vartheta$ with $\|\vartheta\|_2\le B$ whose normalized gradient
lies within a tolerance determined by $Q_{k,\ell}(\vartheta)$ at every scale.
These vectors form $\mathcal C_{k,h}$.
\item \textbf{Establish coverage and prediction width.}
Concentration shows that the true parameter satisfies the conditions.
A deterministic comparison bounds the prediction differences between vectors
in the set, and scale counts control their cumulative width.
\end{enumerate}
The construction uses only past observations and posterior draws, without the
fresh draw $\theta_k$. All sets and scale assignments are used only in the
analysis; PSRL does not compute them.

\paragraph{Relation to prior work.}
The linear concentration tools build on \citep{zhao2023}.
Lemma~\ref{lin:lem:coverage} applies their Theorem~2.1 directly.
The geometric scales and within-scale weights adapt their Algorithm~1
(SAVE), while the comparison of squared errors with conditional variances,
the fine-scale absorption argument, and the aggregation across scales
follow the ideas in their Appendix~C, including Lemmas~C.4--C.5.
We give the adapted arguments below; these ingredients are not new
concentration results.

Our adaptation tests each candidate using its own squared errors and proves
a deterministic candidate-distance bound
(Lemma~\ref{lin:lem:candidate-comparison}), followed by a cumulative width
bound along PSRL's sampled-value directions
(Proposition~\ref{lin:prop:width-aggregation}).
The additional PSRL argument combines this width bound with standard
posterior calibration under an arbitrary joint prior and our Bellman-square
variance closure (Proposition~\ref{prop:linear-cross-model-variance}), which
controls sampled values under the true transition law without optimism.
Thus the proof uses Zhao et al.'s concentration and layering tools, but
does not invoke a regret theorem for SAVE or UCRL-VTR.

\paragraph{Notation for the construction.}
Fix an admissible environment $\theta$ and a stage $h$, and put
$\delta=(4HK)^{-2}$. Suppress the stage index by writing
$x_k=x_{k,h}$, $y_k=y_{k,h}$, and
$\sigma_k^2=\sigma_{k,h}^2(\theta)$ for the quantities in
\eqref{lin:eq:sampled-value-data}. Here $\theta$ is fixed,
$\theta_k$ is the posterior draw, and $\vartheta$ denotes a candidate
for $\theta_h$. The conditional identities in
\eqref{lin:eq:explicit-filtration} will be used throughout.

\subsubsection{Grouping Observations by Uncertainty}
\label{lin:subsec:construction}

To bound prediction disagreement, we must control both parameter error
and uncertainty in the current feature direction. For a positive definite
matrix $\Sigma$, write $\|z\|_\Sigma:=\sqrt{z^\top\Sigma z}$.
For the information matrix $\Sigma_{k,\ell}$ constructed below,
Cauchy--Schwarz gives
\[
|\langle\vartheta-\theta_h,x_k\rangle|
\le\|\vartheta-\theta_h\|_{\Sigma_{k,\ell}}\,
\|x_k\|_{\Sigma_{k,\ell}^{-1}}.
\]
The first factor measures parameter error in the directions represented
by past observations. The second measures how well those observations
cover the current direction. This geometric uncertainty differs from
$\sigma_k^2$, which measures the randomness of the next-state value even
when the true parameter is known.

The self-normalized bound of \citep[Theorem~2.1]{zhao2023} also depends on
the largest normalized feature size. We therefore use separate information
matrices at geometric scales and rescale each assigned feature to have
normalized size $2^{-\ell}$. Following
\citep[Remark~2.2]{zhao2023}, the weights normalize feature uncertainty
using only observed features. We handle the unknown noise variance
separately through squared prediction errors.
Matrix growth will also limit the number of observations assigned to
each scale, allowing the prediction bounds to be summed.

Retain the geometric scales
\[
1\le\ell\le\ell_{\max}
:=\left\lceil\log_2\max\{4K(1+B),\,256\sqrt{\Lambda_K}\}\right\rceil,
\qquad
\lambda_\ell:=\frac{2^{-2\ell}}{(1+B)^2}.
\]
The regularizer is chosen so that $\sqrt{\lambda_\ell}B\le2^{-\ell}$,
keeping the regularization error within the scale's uncertainty level.
The term $256\sqrt{\Lambda_K}$ ensures that the retained range reaches
fine scales where $2^{\ell-1}\ge128\sqrt{\Lambda_K}$, so the
candidate-distance bound in Lemma~\ref{lin:lem:candidate-comparison}
applies at the preceding scale. The term $4K(1+B)$ ensures
$2^{-\ell_{\max}}\le[4K(1+B)]^{-1}$, which limits the cumulative width
of observations assigned to no scale to $O(\Lambda_K)$, as shown in
Proposition~\ref{lin:prop:width-aggregation}. These constants are convenient
sufficient choices; their exact values are not essential.

For each fixed stage $h$, Algorithm~\ref{alg:analysis-scale-assignment}
groups the episode indices $k$ by scale. The groups and matrices are
constructed separately at each stage; the scale index $\ell$ is unrelated
to $h$.

\begin{algorithm}[H]
\caption{Scale assignment for the analysis}
\label{alg:analysis-scale-assignment}
\textit{Used only in the proof to organize observations generated by PSRL.
This procedure does not affect posterior sampling or action selection.}
\begin{algorithmic}[1]
\REQUIRE Fixed stage $h$, scales $1,\ldots,\ell_{\max}$, and regularizers
$\lambda_\ell$ defined above.
\STATE Initialize $I_\ell\gets\varnothing$ and
$\Sigma_{1,\ell}\gets\lambda_\ell I$ for every $\ell$.
\FOR{$k=1,\ldots,K$}
    \STATE After $x_k$ is known and before $y_k$ is observed:
    \STATE Set $w_k\gets0$ and
    $\Sigma_{k+1,\ell}\gets\Sigma_{k,\ell}$ for every $\ell$.
    \FOR{$\ell=1,\ldots,\ell_{\max}$}
        \IF{$\|x_k\|_{\Sigma_{k,\ell}^{-1}}>2^{-\ell}$}
            \STATE Set $w_k\gets
            2^{-\ell}/\|x_k\|_{\Sigma_{k,\ell}^{-1}}$.
            \STATE Assign the observation to this scale:
            $I_\ell\gets I_\ell\cup\{k\}$.
            \STATE Update only this matrix:
            $\Sigma_{k+1,\ell}\gets\Sigma_{k,\ell}+w_k^2x_kx_k^\top$.
            \STATE \textbf{break} the scale loop for this episode.
        \ENDIF
    \ENDFOR
\ENDFOR
\end{algorithmic}
\end{algorithm}

If no scale qualifies, the observation is assigned to no scale:
$w_k=0$ and all matrices remain unchanged. The PSRL episode continues
normally. The construction gives the two properties used below:
\begin{equation}
\Sigma_{k,\ell}
=\lambda_\ell I+\sum_{\substack{i<k\\i\in I_\ell}}w_i^2x_ix_i^\top,
\qquad
\|w_kx_k\|_{\Sigma_{k,\ell}^{-1}}=2^{-\ell}
\quad(k\in I_\ell).
\label{lin:eq:layer-update}
\end{equation}
The assignment and weight are chosen before $y_k$ is observed. Each
assigned observation increases the log determinant of its scale's matrix
by $\log(1+2^{-2\ell})$. Proposition~\ref{lin:prop:width-aggregation}
uses this identity to show $|I_\ell|2^{-2\ell}\le Cd\Lambda_K$.

\subsubsection{Confidence Tests and Coverage}
\label{lin:subsec:confidence-tests}
\label{lin:subsec:coverage}

The concentration bound depends on the true noise variances, but a
confidence set must be constructed from observed data alone. We use
squared prediction errors to set the tolerance: at the true parameter,
their weighted sum can be compared with the accumulated conditional
variance. Individual variances need not be estimated accurately.
Since the true parameter is unknown, we test each candidate using its own
prediction errors. Define
\begin{equation}
\begin{aligned}
Q_{k,\ell}(\vartheta)
&:=\sum_{\substack{i<k\\i\in I_\ell}}
w_i^2(y_i-\langle\vartheta,x_i\rangle)^2,\\
g_{k,\ell}(\vartheta)
&:=\lambda_\ell\vartheta+
\sum_{\substack{i<k\\i\in I_\ell}}
w_i^2x_i(\langle\vartheta,x_i\rangle-y_i).
\end{aligned}
\label{lin:eq:candidate-errors}
\end{equation}
Here $Q$ denotes the scalar weighted quadratic prediction loss, and $g$ is
the $d$-dimensional gradient of the regularized loss:
\[
g_{k,\ell}(\vartheta)
=\nabla_\vartheta\!\left[
\tfrac12 Q_{k,\ell}(\vartheta)
+\tfrac{\lambda_\ell}{2}\|\vartheta\|_2^2\right].
\]
We require the normalized size of
$g_{k,\ell}(\vartheta)$ to lie within a tolerance determined by
$Q_{k,\ell}(\vartheta)$. Before episode $k$, set
\begin{equation}
\cC_{k,h}:=
\left\{\vartheta:\|\vartheta\|_2\le B,\quad
\|g_{k,\ell}(\vartheta)\|_{\Sigma_{k,\ell}^{-1}}
\le32\,2^{-\ell}
\left[\sqrt{\Lambda_K Q_{k,\ell}(\vartheta)}+\Lambda_K\right]
\ \text{for all }\ell\le\ell_{\max}\right\}.
\label{lin:eq:confidence-set}
\end{equation}
This construction is used only in the analysis. The sets are closed subsets
of the bounded parameter ball and depend only on
earlier observations and posterior draws, so they are determined by $\cF_k$
without using $\theta_k$. In particular, the current feature $x_k$ and
weight $w_k$ are used only when incorporating the new observation into
future confidence sets.

\paragraph{Why the true parameter passes the tests.}
The next lemma controls the signed noise sum and compares
$Q_{k,\ell}(\theta_h)$ with the accumulated conditional variance.
The first two inequalities give coverage. This concentration step concerns
the true parameter; the effect of using another candidate in
$Q_{k,\ell}(\vartheta)$ is handled by the deterministic comparison that follows.

\Needspace{15\baselineskip}
\begin{lemma}[Simultaneous concentration and coverage]
\label{lin:lem:coverage}
Fix any admissible environment parameter $\theta$. With probability at least
$1-\delta$, the following bounds hold simultaneously for all $h\in[H]$,
$k\le K+1$, and retained scales $\ell$:
\begin{equation}
\begin{aligned}
\left\|\sum_{\substack{i<k\\i\in I_\ell}}
w_i^2x_i(y_i-\langle\theta_h,x_i\rangle)\right\|_{\Sigma_{k,\ell}^{-1}}
&\le16\,2^{-\ell}
\sqrt{\Lambda_K\sum_{\substack{i<k\\i\in I_\ell}}w_i^2\sigma_i^2}
+6\,2^{-\ell}\Lambda_K,\\
\sum_{\substack{i<k\\i\in I_\ell}}w_i^2\sigma_i^2
&\le2Q_{k,\ell}(\theta_h)+2\Lambda_K,\\
Q_{k,\ell}(\theta_h)
&\le2\sum_{\substack{i<k\\i\in I_\ell}}w_i^2\sigma_i^2+\Lambda_K.
\end{aligned}
\label{lin:eq:concentration-inputs}
\end{equation}

On the same event, $\theta_h\in\cC_{k,h}$ for every $h\in[H]$ and $k\le K$.
\end{lemma}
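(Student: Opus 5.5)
We prove the three inequalities in \eqref{lin:eq:concentration-inputs} for fixed $(h,\ell)$ and all $k\le K+1$, then take a union bound over the $H\ell_{\max}$ pairs. We work under $\mathbb P_\theta$ with the filtration $\mathcal H_{i,h}$ ordered by episode, and write $\eta_i:=y_i-\langle\theta_h,x_i\rangle$. By \eqref{lin:eq:explicit-filtration}, $\eta_i$ is a martingale difference with $|\eta_i|\le1$ and conditional variance $\sigma_i^2$. The weight $w_i$ and the membership $\mathbf 1\{i\in I_\ell\}$ are fixed before $y_i$ is observed, so they are predictable. For $i\in I_\ell$, \eqref{lin:eq:layer-update} gives $\|w_ix_i\|_{\Sigma_{i,\ell}^{-1}}=2^{-\ell}$, and on this scale $w_i\le1$: either $\ell=1$ and $\|x_i\|_{\Sigma^{-1}_{i,1}}>1/2$, or the previous scale failed, so $\|x_i\|_{\Sigma^{-1}_{i,\ell}}\le\|x_i\|_{\Sigma^{-1}_{i,\ell-1}}$-type comparisons bound $w_i$. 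Only boundedness of $w_i$ by a constant is needed below.

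\textbf{Step 1: signed sum.} We apply \citep[Theorem~2.1]{zhao2023} to the weighted features $w_ix_i$ with noise $w_i\eta_i$, restricted to indices in $I_\ell$; on the other indices the weight is zero. The weighted features have normalized size at most $2^{-\ell}$ in the running matrix, and the regularizer is $\lambda_\ell$. The theorem gives, with probability $1-\delta'$ simultaneously over $k$, a bound of the form $C2^{-\ell}\sqrt{\log(\cdot)\sum w_i^2\sigma_i^2}+C2^{-\ell}\log(\cdot)$. The log-determinant factor is $O(d\log(1+K/\lambda_\ell))$, which is absorbed into $\Lambda_K$ since $\lambda_\ell^{-1}\le(1+B)^22^{2\ell_{\max}}$ is polynomial in $K,B,\Lambda_K$. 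Here $\Lambda_K$ is a log quantity; if the theorem's log factor carries $d$, we will absorb it by enlarging $c$ or noting that their bound is dimension-free in the $2^{-\ell}$ normalization. This is the step where I am least sure the constants $16$ and $6$ come out directly, so I would track their statement precisely. Setting $\delta'=\delta/(H\ell_{\max})$ costs only a log factor.

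\textbf{Step 2: comparing $Q_{k,\ell}(\theta_h)=\sum w_i^2\eta_i^2$ with $\sum w_i^2\sigma_i^2$.} Consider $D_i:=w_i^2(\eta_i^2-\sigma_i^2)$. This is a martingale difference with $|D_i|\le w_i^2\le C$ and conditional variance at most $w_i^4\,\mathbb E[\eta_i^4\mid\mathcal H]\le w_i^4\sigma_i^2\le Cw_i^2\sigma_i^2$. Freedman's inequality, with a peeling union bound over the variance level and over $k$, gives
\[
\Bigl|\sum_{i<k,\,i\in I_\ell} D_i\Bigr|\le\sqrt{2\Lambda_K\sum w_i^2\sigma_i^2}+\Lambda_K
\]
with probability $1-\delta'$. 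Then the AM--GM inequality $\sqrt{2\Lambda_K X}\le X/2+\Lambda_K$ yields both directions: $Q\le\tfrac32X+2\Lambda_K$, which gives the third inequality after adjusting the constant inside $\Lambda_K$, and $X\le Q+X/2+2\Lambda_K$, that is $X\le2Q+4\Lambda_K$. Matching the stated constants exactly may require enlarging the universal $c$ in $\Lambda_K$ and using a slightly sharper split.

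\textbf{Step 3: coverage.} The gradient at the truth is $g_{k,\ell}(\theta_h)=\lambda_\ell\theta_h-\sum w_i^2x_i\eta_i$. We have $\|\lambda_\ell\theta_h\|_{\Sigma^{-1}}\le\sqrt{\lambda_\ell}B\le2^{-\ell}$. Combining this with Step 1, then the second inequality of \eqref{lin:eq:concentration-inputs}, gives
\[
\|g_{k,\ell}(\theta_h)\|_{\Sigma^{-1}_{k,\ell}}\le2^{-\ell}\bigl[16\sqrt{\Lambda_K(2Q+2\Lambda_K)}+6\Lambda_K+1\bigr]\le32\,2^{-\ell}\bigl[\sqrt{\Lambda_KQ}+\Lambda_K\bigr],
\]
using $\sqrt{a+b}\le\sqrt a+\sqrt b$ and $\Lambda_K\ge1$. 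Together with $\|\theta_h\|_2\le B$, this places $\theta_h$ in $\cC_{k,h}$. The main obstacle is Step 1: verifying that Zhao et al.'s hypotheses (bounded weighted noise, predictable weights, feature normalization relative to the current matrix) hold along PSRL's sampled-value directions. These directions depend on earlier posterior draws, but they remain predictable, so the argument should go through.
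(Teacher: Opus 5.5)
Your Steps 1 and 3 are the paper's argument: the same citation of Zhao et al.'s Theorem~2.1 for the signed sum, and the same coverage computation via $\|\lambda_\ell\theta_h\|_{\Sigma_{k,\ell}^{-1}}\le\sqrt{\lambda_\ell}B\le2^{-\ell}$.

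\textbf{Step 2 takes a different route.} To compare $Q_{k,\ell}(\theta_h)$ with $X:=\sum_{i<k,\,i\in I_\ell}w_i^2\sigma_i^2$, the paper never centers the squared errors. Set $Z_i:=w_i^2\varepsilon_i^2\in[0,1]$, whose conditional mean is $m_i:=w_i^2\sigma_i^2$. The bounds $e^{-t}\le1-t/2$ and $e^t\le1+2t$ on $[0,1]$ make $\prod_i e^{m_i/2-Z_i}$ and $\prod_i e^{Z_i-2m_i}$ nonnegative supermartingales. The maximal inequality then gives $X\le2Q+2\log(1/\delta_0)$ and $Q\le2X+\log(1/\delta_0)$ uniformly in $k$. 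These are exactly the stated constants, with no peeling.

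Your Freedman argument on $D_i=w_i^2(\eta_i^2-\sigma_i^2)$ is also valid, and your variance bound $w_i^4\,\mathbb E[\eta_i^4\mid\mathcal H_{i,h}]\le w_i^2\sigma_i^2$ is correct. It costs more, though:
\begin{itemize}
\item it needs a peeling union bound over the variance level;
\item it yields an additive deviation $|Q-X|\lesssim\sqrt{LX}+L$, which AM--GM must convert to the multiplicative form;
\item its looser constants must be absorbed by enlarging the universal $c$ in $\Lambda_K$.
\end{itemize}
That enlargement is legitimate, and it only helps Step 1's requirement $\log(4K^2/\delta_0)\le\Lambda_K$. The paper's route exploits nonnegativity to get a shorter, constant-exact proof; yours is the generic Bernstein route.

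\textbf{Corrections to your proposal.}
\begin{itemize}
\item You suggest that a log-determinant factor $O(d\log(1+K/\lambda_\ell))$ could be absorbed into $\Lambda_K$ by enlarging $c$. That option is not available. $\Lambda_K$ is only logarithmic in $d$, and a multiplicative $d$ in the signed-sum bound would break coverage, because the test threshold $32\,2^{-\ell}[\sqrt{\Lambda_KQ}+\Lambda_K]$ contains no $d$. The step works because the cited bound is dimension-free, as your other option says. It has the form $16\rho\sqrt{\sum_i\sigma_i^2\log(4k^2/\delta)}+6\rho R\log(4k^2/\delta)$, with $\rho$ bounding the normalized feature sizes. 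With $\rho=2^{-\ell}$, $R=1$ and $\log(4K^2/\delta_0)\le\Lambda_K$, the constants $16$ and $6$ come out directly.
\item $w_i\le1$ needs no comparison with the preceding scale. It follows immediately from the assignment test $\|x_i\|_{\Sigma_{i,\ell}^{-1}}>2^{-\ell}$ together with $w_i=2^{-\ell}/\|x_i\|_{\Sigma_{i,\ell}^{-1}}$, and this also gives $|w_i\eta_i|\le1$.
\item Your failure budget $\delta'=\delta/(H\ell_{\max})$ is spent at least twice, and more once peeling is included. Split it as the paper does, with $\delta_0=\delta/(3H\ell_{\max})$ per event family; the extra logarithms are absorbed into $\Lambda_K$.
\end{itemize}
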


\begin{proof}
Fix a stage $h$ and a scale $\ell$, and write
$\varepsilon_i=y_i-\langle\theta_h,x_i\rangle$. Under the fixed environment,
\eqref{lin:eq:explicit-filtration} gives
\[
\E[\varepsilon_i\mid\mathcal H_{i,h}]=0,
\qquad
|\varepsilon_i|\le1,
\qquad
\E[\varepsilon_i^2\mid\mathcal H_{i,h}]=\sigma_i^2.
\]
Allocate $\delta_0:=\delta/(3H\ell_{\max})$ to each of the following three
bounds. For a sufficiently large universal constant in the definition of
$\Lambda_K$,
\[
\log(4K^2/\delta_0)\le\Lambda_K.
\]
All processes below use the filtration that reveals the scale assignment and
weight before the next-state value.

\paragraph{Vector concentration.}
Apply \citep[Theorem~2.1]{zhao2023} to the predictable vectors
$\mathbf1\{i\in I_\ell\}w_ix_i$ and mean-zero increments
$\mathbf1\{i\in I_\ell\}w_i\varepsilon_i$, with regularizer $\lambda_\ell$.
The regularized matrix is $\Sigma_{k,\ell}$, and both increments are zero
outside $I_\ell$. For $i\in I_\ell$,
\[
\|w_ix_i\|_{\Sigma_{i,\ell}^{-1}}=2^{-\ell},
\qquad
|w_i\varepsilon_i|\le1,
\qquad
\E[w_i^2\varepsilon_i^2\mid\mathcal H_{i,h}]=w_i^2\sigma_i^2.
\]
Thus, with probability at least $1-\delta_0$, simultaneously for $k\le K+1$,
\[
\left\|\sum_{\substack{i<k\\i\in I_\ell}}
w_i^2x_i\varepsilon_i\right\|_{\Sigma_{k,\ell}^{-1}}
\le16\,2^{-\ell}
\sqrt{\Lambda_K\sum_{\substack{i<k\\i\in I_\ell}}w_i^2\sigma_i^2}
+6\,2^{-\ell}\Lambda_K.
\]
This is the first inequality in \eqref{lin:eq:concentration-inputs}; the
empty sum at $k=1$ is deterministic.

\paragraph{Scalar comparison.}
For $i\in I_\ell$,
\[
0\le w_i^2\varepsilon_i^2\le1,
\qquad
\E[w_i^2\varepsilon_i^2\mid\mathcal H_{i,h}]=w_i^2\sigma_i^2.
\]
For $t\in[0,1]$, the inequalities $e^{-t}\le1-t/2$ and $e^t\le1+2t$ give
\[
\begin{aligned}
\E\!\left[
e^{w_i^2\sigma_i^2/2-w_i^2\varepsilon_i^2}
\,\middle|\,\mathcal H_{i,h}\right]
&\le e^{w_i^2\sigma_i^2/2}
\left(1-\frac{w_i^2\sigma_i^2}{2}\right)\le1,\\
\E\!\left[
e^{w_i^2\varepsilon_i^2-2w_i^2\sigma_i^2}
\,\middle|\,\mathcal H_{i,h}\right]
&\le e^{-2w_i^2\sigma_i^2}(1+2w_i^2\sigma_i^2)\le1.
\end{aligned}
\]
Use factor one outside $I_\ell$. Since scale membership is chosen before the
next-state value, the corresponding products are nonnegative supermartingales
starting at one. The maximal inequality, with failure probability $\delta_0$
for each product, gives simultaneously for all $k$
\[
\begin{aligned}
\sum_{\substack{i<k\\i\in I_\ell}}w_i^2\sigma_i^2
&\le2Q_{k,\ell}(\theta_h)+2\log(1/\delta_0),\\
Q_{k,\ell}(\theta_h)
&\le2\sum_{\substack{i<k\\i\in I_\ell}}w_i^2\sigma_i^2+\log(1/\delta_0).
\end{aligned}
\]
Here the squared-error sum is $Q_{k,\ell}(\theta_h)$ by
\eqref{lin:eq:candidate-errors}. Since $\log(1/\delta_0)\le\Lambda_K$,
these are the remaining two inequalities in
\eqref{lin:eq:concentration-inputs}. A union bound over the three events,
all stages, and all retained scales has total failure probability at most
$3H\ell_{\max}\delta_0=\delta$.

\paragraph{Coverage.}
On this simultaneous event, since
$\|\lambda_\ell\theta_h\|_{\Sigma_{k,\ell}^{-1}}
\le\sqrt{\lambda_\ell}B\le2^{-\ell}$, the first two bounds imply
\[
\|g_{k,\ell}(\theta_h)\|_{\Sigma_{k,\ell}^{-1}}
\le16\,2^{-\ell}
\sqrt{\Lambda_K\bigl(2Q_{k,\ell}(\theta_h)+2\Lambda_K\bigr)}
+7\,2^{-\ell}\Lambda_K
\le32\,2^{-\ell}
\left[\sqrt{\Lambda_K Q_{k,\ell}(\theta_h)}+\Lambda_K\right].
\]
Thus $\theta_h\in\cC_{k,h}$ for every $k,h$.

\end{proof}

\subsubsection{From Confidence Tests to Prediction Bounds}

Coverage alone does not ensure that the set is narrow. We now bound the
parameter-error factor in the Cauchy--Schwarz inequality from
Section~\ref{lin:subsec:construction}. A candidate's tolerance depends on
its own squared errors, so we first compare these errors with those of
the true parameter. At sufficiently fine scales, this comparison closes
and gives the following distance bound. The scale-assignment rule will
then control the remaining feature-uncertainty factor.

\begin{lemma}[Fine-scale candidate distance]
\label{lin:lem:candidate-comparison}
On the event in Lemma~\ref{lin:lem:coverage}, simultaneously for all stages
$h$, episodes $k\le K$, retained scales with
$2^\ell\ge128\sqrt{\Lambda_K}$, and $\vartheta\in\cC_{k,h}$,
\begin{equation}
\|\vartheta-\theta_h\|_{\Sigma_{k,\ell}}
\le C2^{-\ell}
\left[\sqrt{\Lambda_K
\sum_{\substack{i<k\\i\in I_\ell}}w_i^2\sigma_i^2}+\Lambda_K\right].
\label{lin:eq:candidate-distance}
\end{equation}
\end{lemma}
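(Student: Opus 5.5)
The plan is to compare the two gradients $g_{k,\ell}(\vartheta)$ and $g_{k,\ell}(\theta_h)$, and then close a self-bounding inequality in $D:=\|\vartheta-\theta_h\|_{\Sigma_{k,\ell}}$. Since $g_{k,\ell}$ is affine with Hessian $\Sigma_{k,\ell}$,
\[
g_{k,\ell}(\vartheta)-g_{k,\ell}(\theta_h)=\Sigma_{k,\ell}(\vartheta-\theta_h),
\]
and therefore
\[
D=\|g_{k,\ell}(\vartheta)-g_{k,\ell}(\theta_h)\|_{\Sigma_{k,\ell}^{-1}}
\le\|g_{k,\ell}(\vartheta)\|_{\Sigma_{k,\ell}^{-1}}+\|g_{k,\ell}(\theta_h)\|_{\Sigma_{k,\ell}^{-1}}.
\]
Write $V:=\sum_{i<k,i\in I_\ell}w_i^2\sigma_i^2$.

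\textbf{Bounding the two gradient norms.} The second norm is controlled by the coverage computation in Lemma~\ref{lin:lem:coverage}. Using the first two inequalities of \eqref{lin:eq:concentration-inputs} together with the regularizer bound, it is at most $C2^{-\ell}[\sqrt{\Lambda_KV}+\Lambda_K]$, which already has the target form. The first norm is controlled by membership in $\cC_{k,h}$, which gives at most $32\cdot2^{-\ell}[\sqrt{\Lambda_KQ_{k,\ell}(\vartheta)}+\Lambda_K]$. What remains is to compare $Q_{k,\ell}(\vartheta)$ with quantities attached to the true parameter.

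\textbf{Comparing $Q_{k,\ell}(\vartheta)$ with $Q_{k,\ell}(\theta_h)$.} Expanding $y_i-\langle\vartheta,x_i\rangle=\varepsilon_i-\langle\vartheta-\theta_h,x_i\rangle$ and applying $(a+b)^2\le2a^2+2b^2$ gives
\[
Q_{k,\ell}(\vartheta)\le2Q_{k,\ell}(\theta_h)+2\sum_{i<k,i\in I_\ell}w_i^2\langle\vartheta-\theta_h,x_i\rangle^2.
\]
The last sum equals $(\vartheta-\theta_h)^\top(\Sigma_{k,\ell}-\lambda_\ell I)(\vartheta-\theta_h)\le D^2$. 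The third inequality of \eqref{lin:eq:concentration-inputs} gives $Q_{k,\ell}(\theta_h)\le2V+\Lambda_K$. Combining these with $\sqrt{a+b}\le\sqrt a+\sqrt b$ yields
\[
D\le C2^{-\ell}\bigl[\sqrt{\Lambda_KV}+\Lambda_K\bigr]+32\sqrt2\,2^{-\ell}\sqrt{\Lambda_K}\,D.
\]

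\textbf{Closing the self-bound (main obstacle).} The step needing care is absorbing the $D$ term on the right. The fine-scale hypothesis $2^\ell\ge128\sqrt{\Lambda_K}$ gives $32\sqrt2\cdot2^{-\ell}\sqrt{\Lambda_K}\le\sqrt2/4<1$. Moving that term to the left then gives \eqref{lin:eq:candidate-distance} with a universal $C$.

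The remaining bookkeeping is routine. The numerical constants must be tracked carefully, because the crude $(a+b)^2$ splitting has to leave a coefficient below one. Simultaneity over $h$, $k$, $\ell$ and $\vartheta$ comes for free: the argument is deterministic on the event of Lemma~\ref{lin:lem:coverage}, so no further union bound is needed.
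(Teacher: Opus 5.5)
Your proposal is correct and follows essentially the same route as the paper: the identity $g_{k,\ell}(\vartheta)-g_{k,\ell}(\theta_h)=\Sigma_{k,\ell}(\vartheta-\theta_h)$, the split $Q_{k,\ell}(\vartheta)\le 2Q_{k,\ell}(\theta_h)+2D^2$, and absorption of the $32\sqrt2\,2^{-\ell}\sqrt{\Lambda_K}\,D$ term under $2^\ell\ge128\sqrt{\Lambda_K}$. The only cosmetic difference is how you bound $\|g_{k,\ell}(\theta_h)\|_{\Sigma_{k,\ell}^{-1}}$: you use the vector concentration bound directly (only the first inequality of \eqref{lin:eq:concentration-inputs} plus the regularizer bound is actually needed there), whereas the paper applies the membership test to $\theta_h$ and then converts $Q_{k,\ell}(\theta_h)$ into the variance sum via the third inequality.
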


\begin{proof}
For $\vartheta\in\cC_{k,h}$, let
$D=\|\vartheta-\theta_h\|_{\Sigma_{k,\ell}}$. By linearity and the inequality
$(a+b)^2\le2a^2+2b^2$,
\begin{equation}
\begin{aligned}
g_{k,\ell}(\vartheta)-g_{k,\ell}(\theta_h)
&=\Sigma_{k,\ell}(\vartheta-\theta_h),\\
Q_{k,\ell}(\vartheta)
&\le2Q_{k,\ell}(\theta_h)
+2\sum_{\substack{i<k\\i\in I_\ell}}
w_i^2\langle\vartheta-\theta_h,x_i\rangle^2
\le2Q_{k,\ell}(\theta_h)+2D^2.
\end{aligned}
\label{lin:eq:candidate-comparison}
\end{equation}
The first identity turns the difference of test statistics into a
parameter difference. The second inequality bounds the candidate's own
squared errors by those of the true parameter and the same parameter
distance. Since both parameters pass the test in
\eqref{lin:eq:confidence-set},
\[
\begin{aligned}
D
&\le32\,2^{-\ell}
\left[\sqrt{\Lambda_K Q_{k,\ell}(\vartheta)}
+\sqrt{\Lambda_K Q_{k,\ell}(\theta_h)}+2\Lambda_K\right]\\
&\le32\,2^{-\ell}
\left[(\sqrt2+1)\sqrt{\Lambda_K Q_{k,\ell}(\theta_h)}
+\sqrt{2\Lambda_K}\,D+2\Lambda_K\right].
\end{aligned}
\]
For $2^\ell\ge128\sqrt{\Lambda_K}$, the coefficient of $D$ on the right is
less than $1/2$. Absorbing this term and applying the last bound in
\eqref{lin:eq:concentration-inputs} gives
\eqref{lin:eq:candidate-distance}.
\end{proof}

The comparison is deterministic once the true-parameter concentration event
holds; no concentration bound uniform over candidates is needed.

\subsubsection{Summing the Prediction Bounds}

We now combine the candidate-distance bound with the scale counts.
For a fine scale $\ell$, the preceding scale supplies two factors of
$2^{-(\ell-1)}$: one from feature uncertainty and one from the
candidate-distance bound. Their product offsets the $2^{2\ell}$ growth
in the bound on the number of assigned observations. The remaining
variance terms can then be summed across scales. At coarse scales we
use the clipped width bound of one; observations assigned to no scale
already have small uncertainty at the finest scale.

\begin{proposition}[Geometric-scale width aggregation]
\label{lin:prop:width-aggregation}
Fix a stage $h$ and use the construction in
Sections~\ref{lin:subsec:construction}--\ref{lin:subsec:confidence-tests}.
Suppose $\theta_h\in\cC_{k,h}$
for every $k\le K$ and \eqref{lin:eq:candidate-distance} holds for all
candidates and all retained scales with $2^\ell\ge128\sqrt{\Lambda_K}$.
Then, deterministically,
\begin{equation}
\sum_{k=1}^K
\left[1\wedge\sup_{\vartheta,\vartheta'\in\cC_{k,h}}
|\langle\vartheta-\vartheta',x_k\rangle|\right]
\le Cd\left[\Lambda_K^2\sqrt{\sum_{k=1}^K\sigma_k^2}+\Lambda_K^3\right].
\label{lin:eq:aggregated-width}
\end{equation}
\end{proposition}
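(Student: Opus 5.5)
We split the episodes $k\le K$ by the scale assigned in Algorithm~\ref{alg:analysis-scale-assignment} and bound the clipped width separately on each group. There are three kinds of episodes: those assigned to no scale, those assigned to a coarse scale with $2^{\ell}<256\sqrt{\Lambda_K}$, and those assigned to a fine scale $\ell$ with $2^{\ell}\ge256\sqrt{\Lambda_K}$. For a fine scale, the preceding scale $\ell-1$ still satisfies $2^{\ell-1}\ge128\sqrt{\Lambda_K}$, so Lemma~\ref{lin:lem:candidate-comparison} applies to it. Two facts will be used repeatedly.

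\emph{Scale counts.} By \eqref{lin:eq:layer-update}, each $k\in I_\ell$ multiplies $\det\Sigma_{k,\ell}$ by $1+2^{-2\ell}$. The final determinant is at most $(\lambda_\ell+K)^d$, since $w_k\|x_k\|_2\le 1$ requires checking $w_k\le1$; this holds because $\|x_k\|_{\Sigma^{-1}}>2^{-\ell}$. Hence
\[
|I_\ell|\log(1+2^{-2\ell})\le d\log(1+K/\lambda_\ell)\le Cd\Lambda_K,
\]
which gives $|I_\ell|\le Cd\Lambda_K 2^{2\ell}$.

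\emph{Non-assignment to earlier scales.} If $k\in I_\ell$, then $k$ was not assigned to scale $\ell-1$, so $\|x_k\|_{\Sigma_{k,\ell-1}^{-1}}\le 2^{-(\ell-1)}$. Similarly, if $k$ is assigned to no scale, then $\|x_k\|_{\Sigma_{k,\ell_{\max}}^{-1}}\le 2^{-\ell_{\max}}$.

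\textbf{Coarse scales.} Clip each width at one. The contribution is at most
\[
\sum_{2^\ell<256\sqrt{\Lambda_K}}|I_\ell|\le Cd\Lambda_K\sum_{2^\ell<256\sqrt{\Lambda_K}}2^{2\ell}\le Cd\Lambda_K^2.
\]

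\textbf{Unassigned episodes.} For any $\vartheta,\vartheta'$ in the $B$-ball, we have $\|\vartheta-\vartheta'\|_{\Sigma_{k,\ell_{\max}}}\le 2B\sqrt{\lambda_{\ell_{\max}}+\sum w_i^2\|x_i\|^2}$, and this is at most $C(1+B)\sqrt{K}$. This crude bound may fail, so instead I would apply the fine-scale Lemma~\ref{lin:lem:candidate-comparison} at scale $\ell_{\max}$, which qualifies by the choice of $\ell_{\max}$. It gives the distance bound $\|\vartheta-\theta_h\|_{\Sigma_{k,\ell_{\max}}}\le C2^{-\ell_{\max}}[\sqrt{\Lambda_K K}+\Lambda_K]$. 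Multiplying by $2^{-\ell_{\max}}\le[4K(1+B)]^{-1}$ and summing over $K$ episodes yields $O(\Lambda_K)$.

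\textbf{Fine scales.} This is the main step. For $k\in I_\ell$, apply the triangle inequality through $\theta_h$, then Cauchy--Schwarz in $\Sigma_{k,\ell-1}$, then Lemma~\ref{lin:lem:candidate-comparison} at scale $\ell-1$. This gives
\[
\sup_{\vartheta,\vartheta'}|\langle\vartheta-\vartheta',x_k\rangle|\le C2^{-2\ell}\Bigl[\sqrt{\Lambda_K\textstyle\sum_{i\in I_{\ell-1}}w_i^2\sigma_i^2}+\Lambda_K\Bigr].
\]
Summing over $k\in I_\ell$ and using $|I_\ell|2^{-2\ell}\le Cd\Lambda_K$, scale $\ell$ contributes at most
\[
Cd\Lambda_K\Bigl[\sqrt{\Lambda_K\textstyle\sum_i\sigma_i^2}+\Lambda_K\Bigr],
\]
where I use $w_i\le1$. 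Summing over the $\ell_{\max}\le C\Lambda_K$ scales gives $Cd[\Lambda_K^{2.5}\sqrt{\sum\sigma^2}+\Lambda_K^3]$. This is off from the target $\Lambda_K^2$ coefficient by a factor $\sqrt{\Lambda_K}$.

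The obstacle is therefore avoiding this extra logarithmic factor from summing over scales. There are two options. The first is to sharpen the per-scale bound using Cauchy--Schwarz across scales, exploiting that the $I_{\ell-1}$ are disjoint so that $\sum_\ell\sqrt{\sum_{I_{\ell-1}}\sigma^2}\le\sqrt{\ell_{\max}\sum\sigma^2}$. The second, as a fallback, is to observe that $\ell_{\max}\le C\log(\cdot)$ is itself of order $\Lambda_K$ only up to constants, and absorb the mismatch into the final $\Lambda_K^4$ allowance of Lemma~\ref{lin:lem:linear-sampled-width}. That lemma only needs the aggregated bound up to powers of $\Lambda_K$, so the weaker exponent would still suffice downstream.
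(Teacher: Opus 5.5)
Your proposal is correct and follows the paper's proof step by step: the determinant count $|I_\ell|2^{-2\ell}\le Cd\Lambda_K$, clipping at one on coarse scales, the candidate-distance bound at the preceding scale $\ell-1$ on fine scales, and the candidate-distance bound at $\ell_{\max}$ for unassigned episodes. Your first option, Cauchy--Schwarz over the disjoint sets $I_{\ell-1}$ giving $\sqrt{\ell_{\max}\sum_k\sigma_k^2}$ with $\ell_{\max}\le C\Lambda_K$, is exactly how the paper obtains the $\Lambda_K^2$ coefficient, so commit to it; the fallback would only prove a weaker $\Lambda_K^{5/2}$ version, which suffices downstream but not for the statement as written.
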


\begin{proof}
\textbf{Number of observations per scale.}
The scale rule and the determinant identity give
\begin{equation}
|I_\ell|2^{-2\ell}
=\sum_{k\in I_\ell}\|w_kx_k\|_{\Sigma_{k,\ell}^{-1}}^2
\le2\log\frac{\det\Sigma_{K+1,\ell}}{\det\Sigma_{1,\ell}}
\le2d\log\!\left(1+\frac{K(1+B)^2\,2^{2\ell}}{d}\right)
\le Cd\Lambda_K.
\label{lin:eq:layer-count}
\end{equation}
Here $\sum_{k\in I_\ell}w_k^2\|x_k\|_2^2\le K$, and the definitions of
$\ell_{\max}$ and $\Lambda_K$ ensure $\ell_{\max}\le C\Lambda_K$ and the
last logarithmic bound.

\paragraph{Coarse scales.}
For $2^\ell<256\sqrt{\Lambda_K}$, use the clipped width
bound of one. Their total contribution is at most
\[
\sum_{\ell:\,2^\ell<256\sqrt{\Lambda_K}}|I_\ell|
\le Cd\Lambda_K
\sum_{\ell:\,2^\ell<256\sqrt{\Lambda_K}}2^{2\ell}
\le Cd\Lambda_K^2.
\]
\paragraph{Fine scales.}
For $2^\ell\ge256\sqrt{\Lambda_K}$, we use the preceding scale
$\ell-1$: it did not satisfy the assignment condition, since $\ell$ was
the first scale that did. Thus
$\|x_k\|_{\Sigma_{k,\ell-1}^{-1}}\le2^{-(\ell-1)}$ for $k\in I_\ell$.
The preceding scale satisfies $2^{\ell-1}\ge128\sqrt{\Lambda_K}$, so
\eqref{lin:eq:candidate-distance} applies there. Consequently,
\[
\begin{aligned}
\sup_{\vartheta,\vartheta'\in\cC_{k,h}}
|\langle\vartheta-\vartheta',x_k\rangle|
&\le C2^{-2(\ell-1)}
\left[\sqrt{\Lambda_K\sum_{i\in I_{\ell-1}}w_i^2\sigma_i^2}
+\Lambda_K\right],\\
\sum_{k\in I_\ell}
\left[1\wedge\sup_{\vartheta,\vartheta'\in\cC_{k,h}}
|\langle\vartheta-\vartheta',x_k\rangle|\right]
&\le Cd\Lambda_K
\left[\sqrt{\Lambda_K\sum_{i\in I_{\ell-1}}w_i^2\sigma_i^2}
+\Lambda_K\right].
\end{aligned}
\]
Here the factor $2^{-2(\ell-1)}$ in the individual width cancels
the $2^{2\ell}$ factor in the count bound, up to a constant.
It remains to sum the square roots of the weighted variances. The sets
$I_\ell$ are disjoint and $w_i\le1$, so Cauchy--Schwarz gives
\[
\sum_{\ell=1}^{\ell_{\max}}
\sqrt{\sum_{i\in I_\ell}w_i^2\sigma_i^2}
\le\sqrt{\ell_{\max}\sum_{k=1}^K\sigma_k^2},
\]
and all fine scales together contribute at most
$Cd[\Lambda_K^2\sqrt{\sum_k\sigma_k^2}+\Lambda_K^3]$.

\paragraph{Observations assigned to no scale.}
For these episodes,
$\|x_k\|_{\Sigma_{k,\ell_{\max}}^{-1}}\le2^{-\ell_{\max}}$.
Apply \eqref{lin:eq:candidate-distance} at this largest scale and use
$\sum_iw_i^2\sigma_i^2\le K$. Their total contribution is at most
\[
CK\,2^{-2\ell_{\max}}\bigl[\sqrt{\Lambda_K K}+\Lambda_K\bigr]
\le C\Lambda_K,
\]
because $2^{-\ell_{\max}}\le[4K(1+B)]^{-1}$. Adding the contributions
from the three cases proves \eqref{lin:eq:aggregated-width}.
\end{proof}

\paragraph{Completing the confidence-width guarantee.}
\begin{proof}[Proof of Lemma~\ref{lem:linear-sampled-width}]
The sets in \eqref{lin:eq:confidence-set} are closed subsets of the
parameter ball and are determined by $\cF_k$. Their construction is the
same for every admissible environment and uses no fresh posterior draw.
Lemma~\ref{lin:lem:coverage} supplies one event of probability at least
$1-\delta$ on which all true parameters are covered. On that event,
Lemma~\ref{lin:lem:candidate-comparison} gives the hypotheses of
Proposition~\ref{lin:prop:width-aggregation} for every stage. Since
$\Lambda_K\ge1$,
\[
\Lambda_K^2\sqrt{\sum_k\sigma_k^2}+\Lambda_K^3
\le\Lambda_K^4\left(\sqrt{\sum_k\sigma_k^2}+1\right).
\]
Restoring $x_{k,h}=x_k$ and $\sigma_{k,h}^2(\theta)=\sigma_k^2$ proves
\eqref{lin:eq:confidence-width} simultaneously for all stages.
\end{proof}

\section{Information-Theoretic Counterexample}
\label{app:framework-counterexample}

Information-theoretic Bayesian regret analyses typically involve two
prior-dependent quantities: how costly it is to obtain information about a
learning target, and how much uncertainty that target contains initially.
These are represented by the information-ratio constant $\Gamma_\rho^Z$ and
the target entropy $H_\rho(Z)$. The counterexample below shows that their
worst cases can occur under different priors even when their same-prior product
stays small. Thus replacing
$\sup_\rho[\Gamma_\rho^ZH_\rho(Z)]$ by the product of two separate suprema can
create an artificial polynomial loss. The standard information-ratio argument
itself remains valid.

\subsection{The standard information-ratio template}

A learning target is a random variable $Z=f(M^\star)$ chosen to represent what
the learner seeks to identify. Depending on the analysis, it may be the full
model when the model class is discrete, the optimal policy, or another
function of the unknown MDP. Let
$O_k=(S_{k,h},A_{k,h},R_{k,h},S_{k,h+1})_{h=1}^H$ denote the episode-$k$
observation and let
$\rho_k(\cdot):=\mathbb P(M^\star\in\cdot\mid\mathcal F_k)$. For a policy
$\pi$ executed under $\rho_k$, $I_{\rho_k}(Z;O_k\mid\pi)$ denotes the mutual information
between the target and the episode observation under the corresponding
posterior predictive law. For discrete $Z$,
\[
H_\rho(Z):=-\sum_z\mathbb P_\rho(Z=z)
\log\mathbb P_\rho(Z=z).
\]
Define $\Gamma_\rho^Z$ as the smallest constant such that, almost surely for
every $k\le K$,
\[
\bigl(\mathbb E_k[\Delta_k]\bigr)^2
\le \Gamma_\rho^Z I_{\rho_k}(Z;O_k\mid\pi_k).
\]
If no finite constant satisfies these inequalities, set
$\Gamma_\rho^Z=+\infty$.
Thus $I_{\rho_k}(Z;O_k\mid\pi)$ is the information obtained in one episode,
$\Gamma_\rho^Z$ measures how much posterior regret is paid per unit of
information, and $H_\rho(Z)$ measures the uncertainty present initially
\citep{russo2016,russo2018}. Related information-theoretic analyses for RL
appear in \citep{hao2022,moradipari2023}.

For the counterexample, consider a horizon-three MDP with
$\mathcal S=\{s_0,s_1,\ldots,s_A,g,b\}$ and $\mathcal A=[A]$.
In this example, the fixed initial state is $s_0$, so $\Delta_k$ and Bayesian
regret are evaluated at $s_0$ in place of $s_1$.
Stage one moves
from $s_0$ uniformly to one of the intermediate states
$s_1,\ldots,s_A$, independently of the chosen action. At stage two, each
$s_j$ has a unique optimal action, and we write
\[
a_j^\star:=\pi_{M^\star,2}^\star(s_j),
\qquad
a^\star=(a_1^\star,\ldots,a_A^\star).
\]
The stage-two transition law is
\[
P_2^{M^\star}(g\mid s_j,a)=\mathbf1\{a=a_j^\star\},
\qquad
P_2^{M^\star}(b\mid s_j,a)=\mathbf1\{a\ne a_j^\star\},
\]
All earlier rewards are zero, and the terminal rewards at $g$ and $b$ are one
and zero. Hence the target in
this counterexample is $a^\star$, exactly the stage-two optimal policy
restricted to the intermediate states, or equivalently the part of the
transition kernel that determines the optimal decisions.

\subsection{How information accumulates under PSRL}

The following identity is stated for a generic target $Z=f(M^\star)$; later we
apply it with $Z=a^\star$. The posterior draw determines which policy is
executed, but it is not itself an observation of the true MDP. The information
about $Z$ acquired in episode $k$ comes from the trajectory generated by that
sampled policy.

\begin{lemma}[Episode information under PSRL]
\label{lem:psrl-episode-information}
For every episode $k$,
\[
I(Z;\mathcal F_{k+1}\mid\mathcal F_k)
=\mathbb E\!\left[I_{\rho_k}(Z;O_k\mid\pi_k)\right].
\]
\end{lemma}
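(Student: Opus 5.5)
The identity follows from the chain rule for mutual information once the episode-$k$ update is split into its two parts: the fresh posterior draw $M_k$ (equivalently, the algorithmic randomness that produces it) and the observation $O_k$. Write $\mathcal F_{k+1}=\sigma(\mathcal F_k,U_k,O_k)$, where $U_k$ is the fresh randomness with $M_k=g(\mathcal F_k,U_k)$ and $\pi_k=\pi^\star_{M_k}$. If $\mathcal F_{k+1}$ also records $M_k$ or $\pi_k$ explicitly, these are functions of $(\mathcal F_k,U_k)$ and add nothing. The chain rule then gives
\[
I(Z;\mathcal F_{k+1}\mid\mathcal F_k)
=I(Z;U_k\mid\mathcal F_k)+I(Z;O_k\mid\mathcal F_k,U_k).
\]

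The first term vanishes. By construction, $U_k$ is drawn independently of $M^\star$ given $\mathcal F_k$, and $Z=f(M^\star)$, so $U_k$ is conditionally independent of $Z$ given $\mathcal F_k$. This is exactly the remark before the lemma: the posterior draw selects the policy but is not an observation of the true MDP.

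For the second term, I would condition on $(\mathcal F_k,U_k)$. Since $U_k\perp M^\star\mid\mathcal F_k$, the conditional law of $M^\star$ given $(\mathcal F_k,U_k)$ is still $\rho_k$. Given $M^\star$ and $(\mathcal F_k,U_k)$, the observation $O_k$ is generated by running the now-fixed policy $\pi_k$ in $M^\star$. Hence the conditional joint law of $(Z,O_k)$ given $(\mathcal F_k,U_k)$ is exactly the posterior predictive law under $\rho_k$ with policy $\pi_k$. It follows that
\[
I(Z;O_k\mid\mathcal F_k,U_k)=\mathbb E\bigl[I_{\rho_k}(Z;O_k\mid\pi_k)\bigr],
\]
after averaging the pointwise conditional mutual information over $(\mathcal F_k,U_k)$, and combining the two terms proves the lemma.

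The main obstacle is measure-theoretic rather than conceptual. Conditional mutual information must be defined through regular conditional distributions, which exist here by the standard Borel conventions of Appendix~\ref{app:tabular-proof}. The chain rule must be valid even when some terms are infinite; this is unproblematic when $Z$ is discrete with finite entropy, as for $Z=a^\star$, and in general it holds in the extended sense. I would use the Kullback--Leibler definition of conditional mutual information and invoke the general chain rule, with the conditional-independence step as the only model-specific input.
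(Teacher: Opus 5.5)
Your proposal is correct and follows essentially the same route as the paper. You split $\mathcal F_{k+1}$ into the fresh draw and $O_k$, apply the chain rule, and eliminate the first term because the draw is conditionally independent of $M^\star$ given $\mathcal F_k$. You then identify the remaining term with the posterior-predictive information under the executed policy. The only difference is that you condition on the raw seed $U_k$, whereas the paper conditions on $M_k$ and then separately notes that $O_k$ depends on $M_k$ only through $\pi_k$.
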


\begin{proof}
Since $Z$ is a function of $M^\star$ and $M_k$ is a fresh posterior draw,
\[
M_k\perp(M^\star,Z)\mid\mathcal F_k,
\qquad
I(Z;M_k\mid\mathcal F_k)=0.
\]
Moreover, $M_k$ affects the episode only through
$\pi_k=\pi_{M_k}^\star$. Once $\pi_k$ is fixed, the remaining details of
$M_k$ do not affect the observation because $O_k$ is generated by executing
$\pi_k$ in the true model $M^\star$. Thus
\[
\mathcal L(O_k\mid\mathcal F_k,M^\star,M_k)
=\mathcal L(O_k\mid\mathcal F_k,M^\star,\pi_k).
\]
The mutual-information chain rule now gives
\[
\begin{aligned}
I(Z;\mathcal F_{k+1}\mid\mathcal F_k)
&=I(Z;M_k,O_k\mid\mathcal F_k)
=I(Z;O_k\mid\mathcal F_k,M_k)\\
&=I(Z;O_k\mid\mathcal F_k,\pi_k)
=\mathbb E\!\left[I_{\rho_k}(Z;O_k\mid\pi_k)\right].
\end{aligned}
\]
The last equality averages the information gain of the realized PSRL policy
over the history and posterior draw.
\end{proof}

Combining Lemma~\ref{lem:psrl-episode-information} with the one-episode ratio
bound and Cauchy--Schwarz yields, for a generic discrete target $Z$,
\[
\operatorname{BReg}_\rho(K)
\le \mathbb E\sum_{k=1}^K
\sqrt{\Gamma_\rho^Z I_{\rho_k}(Z;O_k\mid\pi_k)}
\le \sqrt{K\Gamma_\rho^Z
\sum_{k=1}^K\mathbb E\!\left[
I_{\rho_k}(Z;O_k\mid\pi_k)\right]}
\le \sqrt{K\Gamma_\rho^Z H_\rho(Z)}.
\]
The last step uses the mutual-information chain rule: the total information
revealed across all episodes cannot exceed the initial entropy of the target.
The inequality is immediate when $\Gamma_\rho^Z=+\infty$.

\subsection{Two priors over the stage-two transition structure}

We now set the target to the optimal-action vector $a^\star$. The two priors
differ only in how these optimal actions, equivalently the relevant stage-two
transition rows, are related across the intermediate states.

Under the \emph{shared-optimal-action prior} $\rho_G$, one optimal action
$a_{\mathrm{sh}}^\star\sim\operatorname{Unif}([A])$ is shared by every
intermediate state:
\[
a_j^\star=a_{\mathrm{sh}}^\star,
\qquad j\in[A].
\]
Thus every $s_j$ has the same optimal action, and the relevant transition rows
are perfectly correlated through $a_{\mathrm{sh}}^\star$.

Under the \emph{state-wise independent prior} $\rho_L$,
\[
a_1^\star,\ldots,a_A^\star
\overset{\mathrm{iid}}{\sim}\operatorname{Unif}\{1,2\}.
\]
Thus each intermediate state has its own independently drawn optimal action;
learning the optimal action at one state gives no information about the
optimal action at another state.

Figure~\ref{fig:crossed-prior-mdp} records the common MDP structure.

\begin{figure}[ht]
\centering
\begin{tikzpicture}[
    >=stealth,
    state/.style={circle,draw,minimum size=2.25em,inner sep=1pt}
]
    \node[state] (s0) at (0,2.4) {$s_0$};
    \node[state] (sj) at (0,0.8) {$s_j$};
    \node[state] (g) at (-1.45,-1.0) {$g$};
    \node[state] (b) at (1.45,-1.0) {$b$};

    \draw[->] (s0) -- node[right] {$j\sim\operatorname{Unif}([A])$} (sj);
    \draw[->] (sj) -- node[above left] {$a=a_j^\star$} (g);
    \draw[->] (sj) -- node[above right] {$a\ne a_j^\star$} (b);

    \node at (-1.45,-1.55) {$r_3=1$};
    \node at (1.45,-1.55) {$r_3=0$};
\end{tikzpicture}
\caption{At stage one the MDP moves uniformly from $s_0$ to an intermediate
state $s_j$. At stage two, the unique optimal action $a_j^\star$ transitions
to $g$ and every other action transitions to $b$.}
\label{fig:crossed-prior-mdp}
\end{figure}

\begin{proposition}[Crossed worst cases]
\label{prop:separate-envelope-gap}
For every $A\ge4$ and $K\ge1$, the MDP and priors above satisfy
\[
\begin{aligned}
\Gamma_{\rho_G}^{a^\star}
&=\Theta\!\left(\frac{A}{\log A}\right),
& H_{\rho_G}(a^\star)&=\log A,\\
\Gamma_{\rho_L}^{a^\star}
&=\frac{1}{4\log2}=\Theta(1),
& H_{\rho_L}(a^\star)&=A\log2.
\end{aligned}
\]
Consequently,
\[
\begin{aligned}
\sup_{\rho\in\{\rho_G,\rho_L\}}
\bigl[\Gamma_\rho^{a^\star}H_\rho(a^\star)\bigr]
&=\Theta(A),\\
\left(\sup_{\rho\in\{\rho_G,\rho_L\}}
\Gamma_\rho^{a^\star}\right)
\left(\sup_{\rho\in\{\rho_G,\rho_L\}}
H_\rho(a^\star)\right)
&=\Theta\!\left(\frac{A^2}{\log A}\right).
\end{aligned}
\]
Since $S=A+3$, replacing the coupled supremum by the product of the two
separate suprema enlarges the resulting regret bound by
$\Theta(\sqrt{S/\log S})$.
\end{proposition}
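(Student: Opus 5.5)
The plan is to compute each of the four quantities directly from the posterior dynamics of PSRL in the horizon-three MDP, and then multiply. The two entropy values are immediate. Under $\rho_G$ the vector $a^\star$ is determined by $a^\star_{\mathrm{sh}}\sim\operatorname{Unif}([A])$, so $H_{\rho_G}(a^\star)=\log A$. Under $\rho_L$ it is uniform on $\{1,2\}^A$, so $H_{\rho_L}(a^\star)=A\log 2$.

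For the information ratios I will classify every reachable posterior $\rho_k$. In each class I compute $\mathbb E_k[\Delta_k]$ and $I_{\rho_k}(a^\star;O_k\mid\pi_k)$ for each realized sampled policy, and then take the supremum of their ratio. Two facts hold for both priors:
\begin{itemize}
\item The stage-one move to $s_j$ is independent of $M^\star$ and of the action, so it carries no information about $a^\star$.
\item At $s_j$, the observed next state ($g$ or $b$) reveals exactly whether the sampled action equals $a^\star_j$.
\end{itemize}

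\emph{Shared prior.} By induction on $k$, the posterior is uniform on a candidate set of size $m\in\{1,\dots,A\}$. Failures remove the tried action, and a single success collapses the set to $m=1$. PSRL plays one candidate $a$ at every $s_j$, so:
\begin{itemize}
\item the regret is $\mathbb E_k[\Delta_k]=1-1/m$;
\item given $\pi_k$, the observation is a Bernoulli$(1/m)$ indicator of $\{a^\star_{\mathrm{sh}}=a\}$, so the information is the binary entropy
\[
I=\tfrac1m\log m+\bigl(1-\tfrac1m\bigr)\log\tfrac{m}{m-1}.
\]
\end{itemize}
This value does not depend on which candidate was sampled. The ratio $(1-1/m)^2/I$ equals $\Theta(m/\log m)$ for $m\ge2$ and is $0/0$ (harmless) at $m=1$. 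Hence $\Gamma_{\rho_G}^{a^\star}=\Theta(A/\log A)$, attained already at $k=1$ where $m=A$.

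\emph{Independent prior.} The posterior is a product over states: each $a^\star_j$ is either known or uniform on $\{1,2\}$. Let $u$ be the number of unknown states. PSRL plays the known action at known states and a uniform choice from $\{1,2\}$ at unknown ones, so:
\begin{itemize}
\item the regret is $\mathbb E_k[\Delta_k]=\tfrac uA\cdot\tfrac12$;
\item an unknown state is visited with probability $u/A$, and the binary outcome then reveals $a^\star_j$ completely, so $I=\tfrac uA\log2$, again for every realized sampled policy.
\end{itemize}
The ratio is $u/(4A\log2)$. It is maximized at $u=A$, which is reachable at $k=1$, giving exactly $\Gamma_{\rho_L}^{a^\star}=1/(4\log 2)$.

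Multiplying gives the consequences. The same-prior products are $\Theta(A)$ for $\rho_G$ and $A/4$ for $\rho_L$, so the coupled supremum is $\Theta(A)$. The separate suprema are $\Theta(A/\log A)$ and $A\log2$, and their product is $\Theta(A^2/\log A)$. Inside $\sqrt{K\Gamma H}$ the two bounds differ by a factor $\Theta(\sqrt{A/\log A})$. Since $S=A+3$, this equals $\Theta(\sqrt{S/\log S})$.

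The main obstacle I expect is bookkeeping in the definition of $\Gamma_\rho^Z$, which must hold almost surely at every reachable posterior and be read conditionally on the realized $\pi_k$. I will handle it by checking that the information is policy-invariant within each posterior class, which is a symmetry argument in both priors. I will also check that the ratio is monotone in $m$ (respectively $u$). The monotonicity of $m\mapsto(1-1/m)^2/I$ needs a short asymptotic-plus-small-$m$ check rather than exact monotonicity, which is enough for the $\Theta$ claim.
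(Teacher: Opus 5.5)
Your proposal is correct and follows the paper's proof essentially step for step. It uses the same entropy computations, the same reachable-posterior classification (uniform on $m$ candidates under $\rho_G$; resolved versus unresolved coordinates under $\rho_L$), and the same per-class ratios $\Theta(m/\log m)$ and $u/(4A\log 2)$, each maximized at the initial posterior. Your explicit check that the information gain is identical for every realized sampled policy is a worthwhile addition. The definition of $\Gamma_\rho^Z$ evaluates $I_{\rho_k}(Z;O_k\mid\pi_k)$ at the random $\pi_k$, and the paper leaves this point implicit.
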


\begin{proof}
\par\medskip\noindent\textbf{Shared-optimal-action prior $\rho_G$.}\ 
Only the common optimal action $a_{\mathrm{sh}}^\star$ is unknown, so
\[
H_{\rho_G}(a^\star)=\log A.
\]
Suppose the current posterior leaves $m\ge2$ possible values of
$a_{\mathrm{sh}}^\star$. Testing one remaining action succeeds with
probability $1/m$; a failure eliminates only that action. Therefore
\[
\begin{aligned}
\mathbb E_k[\Delta_k]
&=1-\frac1m,\\
I_{\rho_k}(a^\star;O_k\mid\pi_k)
&=-\frac1m\log\frac1m
-\left(1-\frac1m\right)\log\left(1-\frac1m\right)
=\Theta\!\left(\frac{\log m}{m}\right),\\
\frac{(\mathbb E_k[\Delta_k])^2}
{I_{\rho_k}(a^\star;O_k\mid\pi_k)}
&=\Theta\!\left(\frac m{\log m}\right).
\end{aligned}
\]
At a resolved history both the regret and information gain are zero. Since
$m/\log m\le C A/\log A$ for $2\le m\le A$, with equality in order at the
initial posterior,
\[
\Gamma_{\rho_G}^{a^\star}
=\Theta\!\left(\frac A{\log A}\right),
\qquad
\Gamma_{\rho_G}^{a^\star}H_{\rho_G}(a^\star)=\Theta(A).
\]
The uncertainty is small because one action determines the optimal policy at
all intermediate states, but identifying that shared action is expensive:
each failed episode removes at most one candidate.

\par\medskip\noindent\textbf{State-wise independent prior $\rho_L$.}\ 
Now the intermediate states carry independent binary optimal actions, so
\[
H_{\rho_L}(a^\star)=A\log2,
\qquad
\rho_k(a_j^\star=1)\in\left\{0,\frac12,1\right\}.
\]
The sum below counts the states whose optimal actions remain unresolved. Then
\[
\begin{aligned}
\mathbb E_k[\Delta_k]
&=\frac1{2A}\sum_{j=1}^A
\mathbf1\!\left\{\rho_k(a_j^\star=1)=\frac12\right\},\\
I_{\rho_k}(a^\star;O_k\mid\pi_k)
&=\frac{\log2}{A}\sum_{j=1}^A
\mathbf1\!\left\{\rho_k(a_j^\star=1)=\frac12\right\},\\
\frac{(\mathbb E_k[\Delta_k])^2}
{I_{\rho_k}(a^\star;O_k\mid\pi_k)}
&=\frac1{4A\log2}
\sum_{j=1}^A\mathbf1\!\left\{\rho_k(a_j^\star=1)=\frac12\right\}
\le\frac1{4\log2}.
\end{aligned}
\]
Equality holds at the initial posterior, hence
\[
\Gamma_{\rho_L}^{a^\star}=\frac1{4\log2}=\Theta(1),
\qquad
\Gamma_{\rho_L}^{a^\star}H_{\rho_L}(a^\star)=\frac A4=\Theta(A).
\]
The uncertainty is large because every intermediate state has its own unknown
optimal action, but information is cheap: the first informative visit to an
unresolved state identifies its binary optimal action.

\par\medskip\noindent\textbf{Comparison.}\ 
For either prior separately, the ratio--entropy product is only order $A$:
\[
\sup_{\rho\in\{\rho_G,\rho_L\}}
\Gamma_\rho^{a^\star}H_\rho(a^\star)=\Theta(A).
\]
However, the largest information ratio comes from the shared-optimal-action
prior $\rho_G$, whereas the largest entropy comes from the state-wise
independent prior $\rho_L$. Taking these two worst cases separately gives
\[
\left(\sup_{\rho\in\{\rho_G,\rho_L\}}\Gamma_\rho^{a^\star}\right)
\left(\sup_{\rho\in\{\rho_G,\rho_L\}}H_\rho(a^\star)\right)
=\Theta\!\left(\frac{A^2}{\log A}\right).
\]
The two suprema select different priors. Since $S=A+3$, this artificial
factorization enlarges the resulting regret bound by
$\Theta(\sqrt{S/\log S})$.
\end{proof}

\paragraph{Why the obstruction is not specific to an optimal-policy target.}
The notation above uses the optimal-action vector $a^\star$ because it makes
the MDP mechanism transparent, but the crossed-prior phenomenon is not tied
to this particular target. In the constructed model class, all rewards and
all transitions except the stage-two rows above are fixed. Consequently,
\[
a^\star
\quad\longleftrightarrow\quad
P_2^{M^\star}
\quad\longleftrightarrow\quad
M^\star
\]
is a one-to-one correspondence: $a^\star$ determines the unknown transition
kernel and hence the full model, while $a_j^\star$ can be recovered from the
unique action satisfying
$P_2^{M^\star}(g\mid s_j,a)=1$. With the fixed tie-breaking rule, the same is
true for the full optimal policy. Therefore, if the learning target is instead
$M^\star$, $P_2^{M^\star}$, or the full optimal policy, then for every prior in
this construction its entropy and its mutual information with $O_k$ are
exactly the same as those of $a^\star$. Hence the corresponding
information-ratio constant and the crossed-prior gap are unchanged.

More generally, the same argument applies to any target that is in one-to-one
correspondence with $a^\star$ on this model class. It does not claim the same
calculation for an arbitrary coarser target that discards some of the
optimal-action information.

The counterexample therefore does not invalidate the prior-dependent
information-ratio bound. It shows only that a sharp arbitrary-prior guarantee
cannot in general be obtained by maximizing the information ratio and target
entropy separately. An information-theoretic proof seeking a minimax-order
arbitrary-prior result must preserve their same-prior coupling, or avoid this
factorization altogether.

\section{Time-Homogeneous Specializations}
\label{app:homogeneous-specializations}

The preceding proofs treat the transition laws at different stages
separately. Under time homogeneity, observations can instead be pooled across
stages. The posterior-sampling identity and Bellman variance bounds are
unchanged; below we give only the changes to confidence sets and cumulative
widths. All pooled confidence sets still use only pre-episode data.

\subsection{Homogeneous tabular MDPs}

Assume $J_h^M(\cdot\mid s,a)=J_1^M(\cdot\mid s,a)$ for every $h$ and every
candidate $M$. The prior over these shared reward--transition laws may
correlate all state--action pairs. Set
$\beta_K^{\mathrm{hom}}=\log(c_0SAH(S+H)K^2)$.

\begin{corollary}[Homogeneous tabular guarantee]
\label{thm:homogeneous-tabular}
For every prior on homogeneous tabular models and every $K\ge1$, exact
vanilla PSRL satisfies
\begin{equation}
\operatorname{BReg}_{\rho}(K)
\le C\sqrt{SAH^2K\,\beta_K^{\mathrm{hom}}L_K}
+CS^2AH\,\beta_K^{\mathrm{hom}}L_K
+C\sqrt{SAHK\,\beta_K^{\mathrm{hom}}}
+CSAH^2+CH.
\label{eq:hom-tabular-bound}
\end{equation}
The bound may be replaced by its minimum with $HK$. In particular,
$\operatorname{BReg}_{\rho}(K)
=\widetilde O(H\sqrt{SAK}+SAH(S+H))$, whose leading term matches the
homogeneous tabular minimax rate up to logarithmic factors
\citep{azar2017,domingues2021}.
\end{corollary}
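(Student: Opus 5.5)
The plan is to rerun the tabular proof of Theorem~\ref{thm:arbitrary-correlated-prior} essentially verbatim, changing only the confidence sets and the visit-count aggregation so that observations are pooled across stages. Lemma~\ref{lem:posterior-simulation-identities}, the empirical-reference decomposition, Lemma~\ref{lem:bellman-square-telescope} and Proposition~\ref{prop:three-variance-bounds} never use time-inhomogeneity, so they carry over unchanged. The rows become $x=(s,a)$. For a row we expose its pooled observation sequence $(R_{x,j},Y_{x,j})_{j\ge1}$ in visit order, across all stages. Under a fixed homogeneous model these are i.i.d.\ from $J_1^M(\cdot\mid s,a)$, because adaptive visitation only selects which row reveals its next unused pair. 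The pooled count $N_k(x)$ can reach $KH$, not just $K$.

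\textbf{Step 1: confidence sets.} The coordinate and reward conditions are the same as before, now with prefixes $n\le KH$. The candidate-aligned condition \eqref{eq:complete-candidate-aligned-confidence} must be imposed for every stage-specific direction $V^\star_{h+1,M}$, $h=1,\dots,H$. Each of these is deterministic for a fixed $M$, so empirical Bernstein still applies. The union bound then runs over $SA$ rows, $KH$ prefixes, and $S+H$ tests (coordinates plus value directions). This is exactly why $\beta_K^{\mathrm{hom}}=\log(c_0SAH(S+H)K^2)$ gives failure probability at most $1/(4K)$. Posterior calibration (Lemma~\ref{lem:confidence-set-coverage}) is then unchanged. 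Lemma~\ref{lem:projection-variance-transfer} and Lemma~\ref{lem:local-reference-bridge} also hold verbatim, with $N_{k,h}$ replaced by the pooled pre-episode count $N_k(S_{k,h},A_{k,h})$.

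\textbf{Step 2: visit-count sums.} This is the main obstacle. A row can now be visited up to $H$ times in a single episode, but the count used at those visits is the pre-episode count, which stays frozen during the episode. So the clean argument of Lemma~\ref{lem:visit-count-sums}, where the $j$-th visit has count $j-1$, fails. I would fix this in the standard way, by splitting episodes. If within episode $k$ a row's visits $m_k(x)$ satisfy $m_k(x)\le N_k(x)\vee1$, the frozen count is within a factor $2$ of the running count. This gives
\[
\sum_{k,h}\frac{1}{N\vee1}\le C\,SA\,L_K
\qquad\text{and}\qquad
\sum_{k,h}\frac{1}{\sqrt{N\vee1}}\le C\sqrt{SA\cdot KH}.
\]
The remaining visits, where a row is hit more than its prior count, can only happen while $N_k(x)<H$. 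In total these contribute at most $O(SAH)$ visits per row-doubling phase, and they are charged directly: $|e|\le H$ for transitions and $|e|\le1$ for rewards. This charging produces the additive $CSAH^2$ term.

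\textbf{Step 3: closing.} Substituting these sums into the aggregation of Appendix~\ref{app:regret-closure} gives the following. The aligned terms contribute $\sqrt{SA\,\beta L_K}\sqrt{H^2K+H\,\mathrm{BReg}}$, which yields the leading term $\sqrt{SAH^2K\beta L_K}$. The mismatch term contributes $\sqrt{S^2A\beta L_K\cdot H(W_P+W_r+\mathrm{BReg})}$, which Young absorbs into $CS^2AH\beta L_K$. The $1/N$ term gives $CSH\beta\cdot SAL_K$. The reward error gives $W_r\le C\sqrt{SAHK\beta}$. The failure budget contributes $CH$, and the unmatched small-count visits contribute $CSAH^2$. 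Together these give \eqref{eq:hom-tabular-bound}; capping at the trivial bound $HK$ gives the final form.
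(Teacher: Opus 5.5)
Your proposal is correct and follows essentially the same route as the paper: pooled per-pair stacks with the candidate-aligned test imposed in all $H$ value directions (which is what $\beta_K^{\mathrm{hom}}$ pays for), unchanged identities and variance bounds, a frozen-count split whose directly charged visits occur only while the pre-episode count is below $H$ (at most $2H$ per pair in total, not per doubling phase, giving $CSAH^2$), and the same Young absorption. The only difference is that the paper splits on $N^{\mathrm{hom}}_{k,h}\ge H$ rather than on $m_k(x)\le N_k(x)\vee1$. With the paper's split, the $1/N$ sum telescopes from $H$ to $KH$ and gives exactly $CSAL_K$, whereas your split leaves small counts in that sum and yields $CSA(1+\log(KH))$. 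This is harmless: either also charge all visits with $N_k(x)<H$ directly, or note that $\log(KH)\le 2L_K$ when $H\le K$ and $\operatorname{BReg}_\rho(K)\le HK\le H^2$ otherwise.
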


\begin{proof}
\textbf{Pooled confidence sets.}
For each $(s,a)$, use one reward--next-state observation stack, ordered by
visits across all stages. Apply the confidence tests of
Appendix~\ref{app:concentration-calibration} to every prefix available before
episode $k$, replacing $\beta_K$ by $\beta_K^{\mathrm{hom}}$.
The coordinate and reward tests use the shared laws $P_1^M,r_1^M$.
The candidate-aligned test is imposed for each of the $H$ directions
$V^\star_{t+1,M}$, $t\in[H]$: value functions remain stage dependent even
when the model is homogeneous. Let $\mathcal C_k$ be the set of models passing
these pooled tests. Under each fixed $M$, each pooled stack is
i.i.d., so the same concentration argument gives
\[
\mathbb P_M(M\notin\mathcal C_k\text{ for some }k\le K)
\le CSA(HK)(S+H)e^{-\beta_K^{\mathrm{hom}}}\le\frac1{4K}.
\]
Posterior calibration therefore gives
$\sum_k\mathbb P(\mathcal G_k^c)\le1/2$, where
$\mathcal G_k=\{M^\star,M_k\in\mathcal C_k\}$.
Lemma~\ref{lem:local-reference-bridge} now applies with the pooled count
\[
N^{\mathrm{hom}}_{k,h}
:=\sum_{\ell<k}\sum_{j=1}^H
\mathbf1\{(S_{\ell,j},A_{\ell,j})=(S_{k,h},A_{k,h})\}
\]
and $\beta_K^{\mathrm{hom}}$. The count table is frozen during each episode.

\textbf{Pooled counts and small-count visits.}
For one pair with pre-episode count $n\ge H$ and $q\le H$ visits in that
episode, $q\le n$ and hence
\[
\frac qn\le2\log\frac{n+q}{n},
\qquad
\frac q{\sqrt n}\le\sqrt2\sum_{j=n}^{n+q-1}\frac1{\sqrt j}.
\]
The first bound telescopes from a count at least $H$ to at most $KH$.
The second is summed over the sequential visits and then over the $SA$
pairs, using Cauchy--Schwarz as in Lemma~\ref{lem:visit-count-sums}.
Before a pair reaches count $H$ it contributes fewer than $H$ visits,
and the crossing episode contributes at most another $H$. Thus
\begin{equation}
\begin{gathered}
\sum_{k,h:\,N^{\mathrm{hom}}_{k,h}\ge H}\frac{1}{N^{\mathrm{hom}}_{k,h}}
\le CSA L_K,\qquad
\sum_{k,h:\,N^{\mathrm{hom}}_{k,h}\ge H}\frac{1}{\sqrt{N^{\mathrm{hom}}_{k,h}}}
\le C\sqrt{SAHK},\\
\sum_{k,h}\mathbf1\{N^{\mathrm{hom}}_{k,h}<H\}\le2SAH.
\end{gathered}
\label{eq:hom-pooled-counts}
\end{equation}
Since $|e^P_{k,h}|+|e^r_{k,h}|\le H+1$, the small-count contribution is
at most $CSAH^2$.

\textbf{Applying the existing variance bounds.}
Let $W_P$ and $W_r$ be the expected absolute transition and reward errors
summed only over $\mathcal G_k\cap\{N^{\mathrm{hom}}_{k,h}\ge H\}$.
The regret reduction and the pooled reward bound give
\[
\operatorname{BReg}_{\rho}(K)\le W_P+W_r+CSAH^2+CH,
\qquad W_r\le C\sqrt{SAHK\,\beta_K^{\mathrm{hom}}}.
\]
Proposition~\ref{prop:three-variance-bounds} is unchanged. Substituting
\eqref{eq:hom-pooled-counts} into the aggregation in
Appendix~\ref{app:regret-closure}, and including the small-count errors in
the variance bounds, gives
\[
\begin{aligned}
W_P\le{}&
C\sqrt{SAH^2K\,\beta_K^{\mathrm{hom}}L_K}
+CS^2AH\,\beta_K^{\mathrm{hom}}L_K\\
&+C\sqrt{S^2AH\,\beta_K^{\mathrm{hom}}L_K
\bigl(W_P+W_r+CSAH^2+\operatorname{BReg}_{\rho}(K)+H\bigr)}.
\end{aligned}
\]
Using the regret reduction inside the square root and applying Young's
inequality, exactly as in Appendix~\ref{app:regret-closure}, yields
\eqref{eq:hom-tabular-bound}.
\end{proof}

\subsection{Homogeneous linear-mixture MDPs}

Specialize Appendix~\ref{app:linear-mixture} to
$r_h=r_1$, $\phi_h=\phi_1$, and $\theta_h=\theta_1$ for every $h$.
Rewards remain known and deterministic, and the prior on the shared
transition parameter is arbitrary.

\begin{corollary}[Homogeneous linear-mixture guarantee]
\label{thm:homogeneous-linear-mixture}
For every prior on the admissible homogeneous parameter tuples and every
$K\ge1$, exact vanilla PSRL satisfies
\begin{equation}
\operatorname{BReg}_{\rho}(K)
\le Cd\sqrt{H^2K}\,\Lambda_K^4
+CdH^2\Lambda_K^4+Cd^2H\Lambda_K^8+CH.
\label{eq:hom-linear-bound}
\end{equation}
The bound may be replaced by its minimum with $HK$. In particular,
$\operatorname{BReg}_{\rho}(K)
=\widetilde O(d\sqrt{H^2K}+dH^2+d^2H)$.
\end{corollary}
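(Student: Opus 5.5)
The plan is to rerun the proof of Theorem~\ref{thm:linear-mixture} with one shared parameter $\theta_1$ in place of the $H$ stage parameters. All $KH$ observations are pooled into a single regression stream, ordered by episode and then by stage. Fix an admissible homogeneous $\theta$ and apply the construction of Section~\ref{lin:subsec:width-construction} to this one stream, with features $x_{k,h}$, responses $y_{k,h}$ and variances $\sigma_{k,h}^2(\theta)$ as in \eqref{lin:eq:sampled-value-data}.

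\emph{Confidence sets.} The confidence sets $\cC_k$ must depend only on $\cF_k$, so within an episode I freeze the tested statistic. Concretely, the test in \eqref{lin:eq:confidence-set} uses $Q_{k,\ell}$, $g_{k,\ell}$ and $\Sigma_{k,\ell}$ built from all observations of episodes $i<k$. Scale assignment (Algorithm~\ref{alg:analysis-scale-assignment}) still runs sequentially over the pooled stream, so Lemma~\ref{lin:lem:coverage} and Lemma~\ref{lin:lem:candidate-comparison} hold verbatim. The only changes are that the horizon of the martingales is $KH$ rather than $K$, which changes $\Lambda_K$ by a constant only, and that no union bound over stages is needed. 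Posterior calibration then gives, exactly as in \eqref{lin:eq:posterior-calibration}, $\sum_k\Pp(\theta_k\notin\cC_k)\le K(4HK)^{-2}$.

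\emph{Main obstacle: freezing the matrix within an episode.} The difficulty is in Proposition~\ref{lin:prop:width-aggregation}. Because the matrix is frozen during an episode, an observation's actual uncertainty is measured against $\Sigma$ at the start of its episode, not against the sequentially updated matrix. I would handle this by a doubling argument. An episode is called \emph{stale} at scale $\ell$ if $\det\Sigma_{\ell}$ more than doubles during it. Each scale has at most $O(d\Lambda_K)$ stale episodes, by the log-determinant bound \eqref{lin:eq:layer-count}. Observations in stale episodes are charged the clipped width $1$. There are at most $H$ of them per stale episode and $\ell_{\max}\le C\Lambda_K$ scales, giving a total of $O(dH\Lambda_K^2)$. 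In non-stale episodes, $\|x\|_{\Sigma_{\mathrm{frozen}}^{-1}}\le\sqrt2\,\|x\|_{\Sigma_{\mathrm{current}}^{-1}}$, so the fine-scale argument goes through unchanged up to constants. Summing over the single stream then gives
\[
\sum_{k,h}\Bigl[1\wedge\sup_{\vartheta,\vartheta'\in\cC_k}|\langle\vartheta-\vartheta',x_{k,h}\rangle|\Bigr]
\le Cd\Lambda_K^4\Bigl(\sqrt{\textstyle\sum_{k,h}\sigma_{k,h}^2}+H\Bigr).
\]
The key gain is a single square root over all $KH$ terms, instead of the $H$ separate square roots over $K$ terms that appear in the inhomogeneous case.

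\emph{Closing.} Multiplying the width bound by $H$ as in \eqref{lin:eq:error-by-width} and taking expectations with Jensen gives
\[
W_P\le Cd\Lambda_K^4\Bigl[H^2\textstyle\sum_{k,h}\E\Var_{P^{\theta^\star}}(V^\star_{h+1,\theta_k})\Bigr]^{1/2}+CdH^2\Lambda_K^4+CH,
\]
where the additive $CH$ absorbs the failure events. Proposition~\ref{prop:linear-cross-model-variance} holds unchanged, bounding the variance sum by $2H^2K+2HW_P$. Substituting yields $W_P\le Cd\Lambda_K^4\sqrt{H^4K}+Cd\Lambda_K^4H^{3/2}\sqrt{W_P}+\dots$. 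That is an $H^2$ leading term, too crude, so the variance sum must be tightened before substituting.

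To tighten it, I would use the total-variance fact that the per-episode variance sum is really $O(H^2)$ in expectation, not $O(H^3)$. The sampled-value telescope, Lemma~\ref{lem:bellman-square-telescope} applied with $v_h=V^\star_{h,\theta_k}$, gives $\E\sum_h\Var\le 2H\,\E\sum_h|r_h+e^P_{k,h}|\le 2H^2+2H\E\sum_h|e^P_{k,h}|$ per episode. Hence $\sum_{k,h}\E\Var\le 2H^2K+2HW_P$, and the width bound becomes $W_P\le Cd\Lambda_K^4(H\sqrt{H^2K+HW_P})+CdH^2\Lambda_K^4+CH$. Here I must recheck the $H$ factor carefully: the normalization $x=\phi/H$ means $\sigma^2$ carries a $1/H^2$, so $H\sqrt{\sum\sigma^2}=\sqrt{\sum\Var}\le\sqrt{2H^2K+2HW_P}$, which gives the leading term $d\sqrt{H^2K}\Lambda_K^4$. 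Applying Young's inequality to $Cd\Lambda_K^4\sqrt{HW_P}$ produces the extra $Cd^2H\Lambda_K^8$ term. Adding the $CH$ failure term and $\BReg_\rho(K)\le W_P$ from \eqref{lin:eq:regret-by-error} yields \eqref{eq:hom-linear-bound}, and the minimum with $HK$ is trivial.
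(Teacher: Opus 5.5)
Your proposal is correct and follows essentially the paper's proof. You use one pooled stream with budget $HK$ and confidence sets frozen at episode boundaries. A determinant-doubling argument charges clipped width one to the $H$ observations of each of the at most $Cd\Lambda_K^2$ stale episodes, and the proof closes with Jensen, Proposition~\ref{prop:linear-cross-model-variance}, and Young's inequality. The differences are minor: you select scales with the running matrix, so coverage and the count bound carry over verbatim and doubling converts current-matrix uncertainty to the frozen matrix, whereas the paper selects scales and weights with the frozen pre-episode matrix, so the width step is verbatim and doubling is spent on the count. Also, your first closing display has a spurious factor of $H$; no tightening of the variance sum is needed, only the normalization $H\sqrt{\sum\sigma^2}=\sqrt{\sum\Var}$ that you eventually apply.
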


\begin{proof}
\textbf{One confidence set for the shared parameter.}
Use a single copy of the construction in
Sections~\ref{lin:subsec:construction}--\ref{lin:subsec:confidence-tests},
with budget $HK$ and failure probability $\delta=(4HK)^{-2}$.
Index the observations chronologically by $t=(k-1)H+h$ and set
\[
x_t=\frac{\phi_{h,V^\star_{h+1,\theta_k}}(S_{k,h},A_{k,h})}{H},
\qquad
y_t=\frac{V^\star_{h+1,\theta_k}(S_{k,h+1})}{H},
\qquad
\sigma_t^2=\frac{\operatorname{Var}_{P_h^\theta}
(V^\star_{h+1,\theta_k})}{H^2}.
\]
As in \eqref{lin:eq:sampled-value-data}, under fixed $\theta$ the direction
$x_t$ is determined before $y_t$, and $y_t$ has conditional mean
$\langle\theta_1,x_t\rangle$ and conditional variance $\sigma_t^2$.
Use the same $\lambda_\ell$ and
$\ell_{\max}=\lceil\log_2\max\{4HK(1+B),256\sqrt{\Lambda_K}\}\rceil$.
The larger observation budget is absorbed by the universal constant in
$\Lambda_K$, and $\ell_{\max}\le C\Lambda_K$.

For scale selection and weights in Algorithm~\ref{alg:analysis-scale-assignment},
use the pre-episode matrix $\Sigma_{(k-1)H+1,\ell}$ throughout episode $k$.
Accumulate the weighted observations chronologically in $\Sigma_{t,\ell}$,
$Q_{t,\ell}$, and $g_{t,\ell}$, and define $\mathcal C_k$ by
\eqref{lin:eq:confidence-set} at time $(k-1)H+1$, with the common parameter
in place of $\theta_h$. Thus $\mathcal C_k$ uses only $\mathcal F_k$.
Because $\Sigma_{t,\ell}\succeq\Sigma_{(k-1)H+1,\ell}$ within an episode,
every observation assigned to $\ell$ satisfies
\[
\|w_tx_t\|_{\Sigma_{t,\ell}^{-1}}
\le\|w_tx_t\|_{\Sigma_{(k-1)H+1,\ell}^{-1}}=2^{-\ell}.
\]
Consequently, the proofs of Lemmas~\ref{lin:lem:coverage}
and~\ref{lin:lem:candidate-comparison} apply at every episode boundary,
with $HK$ observations and failure allocation $\delta/(3\ell_{\max})$.
In particular, $\theta_1\in\mathcal C_k$ for all $k$ with probability
at least $1-\delta$.

\textbf{The cost of keeping the matrices fixed within an episode.}
If a scale's determinant grows by at most a factor of two during episode
$k$, positive definiteness gives
\[
\det\Sigma_{kH+1,\ell}\le2\det\Sigma_{(k-1)H+1,\ell}
\quad\Longrightarrow\quad
\Sigma_{kH+1,\ell}\preceq2\Sigma_{(k-1)H+1,\ell}.
\]
Indeed, all eigenvalues of
$\Sigma_{(k-1)H+1,\ell}^{-1/2}\Sigma_{kH+1,\ell}
\Sigma_{(k-1)H+1,\ell}^{-1/2}$ are at least one and their product is at
most two, so each is at most two. Each observation assigned to $\ell$
in such an episode therefore has chronological squared inverse norm
at least $2^{-2\ell-1}$. The determinant argument in
Proposition~\ref{lin:prop:width-aggregation} then gives
\[
\#\{t=(k-1)H+h\in I_\ell:
\det\Sigma_{kH+1,\ell}\le2\det\Sigma_{(k-1)H+1,\ell}\}\,2^{-2\ell}
\le C\log\frac{\det\Sigma_{HK+1,\ell}}{\det\Sigma_{1,\ell}}
\le Cd\Lambda_K.
\]
This is the same count bound used in that proposition. Its coarse- and
fine-scale arguments apply to these observations: minimal scale selection
still bounds the current direction in the preceding scale's frozen matrix,
and the candidate comparison holds in that same matrix. Observations
assigned to no scale contribute at most
$CHK\,2^{-2\ell_{\max}}(\sqrt{\Lambda_K HK}+\Lambda_K)\le C\Lambda_K$.

For each scale, the total log-determinant increase is at most $Cd\Lambda_K$,
so at most $Cd\Lambda_K$ episodes can more than double its determinant.
Across all scales there are at most $Cd\Lambda_K^2$ such episodes.
Charging their at most $H$ observations the clipped width bound of one
adds at most $CdH\Lambda_K^2$. We have therefore proved, on the same
coverage event,
\begin{equation}
\sum_{t=1}^{HK}
\left[1\wedge\sup_{\vartheta,\vartheta'\in\mathcal C_{\lceil t/H\rceil}}
|\langle\vartheta-\vartheta',x_t\rangle|\right]
\le Cd\Lambda_K^4\left(\sqrt{\sum_{t=1}^{HK}\sigma_t^2}+H\right).
\label{eq:hom-linear-width}
\end{equation}

\textbf{Applying posterior calibration and variance closure.}
Write $W_P=\sum_{k,h}\mathbb E|e^P_{k,h}|$.
The sets $\mathcal C_k$ use only pre-episode data, so the posterior
calibration in Appendix~\ref{app:linear-mixture} applies unchanged.
Since $|e^P_{k,h}|\le H$, the failure contribution is at most
$2H^2K\delta\le C$. Multiplying \eqref{eq:hom-linear-width} by $H$ and
applying Jensen's inequality gives
\[
W_P\le Cd\Lambda_K^4
\sqrt{\sum_{k,h}\mathbb E\operatorname{Var}_{P_h^{\theta^\star}}
(V^\star_{h+1,\theta_k})}
+CdH^2\Lambda_K^4+CH.
\]
Proposition~\ref{prop:linear-cross-model-variance} bounds the variance
sum by $2H^2K+2HW_P$. Thus
\[
W_P\le CdH\sqrt K\,\Lambda_K^4
+Cd\Lambda_K^4\sqrt{HW_P}+CdH^2\Lambda_K^4+CH.
\]
Young's inequality absorbs the square-root term into
$\tfrac12W_P+Cd^2H\Lambda_K^8$. Together with
$\operatorname{BReg}_{\rho}(K)\le W_P$, this proves
\eqref{eq:hom-linear-bound}.
\end{proof}

For $B>1$ and $K\ge\max\{3d^2,(d-1)/(192(B-1))\}$,
the homogeneous construction of \citep{zhougu2022} has regret
$\Omega(d\sqrt K)$ when total episode reward is at most one. Its only positive
reward is $1/H$; scaling it by $H$ gives $\Omega(d\sqrt{H^2K})$ under the
present per-stage $[0,1]$ normalization. Thus the leading term in
Corollary~\ref{thm:homogeneous-linear-mixture} matches this lower bound up to
logarithmic factors.

\Needspace{32\baselineskip}
\section{Model-Independent Initial-State Distributions}
\label{app:initial-state}

The main text and the preceding proofs use a fixed, known initial state
$s_1$ in every episode. Consider instead an unknown fixed distribution $\mu$
on $\mathcal S$: each $S_{k,1}\sim\mu$ is drawn independently of the model,
the preceding history, and the current PSRL randomization, and is observed
before the first action. Planning still produces a policy optimal at every
state, so knowledge of $\mu$ is unnecessary.

In the regret decomposition, replace $s_1$ by the observed $S_{k,1}$ and set
$\Delta_k=V^\star_{1,M^\star}(S_{k,1})-V^{\pi_k}_{1,M^\star}(S_{k,1})$,
so Bayesian regret is $\mathbb E\sum_k\Delta_k$.
Conditioning additionally on this state leaves both model laws unchanged:
\[
\mathbb P(M^\star\in\cdot\mid\mathcal F_k,S_{k,1})
=\mathbb P(M_k\in\cdot\mid\mathcal F_k,S_{k,1})
=\rho_k(\cdot).
\]
Thus Lemma~\ref{lem:posterior-simulation-identities} applies with $S_{k,1}$
in place of $s_1$: posterior sampling cancels the optimal-value difference,
and the simulation identity yields the same local errors.

In Lemma~\ref{lem:bellman-square-telescope}, the initial term becomes
$-v_1(S_{k,1})^2$. After averaging over the initial state, it is
\[
-\mathbb E\!\left[v_1(S_{k,1})^2
\mid\mathcal F_k,M^\star,M_k\right]\le0,
\]
so it can still be dropped. The variance bounds in
Propositions~\ref{prop:three-variance-bounds}
and~\ref{prop:linear-cross-model-variance} therefore remain valid after
averaging over $S_{k,1}$. The tabular confidence tests use the same row
observations, and the visit-count bounds depend only on the numbers of rows
and visits. In the linear-mixture proof, each sampled value direction is
still determined before its next-state observation, so the concentration
argument also applies unchanged. Pooling observations across stages in the
homogeneous cases requires no common initial state either.

Taking expectations over each initial state and summing over episodes
therefore yields the same Bayesian regret guarantees as in
Theorems~\ref{thm:arbitrary-correlated-prior} and~\ref{thm:linear-mixture},
and the homogeneous specializations in Appendix~\ref{app:homogeneous-specializations},
without an additional term for estimating $\mu$. The guarantees hold
uniformly over all such model-independent distributions, known or unknown.
If the initial-state law depends on $M^\star$, the observed $S_{k,1}$
must instead be incorporated into the posterior before drawing $M_k$.

\end{document}